%% file: main.tex
\documentclass{article}

\usepackage{microtype}
\usepackage{graphicx}
\usepackage{booktabs} 
\usepackage{amsfonts}       
\usepackage{nicefrac}       
\usepackage{xcolor}         
\usepackage{bbm}
\usepackage{natbib}
\usepackage{array}
\usepackage{tabularx}
\usepackage{makecell}
\usepackage{pifont}
\usepackage{wrapfig}
\usepackage{enumitem}
\usepackage{tcolorbox}
\newcolumntype{P}[1]{>{\centering\arraybackslash}p{#1}}
\newcolumntype{M}[1]{>{\centering\arraybackslash}m{#1}}
\input{math_commands}

\usepackage{hyperref}

\newcommand{\myparagraph}[1]{\noindent\textbf{#1.}}

\usepackage[preprint]{icml2026}

\usepackage{amsmath}
\usepackage{amssymb}
\usepackage{mathtools}
\usepackage{amsthm}

\usepackage[capitalize,noabbrev]{cleveref}

\theoremstyle{plain}

\theoremstyle{remark}

\usepackage[textsize=tiny]{todonotes}

\def\mytitle{Why Can Diffusion Models Learn to Denoise? A Convergence Analysis with Nonlinear Transformers}
\def\mytitle{Transformer-Based Diffusion Models Provably Converge to Optimal Denoiser}
\def\mytitle{Transformers Learn the Optimal DDPM Denoiser for Multi-Token GMMs}

\icmltitlerunning{\mytitle}

\begin{document}

\twocolumn[
  \icmltitle{\mytitle}



  \icmlsetsymbol{equal}{*}

  \begin{icmlauthorlist}
    \icmlauthor{Hongkang Li}{u1}
    \icmlauthor{Hancheng Min}{u2}
    \icmlauthor{Rene Vidal}{u1}
  \end{icmlauthorlist}

  \icmlaffiliation{u1}{Department of Electrical and Systems Engineering, University of Pennsylvania, Philadelphia, USA}
  \icmlaffiliation{u2}{Institute of Natural Sciences \& School of Mathematical Sciences, Shanghai Jiao Tong University, Shanghai, China}

  \icmlcorrespondingauthor{Hongkang Li}{lihk@seas.upenn.edu}

  \icmlkeywords{Machine Learning, ICML}

  \vskip 0.3in
]



\printAffiliationsAndNotice{}  

\begin{abstract}

Transformer-based diffusion models have demonstrated remarkable performance at generating high-quality samples. However, our theoretical understanding of the reasons for this success remains limited. For instance, existing models are typically trained by minimizing a denoising objective, which is equivalent to fitting the score function of the training data. However, we do not know why transformer-based models can match the score function for denoising, or why gradient-based methods converge to the optimal denoising model despite the non-convex loss landscape. To the best of our knowledge, this paper provides the first convergence analysis for training transformer-based diffusion models. More specifically, we consider the population Denoising Diffusion Probabilistic Model (DDPM) objective for denoising data that follow a \textit{multi-token Gaussian mixture} distribution. We theoretically quantify the required number of tokens per data point and training iterations for the global convergence towards the Bayes optimal risk of the denoising objective, thereby achieving a desired score matching error. A deeper investigation reveals that the self-attention module of the trained transformer implements a \emph{mean denoising} mechanism that enables the trained model to approximate the oracle Minimum Mean Squared Error (MMSE) estimator of the injected noise in the diffusion steps. Numerical experiments validate these findings.

\end{abstract}






\input{introduction}
\input{formulation}
\input{theory}

\input{experiment}

\vspace{-2mm}
\section{Conclusion}
\vspace{-2mm}
This paper provides a global convergence and score-learning analysis for a one-layer, single-head nonlinear Transformer in diffusion model training. This work also offers a theoretical understanding of how Transformer models learn the oracle MMSE estimator of the training problem through the mean denoising mechanism. Future directions include analyzing and designing different sampling strategies, optimization algorithms, and diffusion model frameworks.

\newpage

\section*{Impact Statement}

This paper aims to explore the convergence analysis and the denoising mechanism of diffusion model parameterized by Transformers. The primary focus is on the mathematical analysis of convergence and training dynamics. To the best of our knowledge, no potential societal consequences are associated with our work.

\section*{The Use of Large Language Models}
We used large-language models (ChatGPT) to help polish the writing of this paper.

\nocite{langley00}


\input{main.bbl}

\input{appendix}


\end{document}

%% file: math_commands.tex
\usepackage{amsmath,amsfonts,bm}
\usepackage{amsthm}
\usepackage{amssymb}
\usepackage{mathrsfs}

\usepackage{graphicx}
\usepackage{subfigure}

\newtheorem{lemma}{Lemma}
\newtheorem{thm}{Theorem}
\newtheorem{rmk}{Remark}
\newtheorem{prpst}{Proposition}

\newtheorem{definition}{Definition}
\newtheorem{corollary}{Corollary}

\def\bfa{{\boldsymbol a}}

\def\bff{{\boldsymbol f}}

\def\bfk{{\boldsymbol k}}

\def\bfm{{\boldsymbol m}}

\def\bfs{{\boldsymbol s}}

\def\bfv{{\boldsymbol v}}

\def\bfx{{\boldsymbol x}}

\def\bfz{{\boldsymbol z}}
\def\bfmu{{\boldsymbol \mu}}

\def\bfA{{\boldsymbol A}}

\def\bfE{{\boldsymbol E}}

\def\bfI{{\boldsymbol I}}

\def\bfM{{\boldsymbol M}}

\def\bfW{{\boldsymbol W}}
\def\bfX{{\boldsymbol X}}
\def\bfY{{\boldsymbol Y}}
\def\bfZ{{\boldsymbol Z}}

\newcommand{\be}{\begin{equation}}
\newcommand{\ee}{\end{equation}}

\def\eqref#1{equation~\ref{#1}}

\def\1{\bm{1}}

\DeclareMathAlphabet{\mathsfit}{\encodingdefault}{\sfdefault}{m}{sl}
\SetMathAlphabet{\mathsfit}{bold}{\encodingdefault}{\sfdefault}{bx}{n}

\DeclareMathOperator*{\argmin}{arg\,min}

%% file: introduction.tex
\vspace{-2mm}
\section{Introduction}\label{sec: intro}
\vspace{-2mm}

Diffusion models \citep{SWMG15, HJA20, SE19, SSKK21} have achieved state-of-the-art performance across a wide range of generative AI tasks, including the creation of images \citep{RBLE22, PX23}, videos \citep{BCTH24, XFCD24}, audio \citep{KPHZ21, ZZZZ23}, text \citep{SASG24, ASGY25}, scientific data \citep{HSVW22, LBBS24, PSAA25}, and multi-modal content \citep{RMYH23, CCLL25}. A classical diffusion model consists of two stages: a forward process and a backward process. The forward process gradually transforms data into noise by adding white Gaussian noise, while the backward process learns a score-based model to progressively remove the injected noise and generate samples from noisy inputs. Specifically, score-based models are typically formulated as neural networks trained to approximate the score function, namely the gradient of the logarithm of the probability density of the data at each time step. 

Among the various score-based diffusion models, the denoising diffusion probabilistic model (DDPM) \citep{HJA20} proposes a canonical training objective as the foundation of diffusion model training, which is to predict the added noise in the forward process. Early score-based generative models \citep{HJA20, SSKK21} adopt convolution-based models, such as U-Net, as the backbone architectures for learning the score function. More recently, motivated by their superior scalability and stronger performance in visual generation, transformer-based architectures, such as DiT \citep{PX23}, have served as effective alternatives. However, despite the remarkable empirical success of transformer-based diffusion models for score learning, 
the theoretical reasons for this success are much less explored.
These include fundamental questions such as:

\smallskip
\begin{tcolorbox}[before skip=1mm, after skip=0.0cm, boxsep=0.0cm, middle=0.0cm, top=0.1cm, bottom=0.1cm]
\textit{\textbf{(Q1)} Why can a nonlinear transformer match the score function and denoise?}
\end{tcolorbox}

\begin{tcolorbox}[before skip=1mm, after skip=1mm, boxsep=0.0cm, middle=0.0cm, top=0.1cm, bottom=0.1cm]
\textit{\textbf{(Q2)} Why gradient descent converges to the optimal nonlinear transformer under DDPM training?}
\end{tcolorbox}

\smallskip


\begin{table*}[htbp]
    \centering
    \begin{tabular}{M{2.8cm} M{2.8cm} M{2.8cm} M{3.1cm} M{3.3cm}  }
    \toprule
     Theoretical Works   & Network Model & Loss Landscape & Convergence Analysis   & Denoising Mechanism\\
    \midrule
        \citet{WZZC24}  & U-Net  & 
        Global optimum & \ding{55} &PCA\\
     \citet{HRX24} & Fully-connected & N/A   & \ding{51} & N/A\\
    \citet{WHT24} & Fully-connected & N/A & \ding{51} & N/A\\
    \citet{HHCZ25} & Convolutional & Stationary point  & \ding{55} & Balanced FL \\
    \hline
        Ours & Transformer & Global optimum &\ding{51}
         & Mean denoising \\
         \bottomrule
    \end{tabular}
    \caption{Comparison with existing works about training analysis and the optimality of denoising of diffusion models.}
    \label{tab:comparison}
    \vspace{-4mm}
\end{table*}

Existing theoretical work addresses these questions only in a limited and separate manner. One line of work \citep{WZZC24, LDQ24} studies the optimal DDPM denoiser under specific data distributions via loss landscape analysis, but it does not establish convergence guarantees for training algorithms. Another line of work studies the training dynamics of neural networks for score matching \citep{HRX24, WHT24, HHCZ25, WP25, BUBM25}, but only for simple architectures or unrealistic regimes. As far as we know, none of these studies investigates the convergence of training algorithms or the learned denoising mechanism for Transformer-based diffusion models. Please see Section~\ref{ssec: related work} for a more detailed comparison between our work and several representative papers, and Table \ref{tab:comparison} for a summary.

\begin{figure}
       \vspace{-2mm}
        \centering
        \includegraphics[width=0.35\textwidth]{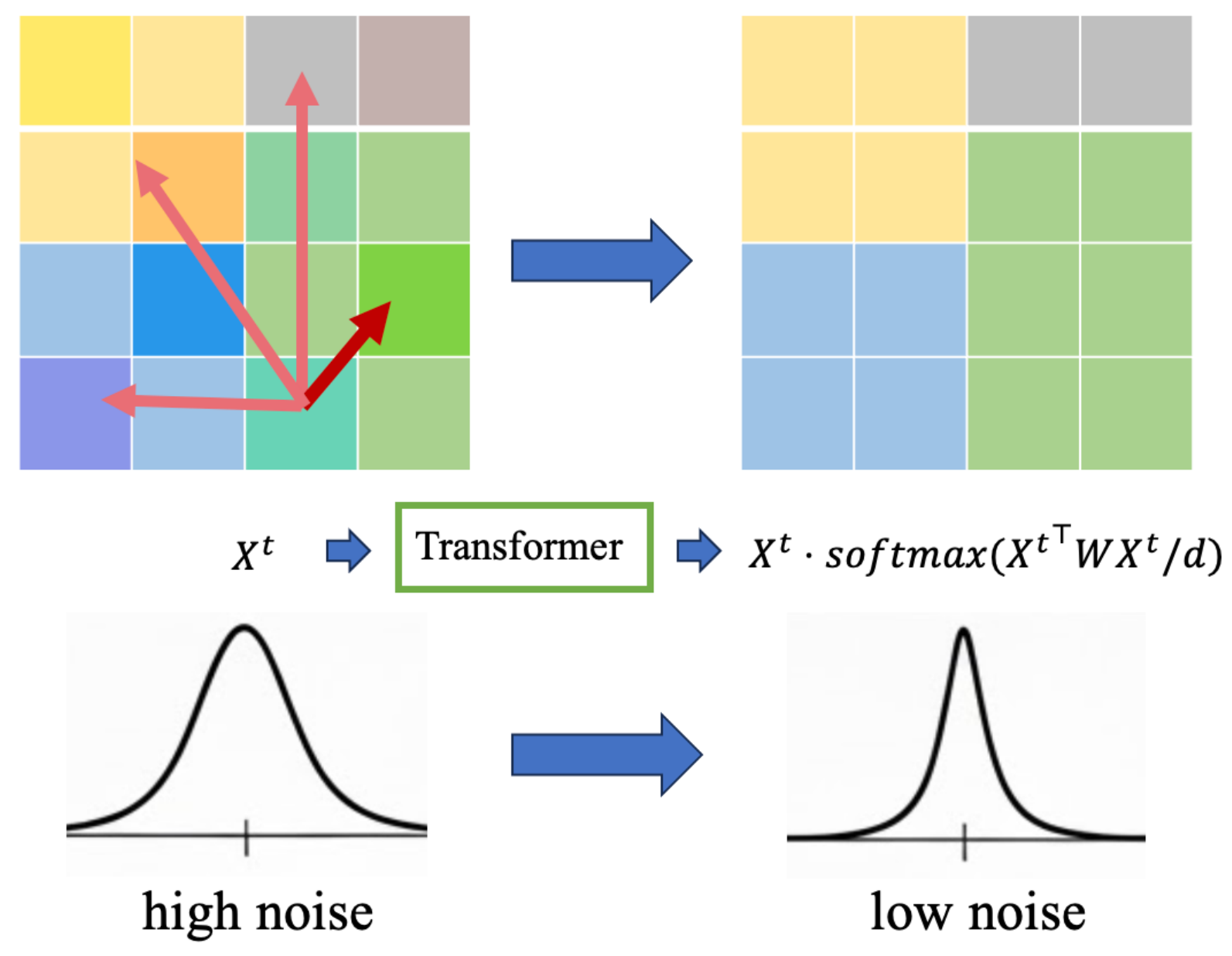}
        \vspace{-0mm}
        \caption{\footnotesize{Mean denoising mechanism by the trained Transformer. Attention reduces the noise added to the data. Dark (light) red arrows: attention weights between the query and key that share the same (different) pattern. } }\label{figure: example}

\vspace{-4mm}
\end{figure}

\vspace{-2mm}
\subsection{Main Contributions}
\vspace{-2mm}

To the best of our knowledge, this paper is the first to analyze the training dynamics of nonlinear transformers trained by gradient descent on the DDPM loss, providing theoretical convergence and score matching error guarantees. Motivated by empirical observations that data are composed of multiple patterns, we consider a Multi-Token Gaussian Mixture (MTGM) data distribution, where each data point consists of multiple tokens that are sampled from a given set of Gaussian mixture components. 
Following prior work \citep{AL23, LWLC23, LWLC24, JHZS24, LWLC24_cot, HHCZ25}, we characterize how the transformer parameters learn the mean patterns of different Gaussian components through gradient updates. 
Our main contributions include:
\begin{enumerate}[leftmargin=*, itemsep=0cm]
    \item \textbf{A quantitative analysis of how to optimize the DDPM loss with transformers towards convergence.} We theoretically analyze the training dynamics on a one-layer single-head transformer with softmax attention and quantify the number of training iterations and tokens per data required to optimize the DDPM loss. Our results characterize how the convergence is affected by the imbalance among the proportions of different Gaussian components in the MTGM distribution, the number of distinct pattern types present in each data, and the time-averaged signal-to-noise ratio of the diffusion noising process.
    
    \item \textbf{Theoretical characterization of how the trained transformer learns the oracle MMSE estimator as the optimal denoiser.} The major technical difficulty in analyzing why neural networks can converge to the optimal denoiser is that the true MMSE estimator for the DDPM loss is intractable to compute. To address this challenge, we define an \textit{oracle MMSE estimator}, which is computed with the class of the Gaussian component of each token in the MTGM data as known. We then prove that the trained transformer can converge to this estimator. Moreover, we show that the oracle denoising risk corresponding to the oracle MMSE estimator is close to the true Bayes risk of the training problem if the number of tokens in each data is large enough. This implies that the training process globally converges to the optimal denoiser, and consequently, the trained model can be used to construct a score network that enables score matching.



\item \textbf{Theoretical understanding of how the self-attention structure performs denoising through a mean denoising mechanism.} The key challenge in characterizing how a transformer approximates the oracle MMSE estimator lies in explaining how the model parameters learn the mean patterns of an MTGM data distribution. We are the first to propose a \textit{mean denoising} mechanism of self-attention in diffusion model training, i.e., attention aggregates queries and keys that share the same pattern, thereby producing a minimum-variance unbiased estimator (MVUE) of the mean pattern and effectively removing the noise injected by the forward diffusion process. This mechanism also enables the trained model to denoise data with the same Gaussian components but shifted mixture proportions, as long as each data point contains a sufficient number of tokens.
\end{enumerate}

\vspace{-2mm}
\subsection{Related Work}\label{ssec: related work}
\vspace{-2mm}

\myparagraph{Theoretical analysis of diffusion models}
Recent work \citep{WZZC24,LDQ24} analyzes the landcape of the DDPM denoiser and shows that the optimal diffusion model essentially performs principal component analysis (PCA) for Gaussian data or low-rank Gaussian mixture data. While these results theoretically characterize the structure of optimal solutions, they do not establish whether such solutions are attainable through gradient-based training of neural networks. \citet{WP25, BUBM25, HRX24, WHT24} analyze the training dynamics of denoising score matching by considering linear models or by adopting theoretical assumptions that reduce nonlinear models to linear ones, such as random feature models~\cite{RR07} or the neural tangent kernel regime~\cite{JGH18}. \citet{BJTZ25} study optimizing the score matching loss over infinitely wide shallow networks. Only \citet{HHCZ25} prove that, under the DDPM loss, a diffusion model parameterized by a two-layer convolutional neural network learns data features and noise to the same order, a phenomenon referred to as the balanced feature learning (FL) mechanism. However, their analysis 
does not provide a convergence guarantee for training dynamics and does not analyze transformers. Other works study convergence guarantees of DDPM samplers in terms of total variation or KL divergence \citep{LLZB23, CCLL23, ADR24, HWC24, LY25, LHC25} or the generalization of diffusion models \citep{BUBM25, LZLC25, SFWM25, PRNZ25}. These works do not involve the model training analysis and therefore differ from the focus of our paper.

\myparagraph{Optimization and generalization of transformers} Many works study the optimization and generalization of transformers for supervised learning tasks. \citet{JSL22, LWLC23, LWMS24, HWCL24, JHZS24} study the convergence of Transformer with a generalization guarantee for binary classification or linear regression tasks via feature learning analysis. 
\citet{TLTO23,TLZO23} show the gradient updates of weights or prompts converge to a max-margin SVM solution. None of these works involves convergence analysis of denoising tasks.   

%% file: formulation.tex
\vspace{-2mm}
\section{Problem Formulation}
\vspace{-2mm}

\myparagraph{Data distribution}
Each data point $\bfX=[\bfx_1, \cdots, \bfx_P]\in\mathbb{R}^{d\times P}$ contains $P$ \emph{tokens} $\bfx_1, \cdots, \bfx_P$ in $\mathbb{R}^d$, each one sampled i.i.d.\ from 
a \textit{Multi-Token Gaussian Mixture (MTGM)} distribution $\mathcal{D}(\tilde{\boldsymbol{\pi}}, K,\{\bfmu_i\}_{i=1}^M, \rho^2)$, where $\{\bfmu_i\in\mathbb{R}^d\}_{i=1}^M$ is a set of $M$ orthogonal \emph{patterns}, i.e., $\bfmu_i^\top\bfmu_j=\sqrt{d}\delta_{ij}$; $\tilde{\boldsymbol{\pi}}\in\Delta^{M-1}$ is a vector in the probability simplex such that $\boldsymbol{1}^\top\tilde{\boldsymbol{\pi}}=1$ and $\min_{m\in [M]}\tilde{\pi}_m>0$; 
$K\leq M$ is the number of distinct patterns in $\bfX$; and $\rho^2=\Theta(1)$ is the variance of Gaussian components.
Specifically,
\begin{definition}\label{def: data dist}
    $\bfX=(\bfx_1,\cdots,\bfx_P)\sim\mathcal{D}(\tilde{\boldsymbol{\pi}}, K,\{\bfmu_i\}_{i=1}^M, \rho)$ is sampled according to the following procedure:
    \begin{enumerate}[leftmargin=*, itemsep=0cm]
        \item Sample $\bfZ\!\sim\! \mathrm{Unif}(\{\bfz\!\in\!\{0,1\}^{M}: 0\!<\!\|\bfz\|_0\!=\!K\!<\!M\})$;
        \item Let $\boldsymbol{\pi}(\bfZ)\!=\!(\pi_1,\cdots,\pi_M)$ with $\pi_i(\bfZ)=[\bfZ]_i \tilde{\pi}_i/\bfZ^\top\tilde{\boldsymbol{\pi}} $;
        \item For each $p\!\in\![P]$, sample $Y_p|\bfZ\!\sim\!\mathrm{Categorical}_M(\boldsymbol{\pi}(\bfZ))$, and then sample $\bfx_p|Y_p\!\sim\! \mathcal{N}(\bfmu_{Y_p}, \rho^2\bfI)$.
    \end{enumerate}  
\end{definition}

\begin{rmk}
Definition \ref{def: data dist} is an extension of the Gaussian mixture distribution. When $K=1$ and $P=1$, each data point contains one token sampled from a Gaussian Mixture Model with mutually orthogonal cluster centers, which is frequently used in theoretical studies of training neural networks for classification tasks \citep{MV25, SZYS25}. When $K>1$, each data point consists of multiple patterns, 
a common, albeit simplified, assumption in computer vision to model an image composed of $K$ out of $M$ possible objects.
\end{rmk}

\myparagraph{Learning model} Let $\bff(\Psi;\bfX,t)\in\mathbb{R}^{d\times P}$ be the output of the learning model, where $\Psi$ is the set of model parameters, $\bfX$ is the input to the model, and $t$ is the diffusion time step. We assume $\bff$ is a one-layer single-head transformer with parameters $\Psi=\{\bfW,\{v_t\}_{t=1}^T\}\in \mathbb{R}^{d\times d}\times \mathbb{R}^T$, i.e.,
\vspace{-1mm}
\begin{equation}
\begin{aligned}
    \bff(\Psi;\bfX, t)\!=&(\bff(\Psi;\bfX, t)_1,\cdots, \bff(\Psi;\bfX, t)_P),\\
    \bff(\Psi;\bfX, t)_p\!=&v_t(\bfX-\!\bfX\mathrm{softmax}(\frac{{\bfX}^\top\bfW\bfx_p}{d})),\label{eqn: model}
\end{aligned}
\end{equation}
where $\mathrm{softmax}([a_1,\!\cdots\!,a_P]^\top\!)\!=\![e^{a_1},\!\cdots\!,e^{a_P}\!]^\top\!/(\sum_je^{a_j}\!)$. This softmax function applies to a matrix column-wise, i.e., $\mathrm{softmax}([\bfa_1,\cdots,\bfa_P])=[\mathrm{softmax}(\bfa_i)]_{i\in [P]}$.

\myparagraph{Training objective and algorithm} We use the training objective of the Denoising Diffusion Probabilistic Model (DDPM)~\citep{HJA20}. For a given time step $t\in[T]$ and an input data point $\bfX^0$, we sample 
\vspace{-1mm}
\begin{equation}
    \bfX^t=\sqrt{\bar{\alpha}_t}\bfX^0+\sqrt{1-\bar{\alpha}_t}\bfE,\label{eqn: Xt}
\end{equation}
where $\bfE=(\boldsymbol{\epsilon}_1,\cdots,\boldsymbol{\epsilon}_P)\in\mathbb{R}^{d\times P}$ is the additive white Gaussian noise with $\boldsymbol{\epsilon}_p\overset{i.i.d.}{\sim} \mathcal{N}(0,\bfI_d)$, and $\{\bar{\alpha}_t\}_{t=1}^T$ is the pre-determined noise scheduling coefficients. 
Given a sample data point $\bfX^0\sim\mathcal{D}(\tilde{\boldsymbol{\pi}}, K, \{\bfmu_i\}_{i=1}^M, \rho)$ and time step $t\sim \mathrm{Unif}([T])$, $\bfX^t$ is obtained from (\ref{eqn: Xt}). We then minimize the following per-dimension DDPM loss in expectation as introduced in \citet{HJA20, BUBM25}:
\vspace{-1mm}
\begin{equation}
\begin{aligned}
 L(\Psi)= \sum_{t=1}^T\mathbb{E}_{\bfX^0, \bfE}[\|\bff(\Psi;\bfX^t, t)-\bfE\|_F^2/(2dPT)].\label{eqn: objective}
\end{aligned}
\end{equation}
The above learning objective (\ref{eqn: objective}) is minimized via gradient descent with a learning rate $\eta>0$. That is, at the training step $s=0$, we set $\bfW^{(0)}=0$ and randomly initialize each $v_t^{(0)}$, $t\in [T]$ from $\mathcal{N}(0, 1/d)$. Then, for each training iteration $s$, the parameters are updated as follows:
\vspace{-1mm}
\begin{equation}\label{eqn:gd}
\begin{aligned}
    \bfW^{(s+1)}&=\bfW^{(s)}-\eta\nabla_\bfW L(\Psi),\\
    v_t^{(s+1)}&=v_t^{(s)}-\eta\nabla_{v_t} L(\Psi), \forall t\in [T].
    \end{aligned}
\end{equation}

\myparagraph{Score matching} Let $p_t(\bfX^t)$ be the probability density function of $\bfX^t$ at time step $t\in [T]$. The \emph{score function} is defined as $\bfs(\bfX^t,t)=\nabla_{\bfX^t}\log p_t(\bfX^t)$. The goal of score matching is to train a neural network $\bfs_\theta(\bfX^t,t)$ parameterized by $\theta$ that minimizes the score matching error:
\vspace{-1mm}
\begin{equation}
    \!\!\mathcal{E}(\theta)\!=\!\!\sum_{t=1}^T\mathbb{E}_{\bfX^0,\bfE}[\|\bfs_\theta(\bfX^t,t)-\bfs(\bfX^t,t)\|_F^2/(2dPT)].\label{eqn: score matching}
\end{equation}
Note that we use the per-dimension score matching error in (\ref{eqn: score matching}) following \citep{BUBM25}.

%% file: theory.tex
\vspace{-2mm}
\section{Main Theoretical Results}
\vspace{-2mm}
Let us first introduce the Bayes risk for the DDPM objective.
\begin{definition}\label{def: bayes}
The \textbf{Bayes denoising risk} for the MTGM data model in Definition \ref{def: data dist} and $t\sim \mathrm{Unif}([T])$ is defined as:
    \vspace{-1mm}
    \begin{equation}\label{eqn: bayes_risk}
        R_{\textrm{Bayes}}:=\mathbb{E}_{\bfX^0,\bfE,t}[\|\bfE-\mathbb{E}[\bfE|\bfX^t]\|_F^2/(2dP)]\,.
    \end{equation}
\end{definition}

$R_{\textrm{Bayes}}$ is the optimal risk achieved if one can minimize the DDPM loss over all possible denoising models. Our main results will show that under certain conditions, the gradient descent in (\ref{eqn: iteration}) can learn a transformer model $\bff(\Psi;\bfX,t)$ that attains a risk of $R_{\textrm{Bayes}}+O(\epsilon)$ for any arbitrarily small $\epsilon$.

To state our results, we need some additional notation. For $\bfX^0\sim \mathcal{D}(\tilde{\boldsymbol{\pi}}, K, \{\bfmu_i\}_{i=1}^M, \rho)$ with $\bfY$ the latent variable in Definition \ref{def: data dist}, let the \textit{minimal average pattern ratio} for pattern $u\in [M]$ and the \textit{pattern imbalance ratio} be defined~as:
\vspace{-4mm}
\begin{align}
\!\!
    \nu_{\min}^{\tilde{\boldsymbol{\pi}}}(K)&=\min_{u\in[M]}\nu_u^{\tilde{\boldsymbol{\pi}}}(K):=\mathbb{E}_{\bfX^0}\Big[\sum_{p=1}^P \mathbbm{1}[Y_p=u]/P\Big]     \!\!
\\
    \delta(\tilde{\boldsymbol{\pi}})&=\min_{u\in [M]}\tilde{\pi}_u/\max_{u\in [M]}\tilde{\pi}_u.
\end{align}
\vspace{-4mm}

The former is the minimum average probability of selecting a pattern, and the latter measures the degree of imbalance between the prior probabilities of different patterns. Since $\bar{\alpha}_t/(1-\bar{\alpha}_t)$ is computed as the signal-noise ratio at time step $t$ by \citep{L22}, we denote $\textrm{SNR}=\mathbb{E}_{t} [\bar{\alpha}_t/(1-\bar{\alpha}_t)]$ as the \textit{time-averaged SNR} over the noise schedule. 

With the above definitions and notations, we now state the following theoretical result about the convergence of diffusion model training.

\begin{thm}[Convergence]\label{thm: training}
For any $\epsilon\in (0, \delta(\tilde{\boldsymbol{\pi}})^{\Theta(1)})$, if 
(i) the dimension $d\geq \Omega(\epsilon^{-1}\log (\epsilon^{-1}\nu_{\min}^{\tilde{\boldsymbol{\pi}}}(K)^{-1}))$, 
(ii) the number of diffusion steps $T\geq\Omega(\log d)$, 
(iii) the data distribution satisfies $P\geq \Omega(\nu_{\min}^{\tilde{\boldsymbol{\pi}}}(K)^{-1} (\rho^2+1)\epsilon^{-1}\log d)$, 
(iv) the algorithm in (\ref{eqn:gd}) is run with a step size $\eta\leq O((\max\{\rho,1\}+\epsilon)^{-1})$ and (v) with a number of iterations 
\vspace{-1mm}
\begin{equation}
\begin{aligned}
    S=&\;\Omega\big( (\epsilon^{-1}+\nu^{\tilde{\boldsymbol{\pi}}}_{\min}(K)^{-3})\eta^{-1}\nu^{\tilde{\boldsymbol{\pi}}}_{\min}(K)^{-1}\\
    &\qquad\cdot \textrm{SNR}^{-3}+\log(\rho^2+1)\epsilon^{-1}\big),
    \end{aligned}\label{eqn: iteration}
\end{equation}
then with high probability over the Gaussian random initialization, the learned model with parameters $\Psi^{(S)}$ satisfies
\vspace{-1mm}
\begin{equation}
\begin{aligned}
    L(\Psi^{(S)})
    \leq R_{\textrm{Bayes}}+O(\epsilon).\label{eqn: thm1 loss}
    \end{aligned}
\end{equation}

\end{thm}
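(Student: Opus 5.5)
The plan has three parts. First, identify a comparator (the oracle MMSE estimator) and show its risk is close to $R_{\textrm{Bayes}}$. Second, show that gradient descent on $(\bfW,\{v_t\})$ drives the transformer toward a configuration whose output approximates this comparator. Third, bound the remaining optimization error in $v_t$ by $O(\epsilon)$.

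\textbf{Step 1 (oracle comparator).} Define the oracle MMSE estimator as the conditional expectation of $\bfE$ given $\bfX^t$ together with the latent labels $\bfY$. Conditional on $\bfY$, the tokens sharing a pattern $u$ are i.i.d.\ $\mathcal{N}(\sqrt{\bar\alpha_t}\bfmu_u,(\bar\alpha_t\rho^2+1-\bar\alpha_t)\bfI)$. The oracle estimate of $\boldsymbol\epsilon_p$ is therefore linear in $\bfx^t_p-\hat{\bfmu}_{Y_p}$, where $\hat{\bfmu}_u$ is the empirical mean of the tokens with label $u$ (the MVUE), with an explicit scalar coefficient $c_t$. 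Its risk exceeds the risk of the estimator that knows the true mean only by a term of order $(\rho^2+1)/(P\nu^{\tilde{\boldsymbol\pi}}_{\min}(K))$. Since conditioning on more information can only lower the risk, the oracle risk is at most $R_{\textrm{Bayes}}$. For the reverse direction I would argue that the labels are recoverable from $\bfX^t$ with high probability: the patterns are orthogonal with $\|\bfmu\|^2=\sqrt d$, and $d\ge\Omega(\epsilon^{-1}\log(\cdot))$, so that after a union bound over the $P$ tokens the two risks differ by at most $O(\epsilon)$ on time steps with nontrivial SNR. Low-SNR steps are handled by noting that there both estimators are close to $\boldsymbol\epsilon_p$ scaled appropriately. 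Condition (iii) on $P$ makes the finite-$P$ term $O(\epsilon)$.

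\textbf{Step 2 (attention dynamics).} The key structural claim is that $\bfW^{(s)}$ remains close to $\sum_u \lambda_u^{(s)}\bfmu_u\bfmu_u^\top/\sqrt d$ plus a small error, and that the coefficients $\lambda_u^{(s)}$ grow. I would prove this by computing $\nabla_\bfW L$ at $\bfW=0$ (uniform attention) and showing that its projection onto $\bfmu_u\bfmu_u^\top$ is negative and proportional to $v_t\bar\alpha_t/(1-\bar\alpha_t)\cdot\nu_u$. Off-diagonal and noise directions are $O(d^{-1/2})$ by concentration. The sign of $v_t$ matters here, so I would first show that the $v_t$ updates quickly push $v_t$ to a positive value of order $c_t$. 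This is a scalar least-squares problem in $v_t$, which converges at rate governed by $\eta$ and whose attractor lies near $\sqrt{1-\bar\alpha_t}$ times a constant.

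Next, track the attention weight on same-pattern keys by induction over $s$. As $\lambda_u$ grows, the softmax mass on same-pattern keys tends to $1-e^{-\Theta(\lambda)}$ relative to the other keys. The gradient magnitude decays in proportion to the remaining mass on wrong keys, which yields the $\textrm{SNR}^{-3}$ and $\nu_{\min}^{-1}$ factors in the iteration count (\ref{eqn: iteration}). Reaching wrong-key mass $\le\epsilon$ needs $\lambda\gtrsim\log(1/\epsilon)$. The imbalance across patterns is controlled by $\delta(\tilde{\boldsymbol\pi})$, which is why $\epsilon<\delta^{\Theta(1)}$ is required. $T\ge\Omega(\log d)$ ensures that enough time steps have SNR of constant order for the $\mathbb{E}_t$ averages to behave.

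\textbf{Step 3 (final loss).} Once the attention is concentrated, $\bfX\,\mathrm{softmax}(\cdot)_p\approx\hat{\bfmu}_{Y_p}$ up to $O(\epsilon)$ in per-dimension squared error. For each fixed $t$ the loss is then strongly convex and quadratic in $v_t$, so an additional $\log(\rho^2+1)\epsilon^{-1}$ iterations bring $v_t$ to within $O(\epsilon)$ of $c_t$. Combining this with Step 1 gives $L(\Psi^{(S)})\le R_{\textrm{Bayes}}+O(\epsilon)$. The random initialization of $v_t$ enters only through the high-probability bound $|v_t^{(0)}|\lesssim\sqrt{\log T/d}$.

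\textbf{Main obstacle.} I expect the hardest part to be Step 2: keeping the coupled $(\bfW,v_t)$ dynamics in the invariant "diagonal-in-patterns" regime while the softmax is nonlinear and $v_t$ is still moving. My first approach would be a two-phase argument in which $v_t$ stabilizes before $\lambda$ becomes large. I would combine it with an inductive hypothesis that bounds the error components of $\bfW$ by $\tilde O(\eta s/\sqrt d)$, verified through concentration of the softmax denominators over the $P$ tokens.
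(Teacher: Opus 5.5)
Your architecture is the paper's: an oracle comparator with $R_{\textrm{oracle}}\le R_{\textrm{Bayes}}$, same-pattern query--key logits that grow only logarithmically, and a final strongly convex scalar problem for each $v_t$. The logarithmic growth comes from the factor $(1-\sum_l\zeta^u_{l,p,t})^2$ in the drift, and it is the source of the $\epsilon^{-1}\eta^{-1}$ in $S$.

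\textbf{The gap: your error budget for $\bfW$.} The step that would fail as you set it up is the control of the non-signal part of $\bfW$. You propose an inductive hypothesis bounding the error components by $\tilde O(\eta s/\sqrt d)$. That is an additive budget linear in $s$, whereas the signal you must protect is only $\Theta(\log(1+c\,\eta s))$. At $s=S\gtrsim \eta^{-1}\epsilon^{-1}$ (times the $\nu_{\min}$ and $\textrm{SNR}$ factors), your budget is of order $\epsilon^{-1}d^{-1/2}$. Under condition (i), $d\approx\epsilon^{-1}\log(\cdot)$, this grows like $\epsilon^{-1/2}$ and swamps the target logit $\tfrac12\log\epsilon^{-1}$. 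Closing the induction this way would need $d$ polynomially larger, roughly $\epsilon^{-2}$.

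The missing idea is a \emph{relative} bound that holds at every iteration. The gradient along cross-pattern query--key pairs is at most $O(\log d/\sqrt d)$ times the gradient along same-pattern pairs; this is the second inequality of Lemma~\ref{lemma: W training}. The off-signal updates then shrink together with the signal updates, so you need no separate additive error budget. Accumulating over iterations gives $|\bfx_k^{t\top}\bfW^{(s)}\bfx_i^t|\lesssim \frac{\log d}{\sqrt d}\,\bfx_j^{t\top}\bfW^{(s)}\bfx_i^t$ (Proposition~\ref{prpst: W}), regardless of how large $S$ is.

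Relatedly, the same-pattern drift is quadratic in $v_t$, not linear as you wrote. The factor $(v_t^{(s)})^2\bar\alpha_t^3$ is what enters $\gamma(u)$ in Lemma~\ref{lemma: stage 1} and produces $\textrm{SNR}^{-3}$. So your $v_t$-first phase must keep $v_t$ bounded below, of order $c_t$, throughout the attention-growth phase, and this should be part of the induction.

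\textbf{A smaller issue in Step 1.} The patterns $\bfmu_u$ are fixed parameters, so
$\mathbb{E}[\boldsymbol{\epsilon}_p\mid\bfX^t,\bfY]=\frac{\sqrt{1-\bar\alpha_t}}{1-\bar\alpha_t+\rho^2\bar\alpha_t}(\bfx^t_p-\sqrt{\bar\alpha_t}\bfmu_{Y_p})$ uses the true mean, not the empirical mean $\hat{\bfmu}_{Y_p}$. The empirical-mean plug-in is what the trained attention implements, but it is not a conditional expectation. Hence ``more conditioning lowers the risk'' applies only to the true-mean oracle.

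Your chain survives once this is corrected. The plug-in has risk within $O((\rho^2+1)/(P\nu_{\min}))$ of $R_{\textrm{oracle}}$, as you say. Together with $R_{\textrm{oracle}}\le R_{\textrm{Bayes}}$, this already gives the one-sided bound in the theorem. The reverse inequality you attempt via label recovery is therefore unnecessary. The paper states that direction separately as Proposition~\ref{proposition: risk_diff}, but the upper bound only uses the trivial one.
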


Theorem \ref{thm: training} shows that, if each data contains a \emph{sufficiently large number of tokens} and \emph{the number of training iterations is large enough}, then the one-layer single-head transformer diffusion model trained by gradient descent (\ref{eqn:gd}) will achieve a DDPM loss that deviates from the Bayes denoising risk $R_{\textrm{Bayes}}$ by only $O(\epsilon)$. 

We shall elaborate those conditions. The requirement of the dimension in condition (i) is to ensure that the difference between attention weights of query-key pairs with the same pattern is small, as will be described in Section \ref{ssec: mechanism}. 

\myparagraph{Number of tokens} The required number of tokens per data point in condition (iii) scales linearly in $\nu^{\tilde{\boldsymbol{\pi}}}_{\min}(K)^{-1}$ and $\rho^2$. Therefore, a less uniform distribution over the patterns and a higher noise level in a single data point increases the complexity of the denoising task, requiring more tokens in the data to learn an effective denoising model. The reason why more tokens per data benefits denoising will be explained in more detail in Section \ref{sec: analysis}.

\myparagraph{Number of iterations} The required number of GD iterations in condition (v) scales polynomially in $\nu^{\tilde{\boldsymbol{\pi}}}_{\min}(K)^{-1}$ and $\textrm{SNR}^{-1}$. This means that a more uniform distribution over the patterns and a larger time-averaged SNR in the forward process help the model parameters learn all the patterns so that the self-attention identifies tokens sampled from the same Gaussian cluster, which is a core mechanism for denoising the MTGM data (discussed in Section \ref{ssec: mechanism}).   
Moreover, we can derive the following simplification for (\ref{eqn: iteration}) regarding different choices of $K\in [M]$.
\begin{corollary}\label{cor: K}
    (a) When $K=1$, the number of iterations reaches its minimum, which leads to $S=\Omega(\epsilon^{-1}\eta^{-1} M\cdot \textrm{SNR}^{-3})$. (b) When $K=M$, the required number of iterations reaches its maximum, which results in $S=\Omega((\epsilon^{-1}+ \min_{u\in [M]}\{\tilde{\pi}_u\}^{-3})\eta^{-1} \min_{u\in [M]}\{\tilde{\pi}_u\}^{-1}\textrm{SNR}^{-3})$. 
\end{corollary}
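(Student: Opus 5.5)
The plan is to reduce both parts to evaluating, or bounding, the single quantity $\nu^{\tilde{\boldsymbol{\pi}}}_{\min}(K)$ and then substituting into (\ref{eqn: iteration}). For every other parameter fixed, the right-hand side of (\ref{eqn: iteration}) is nonincreasing in $\nu^{\tilde{\boldsymbol{\pi}}}_{\min}(K)$, since it involves only $\nu^{-1}$ and $\nu^{-3}$ terms. So the required $S$ is smallest for the $K$ maximizing $\nu^{\tilde{\boldsymbol{\pi}}}_{\min}(K)$ and largest for the $K$ minimizing it.

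First compute $\nu_u^{\tilde{\boldsymbol{\pi}}}(K)$ explicitly. Since $Y_p\mid\bfZ\sim\mathrm{Categorical}(\boldsymbol{\pi}(\bfZ))$, we have $\nu_u^{\tilde{\boldsymbol{\pi}}}(K)=\mathbb{E}_\bfZ[\pi_u(\bfZ)]=\mathbb{E}_\bfZ\big[[\bfZ]_u\tilde\pi_u/\bfZ^\top\tilde{\boldsymbol{\pi}}\big]$.

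\textbf{Upper bound and part (a).} Since $\sum_u\pi_u(\bfZ)=1$, we get $\sum_u\nu_u^{\tilde{\boldsymbol{\pi}}}(K)=1$, hence $\nu^{\tilde{\boldsymbol{\pi}}}_{\min}(K)\le 1/M$ for every $K$. For $K=1$, $\bfZ$ is a uniformly random one-hot vector, so $\pi_u(\bfZ)=[\bfZ]_u$ and $\nu_u=1/M$ for all $u$. Thus $K=1$ attains the maximum $\nu_{\min}=1/M$ and gives the minimal $S$. Substituting $\nu=1/M$ yields $S=\Omega((\epsilon^{-1}+M^3)\eta^{-1}M\,\textrm{SNR}^{-3}+\log(\rho^2+1)\epsilon^{-1})$. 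In the small-$\epsilon$ regime ($\epsilon\lesssim M^{-3}$, consistent with $\epsilon<\delta(\tilde{\boldsymbol{\pi}})^{\Theta(1)}$, where for a uniform-type prior the relevant scale is $1/M$), the $\epsilon^{-1}$ term dominates. Since $\eta\le O(1)$ and $\textrm{SNR}$ is bounded in the relevant schedules, the logarithmic term is absorbed. This gives $\Omega(\epsilon^{-1}\eta^{-1}M\,\textrm{SNR}^{-3})$.

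\textbf{Lower bound and part (b).} For $K=M$, $\bfZ=\boldsymbol 1$ (reading the case as the limiting/boundary case of Definition \ref{def: data dist}). Then $\boldsymbol{\pi}=\tilde{\boldsymbol{\pi}}$ and $\nu_{\min}=\min_u\tilde\pi_u$, and substitution gives the stated bound. It remains to show $\nu^{\tilde{\boldsymbol{\pi}}}_{\min}(K)\ge\min_u\tilde\pi_u$ for all $K$, which I expect to be the only non-routine step.

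\begin{itemize}
\item Condition on $[\bfZ]_u=1$, which has probability $K/M$. By Jensen applied to $x\mapsto 1/x$,
$$\nu_u\ge\frac{K}{M}\cdot\frac{\tilde\pi_u}{\mathbb{E}[\bfZ^\top\tilde{\boldsymbol{\pi}}\mid[\bfZ]_u=1]}.$$
\item The conditional expectation is $\tilde\pi_u+\frac{K-1}{M-1}(1-\tilde\pi_u)$, so $\nu_u\ge\frac{K}{M}g(\tilde\pi_u)$ with $g(x)=x/\big(x+c(1-x)\big)$ and $c=\frac{K-1}{M-1}$.
\item Since $g$ is increasing on $[0,1]$, $\nu_u\ge\frac KMg(\pi_{\min})$ with $\pi_{\min}=\min_u\tilde\pi_u$.
\item The inequality $\frac KMg(x)\ge x$ rearranges to $K(M-1)\ge M\big(x(M-1)+(K-1)(1-x)\big)$, which simplifies to $(M-K)(1-Mx)\ge0$. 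This holds because $\pi_{\min}\le 1/M$.
\end{itemize}

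Hence $\nu^{\tilde{\boldsymbol{\pi}}}_{\min}(K)\ge\nu^{\tilde{\boldsymbol{\pi}}}_{\min}(M)$, so $K=M$ yields the maximal required $S$. This completes the plan.
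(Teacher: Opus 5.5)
Your proposal is correct and follows the same route as the paper. You reduce everything to $\nu^{\tilde{\boldsymbol{\pi}}}_{\min}(K)$ and use $\sum_u\nu_u^{\tilde{\boldsymbol{\pi}}}(K)=1$ for (a). For (b) you apply Jensen to $x\mapsto 1/x$ conditional on $[\bfZ]_u=1$, which gives the same bound $\tilde\pi_u/\bigl(\tilde\pi_u+\frac{K-1}{M-1}(1-\tilde\pi_u)\bigr)$ as the paper.

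You differ from the paper in the closing step of (b), and there your version is the sound one. The paper shows this bound is decreasing in $K$, so $\mathbb{E}[\pi_u\mid[\bfZ]_u=1]\ge\tilde\pi_u$. It then drops the factor $\Pr([\bfZ]_u=1)=K/M$ when passing to $\nu_u^{\tilde{\boldsymbol{\pi}}}(K)$. Per coordinate, $\nu_u^{\tilde{\boldsymbol{\pi}}}(K)\ge\tilde\pi_u$ is actually false whenever $\tilde\pi_u>1/M$ (take $K=1$, where $\nu_u=1/M$). So one must restrict to the minimizing index and use $\min_u\tilde\pi_u\le 1/M$, which is exactly what your reduction to $(M-K)(1-Mx)\ge 0$ does. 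Similarly, in (a) you correctly obtain $(\epsilon^{-1}+M^3)\eta^{-1}M\,\textrm{SNR}^{-3}$. The paper writes $(\epsilon^{-1}+1)$ because it plugs the conditional quantity $1$ into the $\nu^{-3}$ slot.

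One caveat concerns part (a). Dropping the $M^3$ term needs $\epsilon\lesssim M^{-3}$, and this does \emph{not} follow from $\epsilon<\delta(\tilde{\boldsymbol{\pi}})^{\Theta(1)}$. Here $\delta$ is the ratio $\min_u\tilde\pi_u/\max_u\tilde\pi_u$, which equals $1$ for the uniform prior, so your parenthetical justification is wrong. State $\epsilon\lesssim M^{-3}$ (or $M=O(1)$) as an explicit extra assumption under which the simplified form in (a) holds. Likewise, absorbing the $\log(\rho^2+1)\epsilon^{-1}$ term needs $\eta^{-1}M\,\textrm{SNR}^{-3}\gtrsim 1$, i.e.\ an upper bound on $\textrm{SNR}$ that the theorem does not supply. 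The paper simply ignores this term.
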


Notice that $\min_{u\in [M]}\{\tilde{\pi}_u\}<1/M$, which leads to $M^{-1}\min_{u\in [M]}\{\tilde{\pi}_u\}^{-1}>1$, then Corollary \ref{cor: K} indicates that the required number of iterations for $K=M$ is at least $\Omega(M^{-1}\min_{u\in [M]}\{\tilde{\pi}_u\}^{-1})$ times larger than that for $K=1$. Therefore, the simplest pattern structure in the data, i.e., the case of $K=1$, leads to the fastest convergence. This is also aligned with the previous intuition that more diverse patterns in each data increase the complexity of the denoising task, making the training more challenging. 



\myparagraph{Constructing score model from learned denoiser} Next, we show how the trained transformer model can achieve 
a desired score matching error. 

\begin{thm}[Score Matching]\label{thm: score matching}
    Given the trained model in Theorem \ref{thm: training} with parameters $\Psi^{(S)}$ that satisfies (\ref{eqn: thm1 loss}) for some $\epsilon\in (0,\delta(\tilde{\boldsymbol{\pi}})^{\Theta(1)})$, we can construct 
    \vspace{-1mm}
    \begin{equation}
        s_\theta(\bfX^t, t)=s_{\Psi^{(S)}}(\bfX^t, t)=-\frac{\bff(\Psi^{(S)};\bfX^t,t)}{\sqrt{1-\bar{\alpha}_t}},\label{eqn: score network}
    \end{equation}
    \vspace{-1mm}
    with $\theta=\Psi^{(S)}$, such that
    \begin{equation}
        \mathcal{E}(\theta)=\mathcal{E}(\Psi^{(S)})\leq \epsilon\cdot (\textrm{SNR}+1).
    \end{equation}
\end{thm}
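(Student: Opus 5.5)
The argument rests on Tweedie's formula, which links the score to the conditional noise mean, together with the Pythagorean decomposition of the DDPM loss around the Bayes-optimal predictor. Since $\bfX^t=\sqrt{\bar{\alpha}_t}\bfX^0+\sqrt{1-\bar{\alpha}_t}\bfE$ with Gaussian $\bfE$ independent of $\bfX^0$, the score of the marginal is
\begin{equation*}
\bfs(\bfX^t,t)=\nabla\log p_t(\bfX^t)=-\frac{\mathbb{E}[\bfE\mid\bfX^t]}{\sqrt{1-\bar{\alpha}_t}} .
\end{equation*}
To verify this, write $p_t(\bfX^t)=\int p(\bfX^0)\,\phi(\bfX^t;\sqrt{\bar{\alpha}_t}\bfX^0,(1-\bar{\alpha}_t)\bfI)\,d\bfX^0$, differentiate under the integral, and use $\bfX^t-\sqrt{\bar{\alpha}_t}\bfX^0=\sqrt{1-\bar{\alpha}_t}\bfE$. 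With the construction (\ref{eqn: score network}), this gives for each $t$
\begin{equation*}
\|\bfs_\theta-\bfs\|_F^2=\frac{\|\bff(\Psi^{(S)};\bfX^t,t)-\mathbb{E}[\bfE\mid\bfX^t]\|_F^2}{1-\bar{\alpha}_t}.
\end{equation*}

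Second, for each fixed $t$, $\mathbb{E}[\bfE\mid\bfX^t]$ is the $L^2$ projection of $\bfE$ onto functions of $\bfX^t$. The cross term therefore vanishes, and
\begin{equation*}
\mathbb{E}\|\bff-\bfE\|_F^2=\mathbb{E}\|\bfE-\mathbb{E}[\bfE\mid\bfX^t]\|_F^2+\mathbb{E}\|\bff-\mathbb{E}[\bfE\mid\bfX^t]\|_F^2 .
\end{equation*}
Averaging over $t$ with the $1/(2dPT)$ normalization, $L(\Psi^{(S)})-R_{\textrm{Bayes}}=\frac{1}{T}\sum_t D_t$, where $D_t:=\mathbb{E}\|\bff-\mathbb{E}[\bfE\mid\bfX^t]\|_F^2/(2dP)\ge 0$. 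Theorem~\ref{thm: training} then gives $\frac1T\sum_t D_t\le O(\epsilon)$. Absorbing constants into $\epsilon$, as the statement implicitly does, we take this to be $\le\epsilon$.

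Third, combine the two facts. We have $\mathcal{E}(\Psi^{(S)})=\frac1T\sum_t D_t/(1-\bar{\alpha}_t)$ and $\frac{1}{1-\bar{\alpha}_t}=1+\frac{\bar{\alpha}_t}{1-\bar{\alpha}_t}$, so
\begin{equation*}
\mathcal{E}(\Psi^{(S)})=\frac1T\sum_t D_t+\frac1T\sum_t D_t\,\frac{\bar{\alpha}_t}{1-\bar{\alpha}_t}\le \epsilon+\frac1T\sum_t D_t\,\frac{\bar{\alpha}_t}{1-\bar{\alpha}_t}.
\end{equation*}
The first term is already bounded by $\epsilon$. The second term is the main obstacle: the weights $\bar{\alpha}_t/(1-\bar{\alpha}_t)$ are not uniform in $t$, so an average bound on $D_t$ alone does not give $\epsilon\cdot\textrm{SNR}$. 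There are two ways to close it.

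The preferred route uses the per-timestep guarantee that the proof of Theorem~\ref{thm: training} actually provides. The convergence analysis runs separately on each $v_t$ and on the shared $\bfW$, showing that $\bff(\Psi^{(S)};\cdot,t)$ is within $O(\epsilon)$ of the oracle MMSE estimator at every $t$. Together with the closeness of the oracle risk to the Bayes risk for large $P$ (condition (iii)), this should give $\max_t D_t\le O(\epsilon)$ uniformly in $t$. Then $\frac1T\sum_t D_t\,\frac{\bar{\alpha}_t}{1-\bar{\alpha}_t}\le \epsilon\,\mathbb{E}_t[\bar{\alpha}_t/(1-\bar{\alpha}_t)]=\epsilon\cdot\textrm{SNR}$, which yields $\mathcal{E}\le\epsilon(\textrm{SNR}+1)$.

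If only the averaged bound (\ref{eqn: thm1 loss}) is available, a fallback is to bound the weighted sum by $\max_t\frac{\bar{\alpha}_t}{1-\bar{\alpha}_t}\cdot\epsilon$. That is weaker, so the uniform-in-$t$ bound is what I would establish from the training analysis.
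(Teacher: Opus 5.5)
Your proposal is correct and is essentially the paper's proof. The paper also starts from the Fisher/Tweedie identity and bounds the per-timestep error by passing through the oracle estimator $\mathbb{E}[\bfE\mid\bfX^t,\bfY]$: its $2C_1+2C_2$ split controls $C_2$ with the uniform-in-$t$ bounds on $v_t^{(S)}$ and on the attention output, and $C_1$ with the nested-$\sigma$-algebra Pythagorean identity plus Proposition~\ref{proposition: risk_diff}. It then averages the weights $1/(1-\bar{\alpha}_t)$ to get $1+\textrm{SNR}$. Your observation that (\ref{eqn: thm1 loss}) alone would not suffice is exactly why the paper relies on those per-$t$ bounds.

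Your hedged step closes in one line, even without Proposition~\ref{proposition: risk_diff}. Since $\bff(\Psi^{(S)};\bfX^t,t)$ is $\sigma(\bfX^t)$-measurable and $\sigma(\bfX^t)\subseteq\sigma(\bfX^t,\bfY)$, orthogonality gives $2dP\,D_t\le\mathbb{E}\|\bff(\Psi^{(S)};\bfX^t,t)-\mathbb{E}[\bfE\mid\bfX^t,\bfY]\|_F^2$. Proposition~\ref{prpst: mmse} bounds this by $O(dP\epsilon)$ for every $t$.
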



Theorem \ref{thm: score matching} shows that a model trained under the conditions in Theorem \ref{thm: training} can be directly used to construct a score network that can achieve a score matching error of $O(\epsilon)$. Note that the construction in (\ref{eqn: score network}) is typically used to fit the conditional score function $\nabla_{\bfX^t} \log p_t(\bfX^t|\bfX^0)$. Theorem \ref{thm: score matching} demonstrates that, under our problem setting, (\ref{eqn: score network}) can also match $\nabla_{\bfX^t} \log p_t(\bfX^t)$ with an error that is close to $0$.

\vspace{-2mm}
\section{In-Depth Analysis of Convergence and Denoising With the Trained Transformer}\label{sec: analysis}
\vspace{-2mm}

This section investigates why the trained Transformer can reduce the DDPM loss in (\ref{eqn: objective}) to the Bayes denoising risk and enable score learning, as stated in Theorem \ref{thm: training} and Theorem \ref{thm: score matching}. In Section \ref{ssec: mmse}, we show that
the Transformer $\bff(\Psi)$ learns the ``oracle MMSE estimator" through GD training, thereby achieving the oracle denoising risk up to an excessive $O(\epsilon)$ risk.
In Section \ref{ssec: mechanism}, we show that the learned self-attention structure exhibits a mean denoising mechanism to enable denoising and score matching on the MTGM data even with different Gaussian mixture proportions from training. 


\vspace{-2mm}
\subsection{What Does the Trained Transformer Converge to?}\label{ssec: mmse}
\vspace{-2mm}

The optimal model that achieves the Bayes denoising risk in (\ref{def: bayes}) is computed as $\mathbb{E}[\bfE|\bfX^t]$, which is the Minimum Mean Squared Error (MMSE) estimator of the added Gaussian noise given a noisy input at time step $t$. However, computing $\mathbb{E}[\bfE|\bfX^t]$ for the MTGM data is challenging due to the highly complicated probability density function of $p_t(\bfX^t)$. In our problem setting, we define another estimator of the added noise with the data mean known, which is easier to obtain and does not need the knowledge of $p_t(\bfX^t)$. 
The specific definition is as follows. 

\begin{definition}[Oracle MMSE estimator and denoising risk]\label{def: oracle}
    With Definition \ref{def: data dist}, let $\bfM_{\bfY}=(\bfmu_{Y_1}, \cdots, \bfmu_{Y_P})\in\mathbb{R}^{d\times P}$ be the matrix of mean patterns given $\bfY$. Then, for a noisy data $\bfX^t$ obtained from some $\bfX^0$ with latent variable $\bfY$, we define the \textbf{oracle MMSE estimator} of $\bfE$ given $\bfM_{\bfY}$ as 
    \vspace{-1mm}
    \begin{equation}
        \mathbb{E}[\bfE|\bfX^t, \bfM_{\bfY}]=\frac{\sqrt{1-\bar{\alpha}_t}}{1-\bar{\alpha}_t+\rho^2\bar{\alpha}_t}(\bfX^t-\sqrt{\alpha}_t\bfM_{\bfY}).\label{eqn: mmse}
    \end{equation}
    We define the \textbf{oracle denoising risk}, denoted by $R_{\textrm{oracle}}$, as the DDPM loss with the oracle MMSE estimator as the denoising model for $t\sim \mathrm{Unif}([T])$, which is computed as
    \vspace{-1mm}
    \begin{equation}
    \begin{aligned}\label{eqn: oracle_risk}
        \!\!
        R_{\textrm{oracle}}\!:=\!\mathbb{E}_{\bfX^0,\bfE,t}[\|\bfE-\mathbb{E}[\bfE|\bfX^t,\bfM_{\bfY}]\|_F^2/(2dP)].
        \end{aligned}
    \end{equation}
\end{definition}
Note that $R_{\textrm{oracle}}$ is a lower bound of $R_{\textrm{Bayes}}$, because providing the prior knowledge of $\bfM_{\bfY}$ gives more information than conditioning on $\bfX^t$ alone.  Under squared loss, more information cannot worsen the optimal estimator.

\textbf{Convergence to the oracle MMSE estimator. } We then show that a one-layer, single-head Transformer trained under conditions in Theorem \ref{thm: training} approximates the oracle MMSE estimator. The following proposition reveals the implicit mechanism learned by the trained model.
\begin{prpst}\label{prpst: mmse}
    Given training conditions (i)-(v) in Theorem \ref{thm: training}, with a high probability over random initialization, the algorithm in (\ref{eqn:gd}) returns a model with parameters $\Psi^{(S)}=\{\bfW^{(S)},\{v_t^{(S)}\}_{t=1}^T\}$ with the following properties:
    \vspace{-2mm}
    \begin{enumerate}[leftmargin=*, itemsep=0cm]
        \item The self-attention module satisfies that with a high probability over the sampling of the clean data $\bfX^0$ and the noise $\bfE$, for any computed noisy data $\bfX^t, t\in[T]$ (together with latent variable $\bfY$), we have
        \vspace{-1mm}
        \begin{equation}
    \begin{aligned}
        & 
        \frac{\|\sqrt{\bar{\alpha}}_t\bfM_{\bfY}-\bfX^t\mathrm{softmax}(\bfX^t{}^\top\bfW^{(S)}\bfX^t/d)\|_F^2}{dP}\\
        &\leq  O((\rho^2+1)\log d/(P\nu_{\min}^{\tilde{\boldsymbol{\pi}}}(K))),\label{eqn: trained W}
        \end{aligned}
    \end{equation}
    \vspace{-4mm}
    \item The output weights satisfy that $\forall t\in[T]$,
    \vspace{-1mm}
    \begin{equation}
        |v_t^{(S)}-\sqrt{1-\bar{\alpha}_t}/(1-\bar{\alpha}_t+\rho^2\bar{\alpha}_t)
        |\leq O(\epsilon).\label{eqn: trained vt}
    \end{equation}
    \end{enumerate}
\end{prpst}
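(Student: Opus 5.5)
We track the gradient-descent dynamics in a low-dimensional reduced form. Since $\bfW^{(0)}=0$ and the pattern directions $\{\bfmu_i\}$ are orthogonal, we will show by induction over iterations $s$ that $\bfW^{(s)}$ stays, up to a negligible remainder, in the span of $\{\bfmu_i\bfmu_i^\top\}_{i=1}^M$. Concretely, we write $\bfW^{(s)}=\sum_{i}\beta_i^{(s)}\bfmu_i\bfmu_i^\top/\sqrt d+\bfW_\perp^{(s)}$ with $\|\bfW_\perp^{(s)}\|$ small. The induction needs a closed-form approximation of $\nabla_\bfW L$. In it, the contribution from the pure-noise component $\bfE$ (and from the Gaussian fluctuation $\rho\,\mathcal N(0,\bfI)$ of $\bfX^0$) concentrates to leading order along the identity/isotropic directions, which do not separate patterns. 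The contribution from the means is aligned with $\bfmu_u\bfmu_u^\top$ and has magnitude proportional to $\nu_u^{\tilde{\boldsymbol\pi}}(K)\,\bar\alpha_t\,|v_t|$ times the current attention gap. Cross terms such as $\bfmu^\top\boldsymbol\epsilon/d$ are $O(\sqrt{\log d/d})$ with high probability, and condition (i) on $d$ makes them negligible.

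For part 1, we first show that every $\beta_u^{(s)}$ grows monotonically. Its increment per step is at least of order $\eta\,\nu_{\min}^{\tilde{\boldsymbol\pi}}(K)\,\mathbb E_t[\bar\alpha_t/(1-\bar\alpha_t)]\,\cdot(\text{current } v_t^2\text{-type factor})$, which is where $\textrm{SNR}$ and $\nu_{\min}$ enter. Meanwhile the $v_t$ are driven toward the positive value $\sqrt{1-\bar\alpha_t}/(1-\bar\alpha_t+\rho^2\bar\alpha_t)$. After the first $\Omega(\nu_{\min}^{-3}\eta^{-1}\nu_{\min}^{-1}\textrm{SNR}^{-3})$ iterations, $\beta_u\ge\Omega(\log d)$ for all $u$. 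At that point the query-key score $\bfx_q^{t\top}\bfW\bfx_k^t/d$ exceeds the score of any key from a different pattern by $\Omega(\log(dP))$ uniformly over tokens with high probability. The attention of token $p$ then puts all but $1/\mathrm{poly}(d)$ of its mass on $\{k:Y_k=Y_p\}$, nearly uniformly; condition (i) keeps the within-class weight ratios within $1\pm O(\epsilon)$. The attention output is therefore approximately the empirical class average $\sqrt{\bar\alpha_t}\bfmu_{Y_p}+\frac{1}{n_{Y_p}}\sum_{k:Y_k=Y_p}(\sqrt{\bar\alpha_t}\rho\boldsymbol\xi_k+\sqrt{1-\bar\alpha_t}\boldsymbol\epsilon_k)$, which is the MVUE of the mean pattern. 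Its per-dimension squared error is $(\rho^2+1)/n_{Y_p}$. A Chernoff bound gives $n_u\ge \Omega(P\nu_u^{\tilde{\boldsymbol\pi}}(K))$ for every $u$ appearing in $\bfY$, with a union bound over tokens costing $\log d$. Averaging over $p$ then yields (\ref{eqn: trained W}), provided condition (iii) makes this count bound hold.

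For part 2, once the attention is frozen near the class-mean estimator, the loss restricted to each $v_t$ is a one-dimensional strongly convex quadratic, $\mathbb E\|v_t\bfA_t-\bfE\|_F^2$ with $\bfA_t=\bfX^t-\bfX^t\mathrm{softmax}(\cdot)\approx\sqrt{1-\bar\alpha_t}\bfE+\sqrt{\bar\alpha_t}\rho\boldsymbol\Xi$ up to the part-1 error. Its minimizer is $\mathbb E\langle\bfA_t,\bfE\rangle/\mathbb E\|\bfA_t\|^2=\sqrt{1-\bar\alpha_t}/(1-\bar\alpha_t+\rho^2\bar\alpha_t)+O(\epsilon)$, using the bound from part 1 together with condition (iii). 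The curvature is $\Theta(1-\bar\alpha_t+\rho^2\bar\alpha_t)$, and the step size condition (iv) makes GD contract. The additional $\log(\rho^2+1)\epsilon^{-1}$ iterations in (\ref{eqn: iteration}) then bring each $v_t$ within $O(\epsilon)$, giving (\ref{eqn: trained vt}). The random initialization only matters for making $|v_t^{(0)}|=O(1/\sqrt d)$ small, so that no sign issue stalls the $\beta$ growth.

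The main obstacle is the coupled early phase. While $\bfW$ is small, the attention is near-uniform, and the signal driving $\beta_u$ is proportional to $v_t$, whose optimum in turn depends on the attention. I would handle this with a two-stage argument. In the first stage, $v_t$ quickly becomes positive and of order $\sqrt{\bar\alpha_t}$-weighted. In the second stage, a lower bound on the $\beta_u$ increments holds uniformly until saturation, and the off-pattern and noise contamination in $\bfW_\perp$ is controlled by $O(\sqrt{\log d/d})$ per step.
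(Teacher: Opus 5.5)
Your architecture is the paper's. The same-pattern query--key directions of $\bfW$ grow while cross-pattern ones are suppressed by a $\log d/\sqrt d$ factor (Lemma~\ref{lemma: W training}). Attention then concentrates almost uniformly on same-class tokens, and Chernoff/Hoeffding on the class counts gives the variance term. Finally each $v_t$ solves a one-dimensional quadratic and converges linearly (Lemma~\ref{lemma: stage 2}).

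\textbf{Where it fails: the growth of $\beta_u$.} You note that the same-pattern part of $-\nabla_\bfW L$ scales with the attention gap, but then claim a per-step increment lower bound that holds uniformly until saturation. That is false. The driving term is proportional to $(v_t)^2(1-\sum_l\zeta^u_{l,p,t})^2\sum_i\zeta^u_{i,p,t}$, because the residual's bias and the softmax derivative each carry one factor of the leakage. Writing $\nu_u=\nu_u^{\tilde{\boldsymbol{\pi}}}(K)$, once $\beta_u\gtrsim\log\frac{1-\nu_u}{\nu_u}$ the leakage is about $\frac{1-\nu_u}{\nu_u}e^{-\beta_u}$. So the increments decay like $e^{-2\beta_u}$, and $\beta_u$ grows only like $\frac12\log s$.

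This is what Lemma~\ref{lemma: stage 1} tracks. Set $\gamma(u)=\eta\frac1T\sum_t(v_t)^2\bar\alpha_t^3\nu_u$.
\begin{itemize}
\item After $I_0(u)=1/(\gamma(u)\nu_u^3)$ iterations (the $\nu_{\min}^{-3}$ part of $S$), the same-pattern score is only about $\log\frac{1-\nu_u}{\nu_u}$. That is a constant fraction of attention on the own class, not $\Omega(\log d)$.
\item The further $\gamma(u)^{-1}\epsilon^{-1}$ iterations, the $\epsilon^{-1}$ part of $S$ that your timeline never uses, lift it to $\frac12\log(\epsilon^{-1}(1-\nu_u)^2/\nu_u^2)$. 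This gives leakage $\lesssim\sqrt\epsilon$, as in (\ref{1-zeta final}).
\end{itemize}
A score gap of $\Omega(\log(dP))$ with $1/\mathrm{poly}(d)$ leakage would take $\mathrm{poly}(d,P)$ iterations, far beyond the budget in (v).

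\textbf{Consequence.} The bias of the attention output is not negligible. It contributes about $(1-\sum\zeta)^2\bar\alpha_t\asymp\epsilon$ per dimension, which is the same order as the variance term $(\rho^2+1)\log d/n_u$ when $P$ sits at the threshold in (iii).

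\textbf{Repair.} Keep the squared-leakage factor in the induction and derive the two-regime bound $Z^u_{j,j'}(s)\ge\frac12\log(1+c\,\gamma(u)s)$, with a regime-dependent constant $c$. This yields a per-dimension error of $O(\epsilon)+O((\rho^2+1)\log d/(P\nu_{\min}^{\tilde{\boldsymbol{\pi}}}(K)))$. That is exactly the paper's (\ref{stage 2 second term}), which bounds the error by $d\epsilon$. The paper then reads (\ref{eqn: trained W}) off this by matching $\epsilon$ to the threshold in (iii). If $P$ is far above that threshold, the stated rate would need more iterations than (v) guarantees. With the part-1 error at $O(\epsilon)$, your part 2 goes through unchanged.
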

Recall that our Transformer model is defined as $$\bff(\Psi; \bfX^t, t)=v_t\big(\bfX^t - \bfX^t\mathrm{softmax}(\bfX^t{}^\top\bfW\bfX^t/d)\big)\,,$$ then (\ref{eqn: trained W}) and (\ref{eqn: trained vt}) in Proposition \ref{prpst: mmse} show that the trained model $\bff(\Psi^{(S)})$ can approximate (\ref{eqn: mmse}), i.e., the oracle MMSE estimator under known mean pattern $\bfM_{\bfY}$ of each data $\bfX^0$ with a diminishing error. Specifically, (\ref{eqn: trained W}) indicates that the approximation error of $\bfM_{\bfY}$ decreases with an increasing total number of tokens $P$. A larger $P$ can reduce the estimation variance introduced by the noise added through the diffusion process. This means that if condition (iii) in Theorem \ref{thm: training} holds, the trained self-attention structure $\bfX^t\text{softmax}(\bfX^t{}^\top\bfW^{(S)}\bfX^t/d)$ can approximate $\sqrt{\bar{\alpha}_t}\bfM_{\bfY}$ with a squared error of $O(\epsilon)$ per dimension (The key mechanism behind such an approximation is discussed in Section \ref{ssec: mechanism}). Then, (\ref{eqn: trained vt}) implies that the trained $v_t^{(S)}$ for any $t\in[T]$ can approximate the linear coefficient term in(\ref{eqn: mmse}) with an $O(\epsilon)$ error. Finally, one can conclude with (\ref{eqn: trained W}), (\ref{eqn: trained vt}) that $\bff(\Psi^{(S)})$ can approximate the oracle MMSE estimator (\ref{eqn: mmse}) in the sense that they achieve similar denoising risks, i.e.,
\begin{corollary}\label{cor: R_oracle}
    The trained model with parameter $\Psi^{(S)}$ in Proposition \ref{prpst: mmse} satisfies that 
    $L(\Psi^{(S)})\leq R_{oracle}+O(\epsilon)$.
\end{corollary}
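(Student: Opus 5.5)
We take Proposition \ref{prpst: mmse} as given and reduce the corollary to a per-time-step error decomposition. For each $t$, abbreviate $c_t=\sqrt{1-\bar\alpha_t}/(1-\bar\alpha_t+\rho^2\bar\alpha_t)$, $\bfA_t=\bfX^t\mathrm{softmax}(\bfX^t{}^\top\bfW^{(S)}\bfX^t/d)$, and let $\hat\bfE_t=c_t(\bfX^t-\sqrt{\bar\alpha_t}\bfM_{\bfY})$ be the oracle estimator in (\ref{eqn: mmse}). Then
\begin{equation*}
\bff(\Psi^{(S)};\bfX^t,t)-\hat\bfE_t=(v_t^{(S)}-c_t)(\bfX^t-\bfA_t)+c_t(\sqrt{\bar\alpha_t}\bfM_{\bfY}-\bfA_t)=: \boldsymbol{\Delta}_t .
\end{equation*}
We write $\|\bff-\bfE\|_F^2=\|\hat\bfE_t-\bfE\|_F^2+2\langle \hat\bfE_t-\bfE,\boldsymbol{\Delta}_t\rangle+\|\boldsymbol{\Delta}_t\|_F^2$. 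Dividing by $2dP$ and averaging over $t,\bfX^0,\bfE$, the first term gives exactly $R_{\textrm{oracle}}$. By Cauchy--Schwarz, the cross term is at most $2\sqrt{R_{\textrm{oracle}}}\,\sqrt{\mathbb{E}\|\boldsymbol{\Delta}_t\|_F^2/(2dP)}$. Since $R_{\textrm{oracle}}\le 1/2$ (the zero predictor already gives $1/2$), it therefore suffices to show $\mathbb{E}\|\boldsymbol{\Delta}_t\|_F^2/(dP)=O(\epsilon^2)$. A weaker bound of $O(\epsilon)$ also suffices, but only if we avoid Cauchy--Schwarz and handle the cross term directly, as described below.

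For the second piece of $\boldsymbol{\Delta}_t$, note $c_t\le \max\{1,\rho^{-2}\}\cdot O(1)=O(1)$. On the high-probability event of Proposition \ref{prpst: mmse}, (\ref{eqn: trained W}) combined with condition (iii) of Theorem \ref{thm: training} gives $\|\sqrt{\bar\alpha_t}\bfM_{\bfY}-\bfA_t\|_F^2/(dP)=O(\epsilon)$. For the first piece, (\ref{eqn: trained vt}) gives $|v_t^{(S)}-c_t|=O(\epsilon)$. We also need $\|\bfX^t-\bfA_t\|_F^2/(dP)=O(\rho^2+1)$. Because $\bfA_t$ is a convex combination of the columns of $\bfX^t$, each column of $\bfA_t$ has norm at most $\max_p\|\bfx^t_p\|$. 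Gaussian concentration then gives $\|\bfx_p^t\|^2/d=O(1+\rho^2)$ uniformly over $p$ with high probability. Together these give $\|\boldsymbol{\Delta}_t\|_F^2/(dP)=O(\epsilon^2(\rho^2+1)+\epsilon)$, which is $O(\epsilon)$.

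The $O(\epsilon)$ bound on $\|\boldsymbol{\Delta}_t\|^2$ means Cauchy--Schwarz on the cross term would only give $O(\sqrt\epsilon)$, so the cross term must be handled more carefully. For the $v_t$ part we use orthogonality. The oracle error $\hat\bfE_t-\bfE$ is uncorrelated with any function of $(\bfX^t,\bfM_{\bfY})$, because $\hat\bfE_t$ is the conditional expectation of $\bfE$ given these variables under the Gaussian model. Both $(v_t^{(S)}-c_t)(\bfX^t-\bfA_t)$ and $c_t(\sqrt{\bar\alpha_t}\bfM_{\bfY}-\bfA_t)$ are measurable functions of $(\bfX^t,\bfM_{\bfY})$. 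Hence $\mathbb{E}\langle\hat\bfE_t-\bfE,\boldsymbol{\Delta}_t\rangle=0$ up to the contribution of the bad event, and
\begin{equation*}
L(\Psi^{(S)})=R_{\textrm{oracle}}+\mathbb{E}\|\boldsymbol{\Delta}_t\|_F^2/(2dPT)+(\text{bad-event terms}).
\end{equation*}
This Pythagorean identity is the main step and removes the need for Cauchy--Schwarz entirely.

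The expected obstacle is the bad event, where (\ref{eqn: trained W}) or the norm concentration fails. Proposition \ref{prpst: mmse} guarantees (\ref{eqn: trained W}) only with high probability over $(\bfX^0,\bfE)$, so we must control the expectation off that event. We plan to use the crude deterministic bound $\|\bff\|_F^2\le 4|v_t^{(S)}|^2\,P\max_p\|\bfx_p^t\|^2$, which holds because attention outputs convex combinations. Applying Cauchy--Schwarz with the $d^{-\Omega(1)}$ failure probability and polynomial fourth moments of the Gaussian norms shows that the bad-event contribution is $O(d^{-c})$. By condition (i), $d^{-c}\le O(\epsilon)$. Summing over $t$ and dividing by $T$ yields $L(\Psi^{(S)})\le R_{\textrm{oracle}}+O(\epsilon)$.
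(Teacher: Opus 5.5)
Your proof is correct, and it takes a different route from the paper's.

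\textbf{Where the paper proves it.} The bound $L(\Psi^{(S)})\le R_{\textrm{oracle}}+O(\epsilon)$ is actually derived inside the proof of Theorem~\ref{thm: training}. The appendix section labeled as this corollary's proof instead argues the Bayes--oracle gap.

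\textbf{The paper's route.} It reuses the expansion from Lemma~\ref{lemma: stage 2} and writes the per-token loss as an explicit quadratic in $v_t$. The coefficients involve the attention error $\|\sqrt{\bar\alpha_t}\bfmu_u-\bfA_{t,p}\|^2/d\le\epsilon$, where $\bfA_{t,p}$ is the $p$-th column of your $\bfA_t$. They also involve two noise--attention correlations, $\beta_1=\boldsymbol{\epsilon}_p'{}^\top(\sqrt{\bar\alpha_t}\bfmu_u-\bfA_{t,p})$ and $\beta_2=\boldsymbol{\epsilon}_p^\top(\sqrt{\bar\alpha_t}\bfmu_u-\bfA_{t,p})$, which are bounded separately. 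A mean-value step in $v_t$ between $v_t^{(S)}$ and $c_t$ then controls the excess.

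\textbf{Your route.} You use that $\hat\bfE_t=\mathbb{E}[\bfE\mid\bfX^t,\bfY]$, while the model output depends on $\bfX^t$ alone. Hence $L(\Psi^{(S)})-R_{\textrm{oracle}}=\frac{1}{T}\sum_t\mathbb{E}\|\boldsymbol{\Delta}_t\|_F^2/(2dP)$ holds as an exact identity. The corollary then reduces to the two squared-error bounds of Proposition~\ref{prpst: mmse}. The paper uses the same projection argument, but only later, for the term $C_1$ in the proof of Theorem~\ref{thm: score matching}.

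\textbf{What each approach buys.} Your identity removes all cross-term bookkeeping. At $v_t=c_t$ the $\beta$-terms combine into exactly $2c_t\langle\hat\bfE_t-\bfE,\sqrt{\bar\alpha_t}\bfM_{\bfY}-\bfA_t\rangle$, which has zero mean. By contrast, Cauchy--Schwarz applied to $\beta_1,\beta_2$ individually gives only $|\mathbb{E}\beta_i|/d\le\sqrt{\epsilon}$, because the attention output depends on $\boldsymbol{\epsilon}_p$ and $\boldsymbol{\epsilon}_p'$ through token $p$ itself. The paper asserts $|\mathbb{E}\beta_i|\le\sqrt{d\epsilon}$ at this point, which Cauchy--Schwarz alone does not deliver. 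This is exactly the $O(\sqrt{\epsilon})$ obstruction you identified. The paper's explicit quadratic, in turn, is what produces the minimizer $v_t^*$ and the linear convergence of $v_t$ in Stage~2. You legitimately take that as a black box through Proposition~\ref{prpst: mmse}.

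\textbf{Two small refinements.} First, the cross term has zero mean over the whole probability space, not merely ``up to the bad event''. This is because $\boldsymbol{\Delta}_t$ is globally $\sigma(\bfX^t,\bfY)$-measurable and square-integrable, so the bad event only enters the bound on $\mathbb{E}\|\boldsymbol{\Delta}_t\|_F^2$. Second, in that bound you need the failure probability to be $d^{-C}$ with $C$ large enough, or H\"older with higher moments in place of Cauchy--Schwarz. That makes its contribution $O(d^{-1})$, which condition (i) turns into $O(\epsilon)$.
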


\textbf{Trained Transformer achieves near-optimal denoising. } 
Recall that in the DDPM loss (\ref{eqn: objective}) used for training, the mean pattern of each data $\bfM_{\bfY}$ is unknown, so in principle the best attainable loss after training should be $R_{\textrm{Bayes}}$ as in (\ref{eqn: thm1 loss}). However, (\ref{eqn: trained W}) has a straightforward but important implication: since the latent mean patterns $\bfM_{\bfY}$ can be reliably estimated from the observed noisy data $\bfX^t$ at every diffusion time steps when the total number of tokens per data is large, the Bayes risk in (\ref{eqn: bayes_risk}) with only the knowledge of $\bfX^t$ should not be much worse than the oracle one in (\ref{eqn: oracle_risk}). Indeed, it can be shown independently that
\begin{prpst}\label{proposition: risk_diff}
    Given condition (iii) in Theorem \ref{thm: training} hold for some $\epsilon\in(0, \delta^{\Theta(1)})$, we can obtain 
    \vspace{-1mm}
    \begin{equation}
        R_{\textrm{Bayes}}-R_{\textrm{oracle}}\leq O(\epsilon).\label{eqn: R diff}
    \end{equation}
\end{prpst}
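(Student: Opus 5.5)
Since $\mathbb{E}[\bfE\mid\bfX^t]$ minimizes the per-time-step squared error among all measurable functions of $\bfX^t$, we have $R_{\textrm{Bayes}}\le \mathbb{E}_{\bfX^0,\bfE,t}[\|\bfE-\hat{\bfE}(\bfX^t,t)\|_F^2/(2dP)]$ for any estimator $\hat{\bfE}$ that uses only $\bfX^t$. The plan is therefore to exhibit one such ``plug-in'' estimator and show that its risk exceeds $R_{\textrm{oracle}}$ by at most $O(\epsilon)$. We take
\[
\hat{\bfE}(\bfX^t,t)=c_t\bigl(\bfX^t-\sqrt{\bar{\alpha}_t}\,\widehat{\bfM}(\bfX^t)\bigr),\qquad c_t=\frac{\sqrt{1-\bar{\alpha}_t}}{1-\bar{\alpha}_t+\rho^2\bar{\alpha}_t},
\]
where $\widehat{\bfM}$ estimates $\bfM_{\bfY}$ from the noisy tokens only. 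Natural choices are the self-attention output $\bfX^t\mathrm{softmax}(\bfX^t{}^\top\bfW^\star\bfX^t/d)/\sqrt{\bar\alpha_t}$ with a fixed, hand-chosen $\bfW^\star=\lambda\sum_u\bfmu_u\bfmu_u^\top$, or a simpler explicit estimator: assign each token to $\arg\max_u\langle\bfx^t_p,\bfmu_u\rangle$ and average the tokens in each assigned cluster. Since the proposition should hold independently of training, we would not invoke Proposition \ref{prpst: mmse} itself, only the same type of concentration estimates.

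Write $\bfE-\hat{\bfE}=(\bfE-\mathbb{E}[\bfE|\bfX^t,\bfM_{\bfY}])+c_t\sqrt{\bar\alpha_t}(\widehat{\bfM}-\bfM_{\bfY})$. The oracle residual $\bfE-\mathbb{E}[\bfE|\bfX^t,\bfM_{\bfY}]$ is orthogonal in $L^2$ to every function of $(\bfX^t,\bfM_{\bfY})$, and $\widehat{\bfM}-\bfM_{\bfY}$ is such a function. The cross term therefore vanishes and
\[
R_{\textrm{Bayes}}-R_{\textrm{oracle}}\le \mathbb{E}_t\Big[c_t^2\bar\alpha_t\,\mathbb{E}\|\widehat{\bfM}-\bfM_{\bfY}\|_F^2/(2dP)\Big].
\]
The factor $c_t^2\bar\alpha_t=\bar\alpha_t(1-\bar\alpha_t)/(1-\bar\alpha_t+\rho^2\bar\alpha_t)^2$ is $O(\max\{1,\rho^{-2}\})=O(1)$, because $\rho^2=\Theta(1)$. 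It therefore suffices to show $\mathbb{E}\|\widehat{\bfM}-\bfM_{\bfY}\|_F^2/(dP)\le O(\epsilon)$ uniformly in $t$.

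To bound this estimation error, split into a good event $G$ and its complement. On $G$, (a) every token is correctly clustered, and (b) every pattern $u$ present in $\bfX^0$ appears at least $\Omega(P\nu_{\min}^{\tilde{\boldsymbol{\pi}}}(K))$ times. Condition (a) uses $\langle\bfmu_u,\bfmu_v\rangle=\sqrt d\,\delta_{uv}$ together with Gaussian tail bounds on $\langle\boldsymbol\epsilon_p,\bfmu_u\rangle$ and a union bound over $P,M$; this is where $d\ge\Omega(\epsilon^{-1}\log(\cdot))$ and the $\log d$ factor enter. Condition (b) follows from a Chernoff bound on the multinomial counts, using $P\ge\Omega(\nu_{\min}^{-1}\epsilon^{-1}\log d)$. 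On $G$, the cluster average equals $\sqrt{\bar\alpha_t}\bfmu_u$ plus Gaussian noise whose per-coordinate variance is $(\rho^2\bar\alpha_t+1-\bar\alpha_t)/n_u$. This gives a per-dimension error of $O((\rho^2+1)/(P\nu_{\min}))$, which is $O(\epsilon)$ under condition (iii). On $G^c$, we bound $\|\widehat{\bfM}-\bfM_{\bfY}\|_F^2/(dP)$ by a polynomial in $d$ and $P$, either via Cauchy–Schwarz or by clipping $\widehat{\bfM}$ to the span of the patterns. Multiplied by $\Pr(G^c)\le d^{-\Omega(1)}$, this contribution is $O(\epsilon)$.

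The main obstacle is step (a), i.e.\ correct assignment, at very noisy time steps where $\bar\alpha_t\to0$. There the signal $\sqrt{\bar\alpha_t}\,d^{1/2}$ can be smaller than the noise projection, and clustering fails. The fix we would try first is to note that such steps are harmless for the risk difference. When $\bar\alpha_t$ is tiny, the weight $c_t^2\bar\alpha_t\approx\bar\alpha_t$ is tiny as well, so we can instead use the crude estimator $\widehat{\bfM}=0$, whose error contribution $c_t^2\bar\alpha_t\cdot O(1)$ is already $O(\epsilon)$ whenever $\bar\alpha_t\le\epsilon$. We then split time steps by the threshold $\bar\alpha_t\lessgtr\epsilon$. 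In the regime $\bar\alpha_t>\epsilon$, the signal strength $\epsilon\sqrt d$ must dominate $\sqrt{\log(dP)}$, which we would check follows from condition (i), possibly after adjusting constants.
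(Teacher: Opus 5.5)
Your orthogonality step follows the same skeleton as the paper's proof. The paper uses the law of total variance to reduce $R_{\textrm{Bayes}}-R_{\textrm{oracle}}$ to the conditional variance of $\bfM_{\bfY}$ given $\bfX^t$, then bounds that by the error of a competitor $\bfA$. The two routes differ in the competitor, and here you are more careful than the paper. Its choice $\bfA_p=|\{i:Y_i=Y_p\}|^{-1}\sum_{i:Y_i=Y_p}\bfx_i^t$ uses the latent labels, so it is not $\sigma(\bfX^t)$-measurable; that is why the paper's argument appears to need only (iii). Bringing in a dimension condition is in fact unavoidable. Given $\bfZ$ the tokens are i.i.d., so $\mathbb{E}\,\mathrm{Var}(\bfmu_{Y_p}\mid\bfX^t)\ge\mathbb{E}\,\mathrm{Var}(\bfmu_{Y_p}\mid\bfx_p^t,\bfZ)$. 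For $K\ge 2$ this is positive and independent of $P$, so (iii) alone cannot give $O(\epsilon)$.

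Your concentration step, however, does not close as planned. Write $\sigma_t^2:=1-\bar\alpha_t+\rho^2\bar\alpha_t$ and let $x_t:=\bar\alpha_t\|\bfmu_u\|^2/\sigma_t^2$ be the per-token SNR. There are three problems.
\begin{enumerate}
\item The event $G$ requires all $P$ tokens to be clustered correctly. But $P$ is only bounded below, so $\Pr(G^c)\le d^{-\Omega(1)}$ fails once $\log P$ exceeds $x_t$.
\item With your normalization $\|\bfmu_u\|^2=\sqrt d$, your own requirement for $\bar\alpha_t>\epsilon$ (that $\epsilon\sqrt d$ dominate a logarithmic term) forces $d\gtrsim\epsilon^{-2}$ up to logs. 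Condition (i) only provides $d\gtrsim\epsilon^{-1}\log(\cdot)$, and no adjustment of constants bridges this.
\item Claim (b) is false. Take $M=3$, $K=2$, $\tilde{\boldsymbol{\pi}}=(a,1-2a,a)$ with $a$ small. Then $\nu_{\min}\approx 1/6$, yet with probability $1/3$ (when $\bfZ$ selects patterns $1$ and $2$) pattern $1$ receives only about $aP$ tokens. Claim (b) is also unnecessary: with correct labels, $\sum_p 1/n_{Y_p}$ is the number of patterns present, at most $K\le M\le\nu_{\min}^{-1}$. So the averaging error is at most $K\sigma_t^2/P$ per dimension.
\end{enumerate}

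The missing idea has three parts:
\begin{itemize}
\item bound the per-token error in expectation rather than on a global good event;
\item keep the weight $c_t^2\bar\alpha_t$ attached, since your ``it suffices'' step discards exactly the factor that tames low-SNR steps;
\item threshold on $x_t$ rather than on $\bar\alpha_t$.
\end{itemize}
Concretely, use $\widehat{\bfM}_p=\bfmu_{\hat Y_p}$ with $\hat Y_p=\arg\max_v\langle\bfx_p^t,\bfmu_v\rangle$, or $\widehat{\bfM}=0$. The first is legitimate because the $\bfmu_v$ are fixed parameters of the distribution. Gaussian tails give $\Pr(\hat Y_p\neq Y_p)\le Me^{-x_t/4}$. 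Also $c_t^2\bar\alpha_t\|\bfmu_u\|^2=\frac{1-\bar\alpha_t}{\sigma_t^2}\,x_t\le x_t$. Hence the step-$t$ contribution is at most $\frac{x_t}{d}\min\{1,Me^{-x_t/4}\}\lesssim\frac{1+\log M}{d}$, uniformly in $t$ and for either normalization of $\|\bfmu_u\|$. This is $O(\epsilon)$ under (i), because $M\le\nu_{\min}^{-1}$. The argument needs no union bound over tokens, no upper bound on $P$, and not even (iii): the gap is controlled by $d$, not by the number of tokens.
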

Combining Corollary \ref{cor: R_oracle} and Proposition \ref{proposition: risk_diff}, one obtain our main Theorem \ref{thm: training}, showing that the trained transformer achieves near optimal denoising risk.


\begin{rmk}
    As illustrated in this Section \ref{ssec: mmse}, we have taken a novel approach to characterizes the near-optimal denoising model for the MTGM data.  
    Prior work, \citet{WZZC24} for example, considers much simpler data distributions such as low-rank Gaussian mixture model, for which the true MMSE estimator $\mathbb{E}[\bfE|\bfX^t]$ can be analyzed directly. 
    For MTGM data, however, the true MMSE estimator has a complicated expression due to the additional pattern subset selection step in the data sampling procedure (as in Definition \ref{def: data dist}). To address this challenge, our work adopts the oracle MMSE estimator, which has a simple interpretable expression and approximates the true MMSE estimator closely in the large-$P$ regime, as a bridge to characterize the near-optimal denoising model, thereby showing that Transformer can indeed learn the optimal denoiser. 
\end{rmk}

\vspace{-2mm}
\subsection{What Mechanism Does the Trained Transformer Parameters Learn from Diffusion Model Training?}\label{ssec: mechanism}
\vspace{-2mm}

In this section, we delve into the question of what denoising mechanism the Transformer parameters learn during training, which enables the model to approximate the oracle MMSE estimator and exhibit a desired performance of score matching. Recall the expressions for the transformer model in (\ref{eqn: model}) and the oracle MMSE estimator in (\ref{eqn: mmse}), one core question is why the self-attention structure in (\ref{eqn: model}) can approximate the mean patterns $\sqrt{\alpha}_t\bfM_\bfY$ at diffusion time step $t$, which we shall explain carefully next. 

\textbf{Query-key inner products reveal tokens with the same pattern.} First, the following proposition reveals how the trained self-attention behaves on different input data.

\begin{prpst}\label{prpst: W}
    Consider the trained model in Theorem \ref{thm: training} with parameters $\Psi^{(S)}=\{\bfW^{(S)},\{v_t^{(S)}\}_{t=1}^T\}$ that satisfies (\ref{eqn: thm1 loss}) for some $\epsilon\in (0,\delta(\tilde{\boldsymbol{\pi}})^{\Theta(1)})$. With a high probability over the sampling of the clean data $\bfX^0$ and the noise $\bfE$, any computed noisy data $\bfX^t, t\in[T]$ (together with latent variable $\bfY$) satisfies that for any triplet $i,j,k\in[P]$ whose corresponding latent variables satisfy that $Y_i=Y_j\neq Y_k$, we have
    \vspace{-1mm}
    \begin{equation}
        \bfx_j^t{}^\top\bfW^{(S)}\bfx_i^t/d\geq \log \big(\Omega(\epsilon^{-1}K\delta(\tilde{\boldsymbol{\pi}}))\big)/2,\label{eqn: qk same}
    \end{equation}
    \vspace{-1mm}
    \begin{equation}
        |\bfx_k^t{}^\top\bfW^{(S)}\bfx_i^t/d|\leq O(\log d/\sqrt{d})\cdot \bfx_j^t{}^\top\bfW^{(S)}\bfx_i^t/d.\label{eqn: qk similar}
    \end{equation}
\end{prpst}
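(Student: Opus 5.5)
The plan is to pin down the structure of $\bfW^{(S)}$ from the training dynamics and then evaluate the quadratic form on noisy tokens. Starting from $\bfW^{(0)}=0$, the gradient of $L$ with respect to $\bfW$ is a sum of outer products $\bfx_j^t\bfx_i^t{}^\top$ weighted by softmax-derivative and residual terms. I would write $\bfx_p^t=\sqrt{\bar\alpha_t}\bfmu_{Y_p}+\bfz_p^t$, with $\bfz_p^t$ collecting $\sqrt{\bar\alpha_t}\rho\cdot\mathcal{N}(0,\bfI)$ and $\sqrt{1-\bar\alpha_t}\boldsymbol{\epsilon}_p$. Taking the expectation over the data, the cross terms vanish and the gradient is, up to lower-order corrections, $-\sum_u c_u^{(s)}\bfmu_u\bfmu_u^\top$ plus a multiple of the identity. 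By induction over $s$ (as in the Proposition \ref{prpst: mmse} analysis), I would show
\[
\bfW^{(s)}=\sum_{u=1}^M \lambda_u^{(s)}\bfmu_u\bfmu_u^\top/\sqrt d+\bfW_\perp^{(s)},
\]
where $\lambda_u^{(s)}$ is increasing and the residual $\bfW_\perp^{(s)}$ is negligible in operator norm relative to $\lambda_u^{(s)}$. The rate of growth of $\lambda_u$ is governed by $\nu_u^{\tilde{\boldsymbol\pi}}(K)$ times an SNR factor. Because the data are symmetric within the selected set $\bfZ$, the spread among the $\lambda_u$ is controlled by $\delta(\tilde{\boldsymbol\pi})$.

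The first inequality, (\ref{eqn: qk same}), comes from relating the loss gap to the softmax weights. For the error bound (\ref{eqn: trained W}) to hold, the attention mass that a query $i$ places on tokens of other patterns must be at most $O(\epsilon)$. If $N_u$ and $N_{u'}$ denote the numbers of same- and different-pattern tokens, this requires
\[
N_{u'}e^{a_{\text{diff}}}\big/\big(N_u e^{a_{\text{same}}}\big)\lesssim\epsilon .
\]
With high probability $N_u\gtrsim P\tilde\pi_u/(K\max\tilde\pi)$ and $N_{u'}\lesssim P$, so a ratio of order $K\delta$ appears. The loss bound $L(\Psi^{(S)})\le R_{\rm Bayes}+O(\epsilon)$ then forces $\lambda_u^{(S)}\bar\alpha_t\gtrsim\log(\epsilon^{-1}K\delta)$. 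Expanding $\bfx_j^t{}^\top\bfW\bfx_i^t/d$ when $Y_i=Y_j=u$ gives the main term $\bar\alpha_t\lambda_u\|\bfmu_u\|^4/(d^{3/2})=\bar\alpha_t\lambda_u$. Combining these yields (\ref{eqn: qk same}), where the factor $1/2$ absorbs the fluctuations. I would handle the small-$\bar\alpha_t$ time steps using condition (ii) together with the fact that the relevant quantity is averaged over $t$; I would need to check that the statement is meant for steps where signal dominates, or that the bound holds uniformly because $\lambda_u$ is large.

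The second inequality, (\ref{eqn: qk similar}), follows by bounding every other term in the expansion. When $Y_k\neq Y_i$, orthogonality kills the pattern–pattern term. The remaining cross terms are $\bfmu^\top\bfW\bfz$ and $\bfz^\top\bfW\bfz$. Since $\bfz$ is Gaussian and independent of the patterns, $\bfmu_u^\top\bfz=O(d^{1/4}\sqrt{\log d})$ with high probability. Each such cross term is therefore $\lambda_u\cdot O(\sqrt{\log d}/\sqrt d)$ after normalization, and a union bound over the $P^2T$ pairs turns this into $O(\log d/\sqrt d)$ (using $P,T\le\mathrm{poly}(d)$). Dividing by the main term from the first step gives the ratio bound. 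The $\bfW_\perp$ contribution is handled the same way using its small operator norm.

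I expect the main obstacle to be the first step: controlling the trajectory of $\bfW^{(s)}$, in particular showing that $\bfW_\perp$ stays small and that the $\lambda_u$ grow comparably for all $u$ despite the nonlinear softmax feedback. I would first try to reuse the dynamics lemmas behind Proposition \ref{prpst: mmse} rather than redo that analysis.
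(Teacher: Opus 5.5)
There is a genuine gap in how you derive (\ref{eqn: qk same}): you read it off backwards from the loss or from (\ref{eqn: trained W}).

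\textbf{The direction is wrong.} From $N_{u'}e^{a_{\mathrm{diff}}}/(N_u e^{a_{\mathrm{same}}})\lesssim\epsilon$ you get $a_{\mathrm{same}}-a_{\mathrm{diff}}\ge\log(\epsilon^{-1}N_{u'}/N_u)$. A lower bound on the logit therefore needs a \emph{lower} bound on $N_{u'}/N_u$, i.e.\ an \emph{upper} bound on the same-pattern fraction. Your bounds $N_u\gtrsim P\tilde\pi_u/(K\max_w\tilde\pi_w)$ and $N_{u'}\lesssim P$ give only $N_{u'}/N_u\lesssim K/\delta(\tilde{\boldsymbol{\pi}})$. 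That is the wrong direction and the wrong quantity. The $K\delta$ in the statement comes from $\max_u\nu_u^{\tilde{\boldsymbol{\pi}}}(K)\le \max_w\tilde\pi_w/(\max_w\tilde\pi_w+(K-1)\min_w\tilde\pi_w)\le 1/(K\delta(\tilde{\boldsymbol{\pi}}))$, which makes $(1-\nu_{\max})/\nu_{\max}$ large.

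\textbf{The inputs are too weak.} Even with the direction fixed, both (\ref{eqn: trained W}) and $L(\Psi^{(S)})\le R_{\mathrm{Bayes}}+O(\epsilon)$ only control a squared error averaged over tokens (and, for the loss, over $t$ and the data). An off-pattern mass $m$ costs only order $\bar\alpha_t m^2\|\bfmu_u\|^2/d$ per dimension. So even with $\|\bfmu_u\|^2\asymp d$ you get $m\lesssim\sqrt{\epsilon/\bar\alpha_t}$ in a token-averaged sense, not $m\lesssim\epsilon$ for every query, every triplet and every $t$. Turning that average into a pointwise statement needs within-pattern uniformity of the logits. That is exactly the trajectory control of $\bfW^{(s)}$ you defer. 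The factor $\tfrac12$ is also not a fluctuation allowance: it reflects the $\sqrt{\epsilon}$ scale of the mass, or, in the paper, the growth law below.

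\textbf{How the paper gets both inequalities.} The paper never argues from the loss; both bounds come forward from the GD trajectory.
\begin{itemize}
\item Lemma \ref{lemma: W training} lower-bounds the same-pattern component of each gradient step by a term of order $\gamma(u)\,(1-\sum_l\zeta^u_{l,p,t})^2\sum_i\zeta^u_{i,p,t}$.
\item For large $Z$, $1-\sum_l\zeta^u_{l,p,t}$ behaves like $\frac{1-\nu_u}{\nu_u}e^{-Z}$. The induction in Lemma \ref{lemma: stage 1} therefore gives a growth law $Z^u_{j,j'}(s)\ge\tfrac12\log(1+c_u\gamma(u)s)$, and after $S$ steps (\ref{Z lower bound}).
\item The $\tfrac12$ and the square on $(1-\nu_u)/\nu_u$ both come from this $e^{-2Z}$ decay of the gradient. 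The bound on $\nu_{\max}$ then yields (\ref{eqn: qk same}).
\item (\ref{eqn: qk similar}) is the per-step ratio (\ref{qk small eqnlemma}) summed from $\bfW^{(0)}=0$.
\end{itemize}
So the trajectory analysis your structural ansatz requires already delivers the lower bound, and the backward step is unnecessary.

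\textbf{Your estimate for (\ref{eqn: qk similar}).} Your direct Gaussian bound is a reasonable substitute for the accumulation, but fix the normalization. With $\bfmu_u^\top\bfmu_u=\sqrt d$, which your $\bfmu_u^\top\bfz=O(d^{1/4}\sqrt{\log d})$ uses, the mean--noise cross term relative to the main term is $\tilde{O}(d^{-1/4}\bar\alpha_t^{-1/2})$, not $O(\log d/\sqrt d)$. The claimed rate needs $\|\bfmu_u\|^2\asymp d$, which is what the appendix actually works with. Your identity $\|\bfmu_u\|^4/d^{3/2}=1$ fits neither normalization. Your concern about small $\bar\alpha_t$ is legitimate; the paper's argument does not resolve it either.
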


    Proposition \ref{prpst: W} shows that, on the one hand, if the query and key vectors are two tokens sampled with the same pattern, which is indicated by their latent variables $Y_i=Y_j$, then their inner product admits a large lower bound of order $\log\Omega(\epsilon^{-1})$ after training. On the other hand, if the query and key vectors are tokens with two different patterns, then the absolute value of their inner product is relatively small, which is on the order of $O(d^{-1/2})$ times that of a query–key pair with the same pattern. This implies that, even though each token of $\bfX^t$ contains a large amount of noise injected by the forward diffusion process, the trained self-attention layer can still capture and pair tokens with the same pattern.

\textbf{Softmax attention concentration enables mean denoising.} Based on Proposition \ref{prpst: W}, we then compute the weights output by the softmax attention of the trained model to introduce the mean denoising mechanism in the following corollary.

\begin{corollary}[Mean denoising mechanism]\label{cor: attention}
    For any data $\bfX^0\sim\mathcal{D}(\tilde{\boldsymbol{\pi}},\{\bfmu_i\}_{i=1}^M,\rho)$ with a latent variable $\bfY$, denote $\mathcal{S}_u^{\bfY}=\{p\in [P]: Y_p=u\}$ for any $u\in [M]$. Then, 
    given the same trained model as in Proposition \ref{prpst: W}, with a high probability over the sampling of the clean data $\bfX^0$ and the noise $\bfE$, for any computed noisy data $\bfX^t, t\in [T]$ (together with latent variable $\bfY$), $u\in[M]$, and $p\in \mathcal{S}_u^{\bfY}$, $p', p''\in\mathcal{S}_u^{\bfY}$, $p'\neq p''$, we have 
    \vspace{-1mm}
    \begin{equation}
        \sum_{p'\in \mathcal{S}_u}\mathrm{softmax}(\bfX^t{}^\top\bfW^{(S)}\bfx_p^t/d)_{p'}\geq 1-\sqrt{\epsilon},\label{eqn: attention_concentration}
    \end{equation}
    \vspace{-1mm}
    \begin{equation}
        \begin{aligned}          &\mathrm{softmax}(\bfX^t{^\top\bfW^{(S)}}\bfx_p^t/d)_{p'}\\
        = &(1\pm \Theta(\epsilon))\mathrm{softmax}(\bfX^t{}^\top\bfW^{(S)}\bfx_p^t/d)_{p''}.\label{eqn: attention_uniform}
        \end{aligned}
    \end{equation}
\end{corollary}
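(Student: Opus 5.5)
We derive Corollary~\ref{cor: attention} directly from Proposition~\ref{prpst: W}, working on its high-probability event over $\bfX^0$ and $\bfE$. Fix $t$, a query index $p\in\mathcal{S}_u^{\bfY}$, and write $a_{q}:=\bfx_q^t{}^\top\bfW^{(S)}\bfx_p^t/d$ for every key $q\in[P]$. Proposition~\ref{prpst: W} then gives two facts. For $q\in\mathcal{S}_u^{\bfY}$ we have $a_q\geq \tfrac12\log(\Omega(\epsilon^{-1}K\delta(\tilde{\boldsymbol{\pi}})))$. For $q\notin\mathcal{S}_u^{\bfY}$ we have $|a_q|\leq O(\log d/\sqrt d)\,a_{q'}$ for every same-pattern $q'$.

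\textbf{Proof of (\ref{eqn: attention_concentration}).} We write the attention mass outside $\mathcal{S}_u$ as
\[
\frac{\sum_{q\notin\mathcal{S}_u}e^{a_q}}{\sum_{q\in\mathcal{S}_u}e^{a_q}+\sum_{q\notin\mathcal{S}_u}e^{a_q}}.
\]
Let $a_{\min}:=\min_{q\in\mathcal{S}_u}a_q$. Each off-pattern term is at most $e^{o(1)a_{\min}}$, because $\log d/\sqrt d=o(1)$ by condition (i). Each on-pattern term is at least $e^{a_{\min}}$.

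We next count tokens. Condition (iii) with a Chernoff bound gives $|\mathcal{S}_u|\geq \Omega(P\pi_u(\bfZ))$ with high probability. Since $\pi_u(\bfZ)\geq \delta(\tilde{\boldsymbol{\pi}})/K$ whenever $u$ is active, we get $|\mathcal{S}_u|/P\gtrsim \delta/K$.

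Hence the off-pattern mass is at most
\[
\frac{P\,e^{o(1)a_{\min}}}{|\mathcal{S}_u|\,e^{a_{\min}}}\lesssim \frac{K}{\delta}\,e^{-(1-o(1))a_{\min}}.
\]
Plugging in $a_{\min}\geq\tfrac12\log(c\,\epsilon^{-1}K\delta)$ gives $e^{-(1-o(1))a_{\min}}\lesssim (\epsilon/(K\delta))^{1/2-o(1)}$. The exact form of the bound is then a constant-bookkeeping step. One choice is to use the $\Omega(\cdot)$ constant inside the logarithm to absorb $K/\delta$. Another is to use $\epsilon<\delta^{\Theta(1)}$, so that $K/\delta$ factors are dominated by powers of $\epsilon^{-1}$. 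Either way we reach $\leq\sqrt\epsilon$, which gives (\ref{eqn: attention_concentration}).

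\textbf{Proof of (\ref{eqn: attention_uniform}).} The ratio of two same-pattern weights is $e^{a_{p'}-a_{p''}}$. So we need $|a_{p'}-a_{p''}|\leq O(\epsilon)$ for $p',p''\in\mathcal{S}_u$. Proposition~\ref{prpst: W} only gives lower bounds, so we have to go back to the structure of $\bfW^{(S)}$ established in its proof. That structure is $\bfW^{(S)}\approx\sum_{m}\lambda_m\bfmu_m\bfmu_m^\top/\sqrt d$-type components plus small residual, with $\lambda_m\asymp\log(\epsilon^{-1})$.

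Expand $\bfx_q^t=\sqrt{\bar\alpha_t}\bfmu_u+\sqrt{\bar\alpha_t}\rho\,\boldsymbol{\xi}_q+\sqrt{1-\bar\alpha_t}\boldsymbol{\epsilon}_q$. The deterministic part of $a_q$ is then identical for all $q\in\mathcal{S}_u$. The fluctuating part consists of terms like $\bfmu_u^\top\boldsymbol{\xi}_q\cdot\lambda_u/d$ and the residual contributions. By Gaussian concentration and a union bound over the $P^2T$ pairs, these are $O(\lambda\sqrt{\log(dPT)}/\sqrt d)$. Condition (i), $d\gtrsim\epsilon^{-1}\log(\epsilon^{-1}\nu_{\min}^{-1})$, is exactly what makes this $O(\epsilon)$; this matches the paper's remark that (i) controls same-pattern attention differences. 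Then $e^{\pm O(\epsilon)}=1\pm\Theta(\epsilon)$.

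\textbf{Main obstacle.} I expect the hardest step to be the uniformity claim (\ref{eqn: attention_uniform}). It needs an upper bound on the spread of same-pattern scores, which Proposition~\ref{prpst: W} as stated does not supply. It therefore requires importing the explicit characterization of $\bfW^{(S)}$ from the training dynamics analysis, including control of the residual off-pattern component of $\bfW^{(S)}$ acting on the noise. Without that control, the noise cross-terms $\boldsymbol{\epsilon}_q^\top\bfW\boldsymbol{\epsilon}_p/d$ could vary at order $1/\sqrt d$ times $\|\bfW\|$.
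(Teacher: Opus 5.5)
Both halves of your proposal have a step that does not close as written.

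\textbf{Concentration (\ref{eqn: attention_concentration}).} The paper does not go through Proposition~\ref{prpst: W} here. It reads the bound off (\ref{1-zeta final}), which rests on the per-pattern estimate (\ref{Z lower bound}): $Z^u\gtrsim\frac12\log\big((\frac{1-\nu_u}{\nu_u})^2\epsilon^{-1}\big)$ with $\nu_u=\nu_u^{\tilde{\boldsymbol{\pi}}}(K)$. The factor inside that logarithm is the square of the off-pattern to on-pattern token ratio for pattern $u$. So the leakage $\approx\frac{1-\nu_u}{\nu_u}e^{-Z^u}$ is at most $\sqrt{\epsilon}$ with nothing left over: rarer patterns receive proportionally larger scores.

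Proposition~\ref{prpst: W} has already replaced this with the pattern-uniform bound $\frac12\log(\Omega(\epsilon^{-1}K\delta))$, and your own computation shows the cost. The off-pattern mass is then only bounded by $\frac{K}{\delta}\big(\frac{\epsilon}{K\delta}\big)^{1/2-o(1)}\approx\sqrt{K}\,\delta^{-3/2}\sqrt{\epsilon}$. Neither of your fixes removes this prefactor:
\begin{itemize}
\item The constant hidden in $\Omega(\cdot)$ is universal, so it cannot absorb $K/\delta$.
\item Using $\epsilon<\delta^{\Theta(1)}$ gives $\delta^{-1}<\epsilon^{-\Theta(1)}$. That turns the bound into $\sqrt{K}\,\epsilon^{1/2-\Theta(1)}$, which is weaker than $\sqrt{\epsilon}$, not stronger.
\end{itemize}
You also cannot shrink $\epsilon$, because the corollary concerns the already-trained model. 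You need the sharper, pattern-dependent score bound from the training analysis rather than Proposition~\ref{prpst: W}.

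\textbf{Uniformity (\ref{eqn: attention_uniform}).} Here your route is essentially the paper's. The paper obtains (\ref{zeta_uniform}) by concentrating the same-pattern quadratic forms $\tilde{\bfx}_j^{t\top}\bfW\tilde{\bfx}_p^t$ around their mean. It splits off the pieces $\bfa_1^\top G\bfa_2$, $\bfa_1^\top G\bfmu_u$ and $\bfmu_u^\top G\bfa_2$, where $G$ is the noise-independent population gradient, and then invokes condition (i).

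The difference is the scale, and it is decisive. The paper asserts a relative deviation of order $\log d/d$. That gives a softmax gap $\lesssim\log d\,\log\epsilon^{-1}(\nu_{\min}^{-1}-1)^2/(d\,|\mathcal{S}_u|)$, and $d\gtrsim\epsilon^{-1}\log(\epsilon^{-1}\nu_{\min}^{-1})$ then turns $1/d$ into $\epsilon$. Your deviation is $O(\lambda\sqrt{\log(dPT)/d})$, and condition (i) only gives $d^{-1/2}\lesssim\sqrt{\epsilon}$. So what you actually prove is $1\pm O(\sqrt{\epsilon}\,\mathrm{polylog})$. The sentence that (i) ``is exactly what makes this $O(\epsilon)$'' is an arithmetic slip.

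To reach $1\pm\Theta(\epsilon)$ you must either justify concentration at scale $1/d$, or assume $d\gtrsim\epsilon^{-2}$ up to logarithms. Be aware that your $1/\sqrt{d}$ scale is what a direct Gaussian computation gives for $\bfW$ in the span of the $\bfmu_m\bfmu_{m'}^\top$. The paper's own order estimates, $d^{3/2}\log d$ for the cross terms against $d^{2}$ for the main term, also correspond to a ratio $\log d/\sqrt{d}$. So the first option is not something you can simply borrow from the paper.
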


    Corollary \ref{cor: attention} shows that:
    When one inspects each column of the softmax attention output of the trained model given a new noisy data $\bfX^t$ whose $p$-th token $\bfx_p^t$ is associated with the latent variable $Y_p=u$, the attention weights are concentrated among all tokens with the same pattern, as characterized by (\ref{eqn: attention_concentration}); Moreover, the attention probabilities are distributed almost uniformly among those tokens, as shown by (\ref{eqn: attention_uniform}).
    As such, for each input token $\bfx_p^t$, 
    the self-attention structure approximately outputs the mean of tokens that share the same pattern, i.e., a minimum-variance unbiased estimator (MVUE) of the mean of the Gaussian component from which $\bfx_p^t$ is sampled. 
    We refer to this attention behavior as the \textit{Mean denoising} mechanism.
    This mechanism suggests that self-attention can estimate $\sqrt{\alpha}_t\bfM_\bfY$ in (\ref{eqn: mmse}) reliably and with minimal bias as long as for every pattern appeared in $\bfM_\bfY$, there is a sufficient number of tokens sampled with that pattern. Consequently, the total number of tokens per data $P$ is required to be large, as stated in (iii) in our Theorem \ref{thm: training}, to achieve near-optimal denoising for the MTGM data.

\textbf{Implication of the mechanism on denoising data with a shifted $\tilde{\boldsymbol{\pi}}$. } The conclusion of Proposition \ref{prpst: W} shows that the trained self-attention module learns all patterns, regardless of their proportions in the data distribution. This motivates our discussion of the generative performance on test data with a shifted pattern proportion parameter $\tilde{\boldsymbol{\pi}}$. 

Specifically, consider $\bfX^0\sim\mathcal{D}(\tilde{\boldsymbol{\pi}}', K, \{\bfmu_i\}_{i=1}^M, \rho)$, where $\tilde{\boldsymbol{\pi}}'$ may not equal to $\tilde{\boldsymbol{\pi}}$, i.e., the fraction of Gaussian components of training data. The DDPM loss in expectation and the score matching error are then computed following (\ref{eqn: objective}) and (\ref{eqn: score matching}), respectively, but based on input distribution parameterized with $\tilde{\boldsymbol{\pi}}'$. We obtain the following corollary. 

\begin{corollary}\label{cor: shift}
    Given the trained model in Theorem \ref{thm: training} with parameters $\Psi^{(S)}$ that satisfies (\ref{eqn: thm1 loss}) for some $\epsilon\in (0,\delta(\tilde{\boldsymbol{\pi}})^{\Theta(1)})$, then for any $\bfX^0\sim\mathcal{D}(\tilde{\boldsymbol{\pi}}', K, \{\bfmu_i\}_{i=1}^M, \rho)$ with the number of tokens $P\geq \Omega(\nu^{\tilde{\boldsymbol{\pi}}'}_{\min}(K)^{-1} (\rho^2+1)\epsilon^{-1}\log d)$ in each data, we have $L(\Psi^{(S)})\!\leq\! R_{oracle}+O(\epsilon),\ \mathcal{E}(\Psi^{(S)})\leq O(\epsilon)$.
\end{corollary}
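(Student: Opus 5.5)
The plan is to reduce Corollary~\ref{cor: shift} to Proposition~\ref{prpst: W} and Proposition~\ref{prpst: mmse}. The key observation is that the trained parameters do not depend on the test proportions. The lower bound (\ref{eqn: qk same}) and the ratio bound (\ref{eqn: qk similar}) are statements about $\bfW^{(S)}$ acting on pairs of noisy tokens. Their conditional law given $Y_i,Y_j,Y_k$ is the same under $\tilde{\boldsymbol{\pi}}'$ as under $\tilde{\boldsymbol{\pi}}$, since only the law of the latent variables changes.

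\emph{Step 1: the attention pattern transfers.} I would revisit the proof of Proposition~\ref{prpst: W}, which I expect to show that $\bfW^{(S)}$ is approximately $\sum_u c_u\bfmu_u\bfmu_u^\top$ with every $c_u$ large enough. The coefficient $c_u$ only has to be learned for every $u$, not in proportion to $\tilde\pi_u$. Conditioned on the latent pattern labels, a union bound over token pairs and the Gaussian concentration used there give (\ref{eqn: qk same})--(\ref{eqn: qk similar}) for test data. This holds with high probability over $\bfX^0\sim\mathcal{D}(\tilde{\boldsymbol{\pi}}',\ldots)$ and $\bfE$.

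\emph{Step 2: concentration and uniformity of attention.} Corollary~\ref{cor: attention} then holds verbatim. Its concentration bound (\ref{eqn: attention_concentration}) needs the same-pattern mass to dominate, and for this I need $|\mathcal{S}_u^{\bfY}|$ to be large for each pattern present in $\bfX^0$. Under $\tilde{\boldsymbol{\pi}}'$, $\mathbb{E}|\mathcal{S}_u^{\bfY}|/P\ge\nu^{\tilde{\boldsymbol{\pi}}'}_{\min}(K)$. A Chernoff bound with the assumed $P\ge\Omega(\nu^{\tilde{\boldsymbol{\pi}}'}_{\min}(K)^{-1}(\rho^2+1)\epsilon^{-1}\log d)$ gives $|\mathcal{S}_u^{\bfY}|\gtrsim P\nu^{\tilde{\boldsymbol{\pi}}'}_{\min}(K)$ with high probability. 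The attention output at token $p$ is then, up to $O(\sqrt\epsilon)$ leakage and $(1\pm\Theta(\epsilon))$ reweighting, the empirical mean of $\{\bfx^t_{p'}:p'\in\mathcal{S}_u\}$. That mean equals $\sqrt{\bar\alpha_t}\bfmu_u$ plus averaged noise of variance $(\rho^2\bar\alpha_t+1-\bar\alpha_t)/|\mathcal{S}_u|$ per coordinate. This reproduces (\ref{eqn: trained W}) with $\nu^{\tilde{\boldsymbol{\pi}}'}_{\min}(K)$ in place of $\nu^{\tilde{\boldsymbol{\pi}}}_{\min}(K)$, giving per-dimension error $O(\epsilon)$.

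\emph{Step 3: loss and score error.} The output weights satisfy (\ref{eqn: trained vt}), and they are independent of $\tilde{\boldsymbol\pi}$, since the oracle coefficient involves only $\bar\alpha_t$ and $\rho$. Expanding $\|v_t(\bfX^t-\widehat{\bfM})-\bfE\|_F^2$ around the oracle estimator (\ref{eqn: mmse}) and applying Cauchy--Schwarz to the cross term gives $L(\Psi^{(S)})\le R_{oracle}+O(\epsilon)$, as in Corollary~\ref{cor: R_oracle}. On the low-probability complement I would bound the contribution using bounded attention weights and Gaussian moments, so it contributes $O(\epsilon)$. For the score error I would apply Proposition~\ref{proposition: risk_diff} under $\tilde{\boldsymbol{\pi}}'$, whose only hypothesis is the token condition, now stated with $\tilde{\boldsymbol{\pi}}'$. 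Then I would repeat the argument of Theorem~\ref{thm: score matching}. The score is $-\mathbb{E}[\bfE|\bfX^t]/\sqrt{1-\bar\alpha_t}$, so $\mathcal{E}$ equals a $(1-\bar\alpha_t)^{-1}$-weighted excess risk over $R_{\textrm{Bayes}}$, which is $O(\epsilon)$ up to the SNR factor absorbed as in that theorem.

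The main obstacle is Step 1. Proposition~\ref{prpst: W} is stated for the training distribution, so I must check that its proof only uses the structure of $\bfW^{(S)}$ and conditional Gaussian concentration, not the marginal frequencies. In particular, the lower bound involves $\delta(\tilde{\boldsymbol{\pi}})$, which is a property of training and so remains valid. I would also have to make sure that rarely-seen patterns under $\tilde{\boldsymbol{\pi}}$ still have $c_u$ of order $\log(\epsilon^{-1})$. The iteration count (\ref{eqn: iteration}) scaling with $\nu^{\tilde{\boldsymbol{\pi}}}_{\min}(K)^{-1}$ is exactly what guarantees this.
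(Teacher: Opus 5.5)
Your proposal is essentially the paper's argument. The paper's proof only notes that $\Psi^{(S)}$ is fixed and that the mixture proportions enter the post-training bound (\ref{stage 2 second term}) only through the per-pattern count condition (\ref{S_lb_1}), which under $\tilde{\boldsymbol{\pi}}'$ becomes the stated requirement on $P$; it then invokes Corollary~\ref{cor: R_oracle} and leaves the score-error part implicit, which you supply via Proposition~\ref{proposition: risk_diff} and the argument of Theorem~\ref{thm: score matching}. One caveat applies to your Step~2 and equally to the paper: the leakage bound (\ref{eqn: attention_concentration}) depends on the within-sequence odds $|\mathcal{S}_u^{\bfY}|/(P-|\mathcal{S}_u^{\bfY}|)$ times the training-fixed factor $\exp(\bfx_j^t{}^\top\bfW^{(S)}\bfx_i^t/d)$, not on $|\mathcal{S}_u^{\bfY}|$ alone, so enlarging $P$ does not transfer it when $\tilde{\boldsymbol{\pi}}'$ makes some pattern much rarer within a sequence than in training.
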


Corollary \ref{cor: shift} shows that a model trained under the conditions of Theorem \ref{thm: training} can also achieve an $O(\epsilon)$ DDPM loss and score matching error on data with distribution-shifted pattern proportions if the number of tokens per data is large enough. This is because (\ref{eqn: qk same}) shows that the trained self-attention mechanism yields a large lower bound on the inner product between queries and keys that share the same pattern. This bound holds uniformly for all patterns, and therefore the model does not fail to learn a pattern simply because it appears with low frequency. As a result, even when $\tilde{\boldsymbol{\pi}}$ shifts to $\tilde{\boldsymbol{\pi}}'$, as long as each data point contains sufficiently many tokens that scales with $\nu_{\min}^{\tilde{\boldsymbol{\pi}}'}(K)^{-1}$ such that self-attention can denoise by averaging tokens with the same pattern, the model can successfully denoise under the shifted data distribution and achieve effective score learning.

\vspace{-2mm}
\subsection{Proof Idea, Technical Novelty, and Limitations}\label{subsec: proof idea}
\vspace{-2mm}
\textbf{Proof idea of Theorem \ref{thm: training}}. In Lemma \ref{lemma: W training}, we prove that the gradient updates of $\bfW^{(s)}$ along directions corresponding to query–key pairs with the same pattern admit a lower bound, while the norm of the gradient updates along directions corresponding to query–key pairs with different patterns is very small. By accumulating gradients updates over steps, we obtain the mean denoising mechanism described in Proposition \ref{prpst: W} and Corollary \ref{cor: attention}, at which point the optimization of $\bfW^{(s)}$ converges (Lemma \ref{lemma: stage 1}). Note that patterns with smaller fractions are learned more slowly than those with larger fractions. To ensure that the mean denoising mechanism holds for all patterns, the required number of training iterations we derive depends on $\nu_{\min}^{\tilde{\boldsymbol{\pi}}}$, the minimum probability of selecting a pattern in data. The training of $v_t^{(s)}$ is then reduced to a linear problem, and Lemma \ref{lemma: stage 2} provides a proof of convergence. Since the learned parameters are close to the oracle MMSE estimator, we can show that the DDPM loss after training is close to $R_{\text{oracle}}$. Combined with Proposition \ref{proposition: risk_diff}, this yields the global convergence result stated in (\ref{eqn: thm1 loss}).

\textbf{Proof idea of Theorem \ref{thm: score matching}. }The score matching error (\ref{eqn: score matching}) can be decomposed into the error of fitting the conditional score function $\nabla_{\bfX^t} \log p_t(\bfX^t|\bfX^0)$ and the discrepancy between the score function and the conditional score function. The former can be upper bounded by $O(\epsilon)$ since the trained Transformer can approximate the conditional score function by Proposition \ref{prpst: mmse}. The latter can be shown, based on Proposition \ref{proposition: risk_diff}, to be $O(\epsilon)$. Therefore, we can construct a score network in (\ref{eqn: score network}) via the trained model $\Psi$ such that the score matching error is as small as $O(\epsilon)$.

\textbf{Technical novelty.} Our proof technique is inspired by the feature learning technique in studying Transformers. For the first time, we extend their analysis of label-prediction tasks, such as binary classification \citep{LWLC23, LWLC24, JHZS24} and linear regression \citep{ZFB23, HCL23}, to denoising tasks. Our work also extends the mechanism of attention concentration \citep{HCL23, LWLC23, LWLC24} to diffusion models in denoising (Corollary \ref{cor: attention}). Our technique preserves the nonlinearity of diffusion models rather than linearizing the model by an impractical extremely-wide network assumption in \citep{HRX24,WHT24}. This enables a convergence analysis beyond the NTK regime. 

\textbf{Limitations and possible future extensions} Although we consider more complex models and data distributions than those in prior works as discussed in Section \ref{sec: intro}, our analysis is still under a restricted setting: the network model is a one-layer, single-head Transformer, the data follows the MTGM distribution with orthogonal patterns, and the GD algorithm is run on population DDPM loss. We emphasize, however, that the focus of this paper is to provide an initial theoretical understanding of the convergence and denoising mechanisms of Transformer-based diffusion models, which could serve as building blocks for rigorous analysis in more realistic settings. 
One potential future extension is to study the convergence behavior of the multi-head attention Transformer when the data model possesses multiple types of internal relationships among tokens. Another is to extend the convergence results to empirical DDPM losses and analyze the generalization of DDPM by studying the gap between the empirical and population loss.

%% file: experiment.tex
\begin{figure*}[htbp]
\centering
\centerline{
\begin{tabular}{ccccc}
\hspace*{2mm} (A)
\vspace*{-0.7mm}
& \hspace*{2mm} (B)
& \hspace*{2mm} (C)
& \hspace*{2mm} (D)
& \hspace*{2mm} (E)
\\

\includegraphics[width=.18\textwidth]{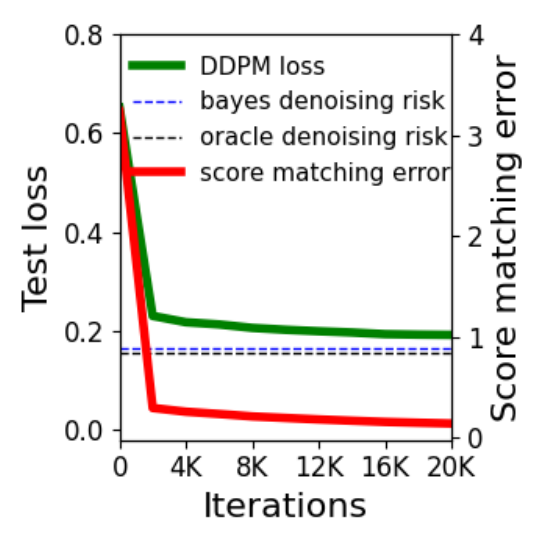}  
&
\hspace*{-1mm}
\includegraphics[width=.18\textwidth]{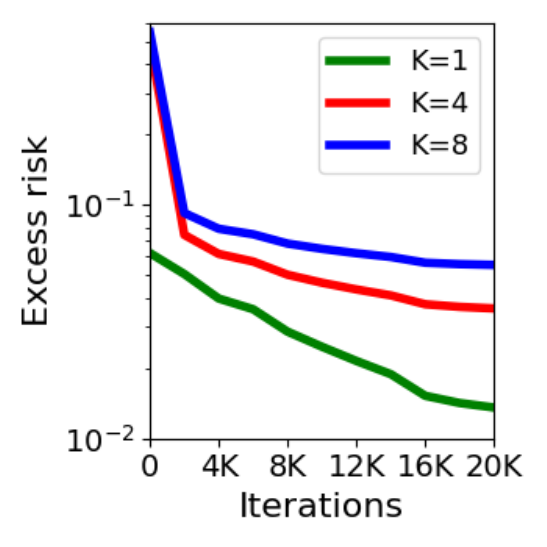}
&
\hspace*{-1mm}
\includegraphics[width=.18\textwidth]{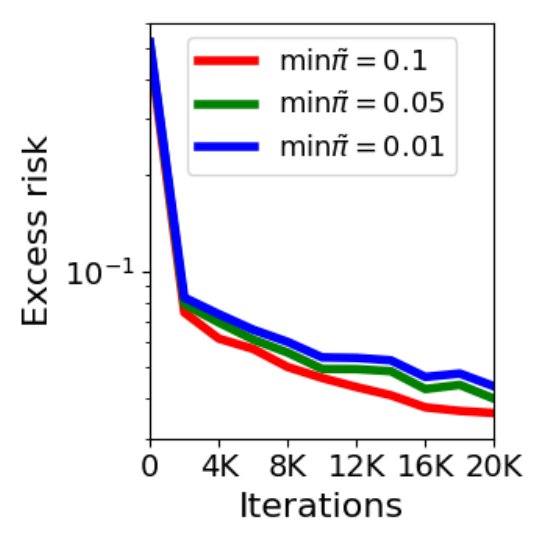}
&
\hspace*{-1mm}
\includegraphics[width=.18\textwidth]{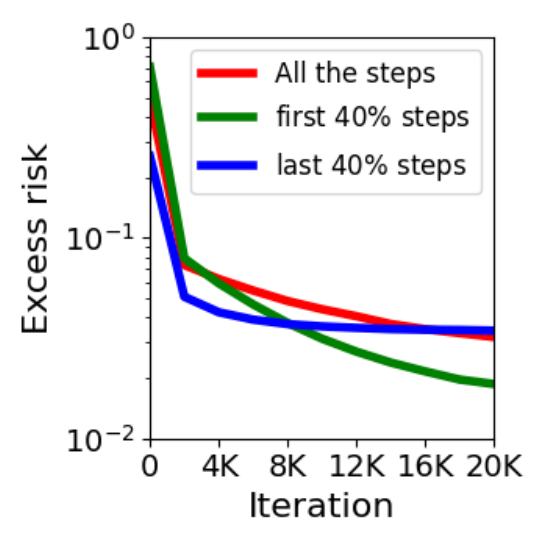}
& 
\hspace*{-1mm}
\includegraphics[width=.18\textwidth]{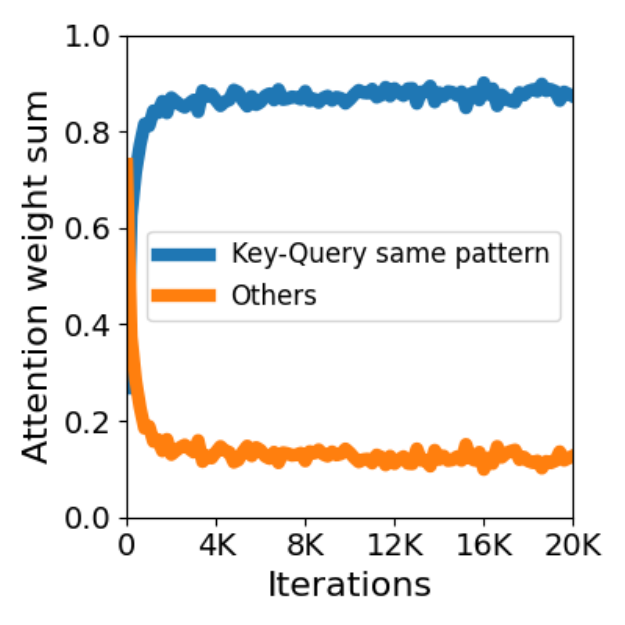}
\end{tabular}}
\vspace*{-3mm}
\caption{\footnotesize{The convergence performance and the attention behavior of the trained model. (A) The green and red curves are the test loss and score matching error during diffusion model training, respectively. Blue dashed line: Bayes denoising risk. Black dashed line: oracle denoising risk. (B) Excess risk with varying $K$, the number of Gaussian components per data.  (C)  Excess risk with varying $\min_{u\in[M]}\tilde{\pi}_u$, i.e., the minimal fraction among all the Gaussian components. A larger $\min_{u\in[M]}\tilde{\pi}_u$ indicates a more uniform distribution of all the patterns. (D) Excess risk with different sampling strategies. Red curve: uniform sampling $t\sim \mathrm{Unif}([T])$. Green curve: sampling from the first $40\%$ time steps, i.e., $t\sim \mathrm{Unif}([1, 0.4\cdot T])$. Blue curve: sampling from the last $40\%$ time steps, i.e., $t\sim \mathrm{Unif}(0.6\cdot T, T)$. 
(E) The attention weight summation on keys with the same pattern as the query and on other keys. }}\label{fig: convergence}
\end{figure*}

\vspace{-2mm}
\section{Numerical Experiments}
\vspace{-2mm}

In this section, we conduct synthetic experiments in Section \ref{ssec: synthetic} and real-data experiments in Section \ref{ssec: real} to justify our findings, respectively. Due to space limitations, some additional experiments are moved to Appendix.

\vspace{-2mm}
\subsection{Synthetic Experiments}\label{ssec: synthetic}
\vspace{-2mm}


\textbf{Setup. }Synthetic data are generated as described in Definition \ref{def: data dist}. Let $d=64$, $M=8$, $P=256$, $\rho=0.3$. If not specified, $K=4$. We consider generating uniform or non-uniform scenarios by varying $\min_{u\in[M]}\tilde{\pi}_u$. A smaller $\min_{u\in[M]}\tilde{\pi_u}$ indicates a more non-uniform distribution among patterns. 
The learning model is a one-layer single-head Transformer as formulated in (\ref{eqn: model}). The total number of time steps is $T=50$. We adopt a linear schedule, i.e., $\bar{\alpha}_t=\prod_{i=1}^t\alpha_i$, where $\alpha_t=\alpha_1-(\alpha_1-\alpha_T)\cdot (t-1)/(T-1)$. We set $\alpha_1=0.98$, $\alpha_T=0.95$. Since the Bayes denoising risk may vary under different hyperparameter settings, we compute the excess risk, denoted by $L(\Psi^{(s)})-R_{\textrm{oracle}}$ with $\bfX^0\sim\mathcal{D}(\tilde{\boldsymbol{\pi}}', K, \{\bfmu_i\}_{i=1}^M, \rho)$, to characterize the distance between the model and global convergent point. Note that during evaluation, we directly use 
$\tilde{\boldsymbol{\pi}}'$, which is a randomly generated uniform pattern distribution independent of $\tilde{\boldsymbol{\pi}}$. This is to measure the performance of the trained model under distribution shifts in the pattern proportion, thereby more accurately characterizing the quality of how the model learns the patterns.


\textbf{Convergence and score learning. }In Figure \ref{fig: convergence}, we showcase the results of convergence and score matching performance of the trained model. \ref{fig: convergence}~A reveals that the Bayes denoising risk is close to the oracle denoising risk, which verifies (\ref{eqn: R diff}) of Proposition \ref{proposition: risk_diff}. In addition, the DDPM loss gradually decreases during training to a value close to these two risks, while the score matching error gradually decreases to near zero. These observations are consistent with (\ref{eqn: thm1 loss}) of Theorem \ref{thm: training} and (\ref{eqn: score matching}) in Theorem \ref{thm: score matching}, respectively. \ref{fig: convergence}~B substantiates the discussion in Corollary \ref{cor: K} regarding the effect of $K$ on the number of training iterations required for convergence.
\ref{fig: convergence}~C shows that a more uniform distribution over pattern types can reduce the number of iterations needed for training. \ref{fig: convergence}~D explains that sampling only at smaller time steps can accelerate convergence, because under a linear schedule, $\bar{\alpha}_t$ is a decreasing function of the time step $t$. Therefore, sampling only from the first $40\%$ of time steps is equivalent to increasing the time-averaged SNR, which speeds up convergence according to Theorem \ref{thm: training}.

\textbf{Mean denoising mechanism. }
We next verify the findings in Section \ref{ssec: mechanism} regarding the mean denoising mechanism. 
We demonstrate that the sum of attention weights on keys with the same pattern as the query increases to close to $1$ during the training in Figure \ref{fig: convergence} (E), which justifies (\ref{eqn: attention_concentration}) in Corollary \ref{cor: attention} for mean denoising. 

\vspace{-2mm}
\subsection{Real-Data Experiments}\label{ssec: real}
\vspace{-2mm}

\textbf{Setup. }We conduct experiments on the real dataset MNIST \citep{LBBH02}. We select digits ``0'', ``1'', ``2'', and ``3'' for training and generation, where digit ``2'' is treated as a minority class with only $30\%$ of its training set used, while the other digits use the full training set. The training model is a $6$-layer, $4$-head DiT \citep{PX23}. We also train a CNN on MNIST to label the generated digits. During the training, we compute the FID score for each of the four generated digits to measure generation quality at different training stages.

\begin{wrapfigure}{r}{0.20\textwidth}
       \hspace{-5mm}
       \vspace{-2mm}
        \centering
  \includegraphics[width=1\linewidth, height= 1\linewidth]{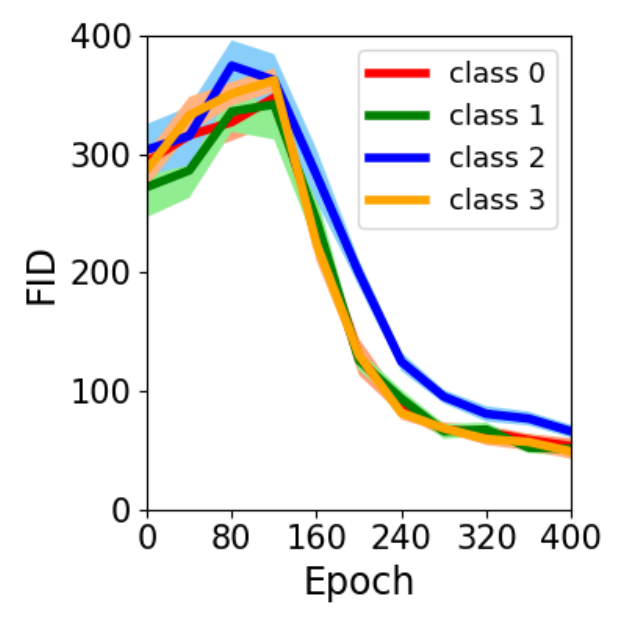} 
  \vspace{-2mm}
  \captionof{figure}{\footnotesize{FID score of the four generated digits of MNIST. The FID of the minority, ``2'' decreases more slowly than the others. }}\label{fig: fid}

\vspace{-4mm}
\end{wrapfigure}

\textbf{Training dynamics for generation. }As a minority pattern, digit ``2'' exhibits a slower decrease in FID score than the other digits (Figure \ref{fig: fid}). 
This indicates that different patterns are learned at different speeds, i.e., the high-frequency pattern is learned faster than the low-frequency pattern, which is consistent with the training dynamics from the proof of Theorem \ref{thm: training} in Section \ref{subsec: proof idea}.

%% file: appendix.tex
\newpage
\appendix
\onecolumn

The appendix is organized as follows. In Section \ref{sec: additional exp}, we show some extra experiments. In Section \ref{sec: preliminary}, we introduce important notations and lemmas used in the paper. In Section \ref{sec: proof theory}, we prove the main theorems of the paper. In Section \ref{sec: proof lemmas}, we provide the proof of the supporting key lemmas. 

\section{Additional Experiments}\label{sec: additional exp}

We first plot the query-key inner product of synthetic data. Figure \ref{fig: qk product} shows that, first, the query–key inner products corresponding to the same pattern are large and grow along the training, while those corresponding to different patterns remain small. Second, different patterns are learned at different speeds: the high-frequency pattern associated with $\max_{u\in[M]}\tilde{\pi}_u$ is learned faster and with smaller magnitude fluctuations than the low-frequency pattern associated with $\min_{u\in[M]}\tilde{\pi}_u$. We use a red dashed line and a red solid line to show the different required number of iterations of the inner products corresponding to different patterns. This result is aligned with Proposition \ref{prpst: W}.

\begin{figure}[htbp]
\centering
\includegraphics[width=0.3\textwidth]{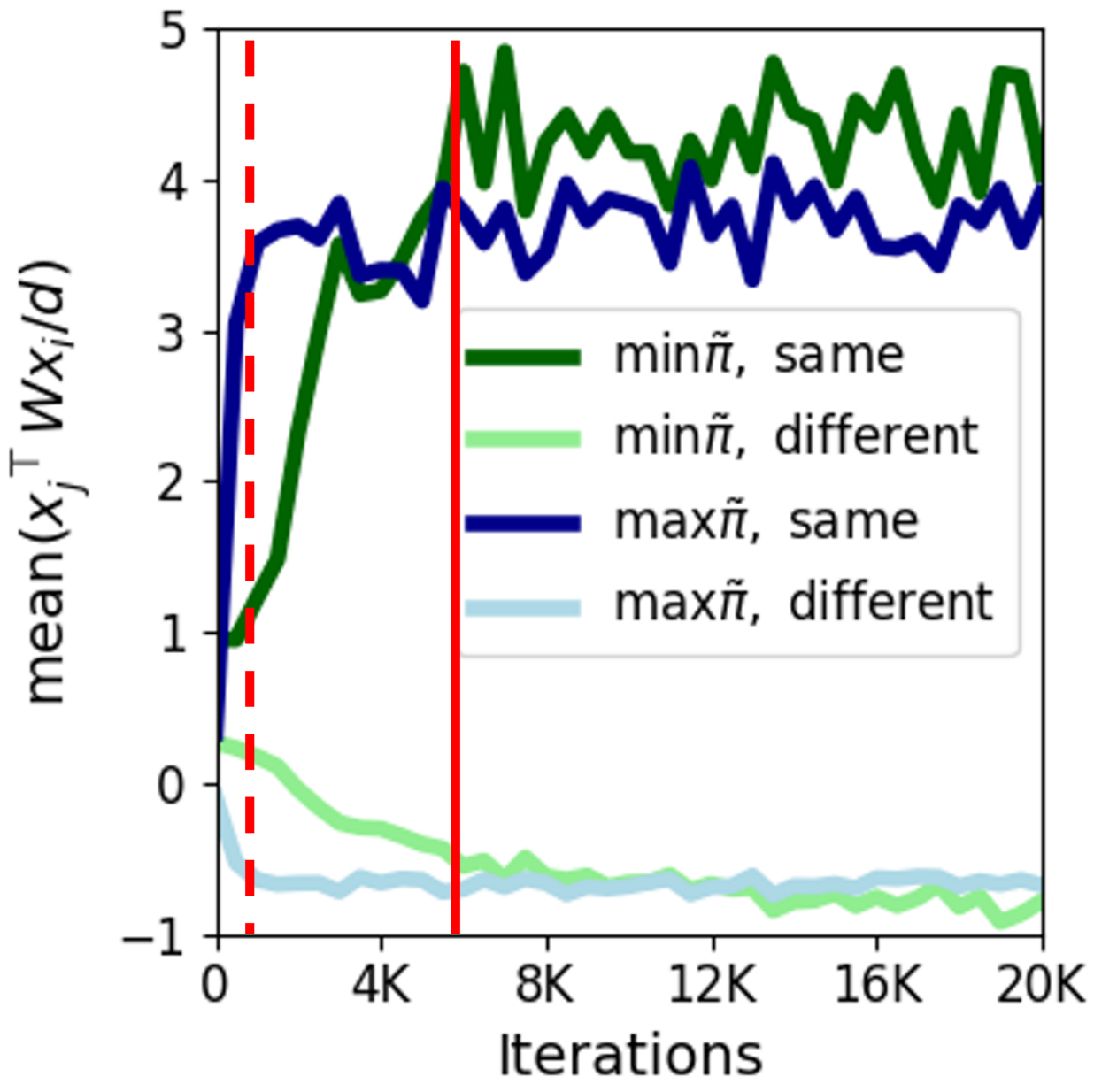}
\captionof{figure}{\footnotesize{Query-key inner products with the same or different patterns, where query patterns are the minimal or maximal of $\tilde{\boldsymbol{\pi}}$. $\min_{u\in[M]}\tilde{\pi}_u=0.01$.} }\label{fig: qk product}
\end{figure}

In Figure \ref{fig: visualization}, we show the visualization of the four generated MNIST digits using DiT \citep{PX23}. The result shows that the final generation quality of the minority digit ``2'', is relatively worse. 

\begin{figure}
  \centering
  \includegraphics[width=0.3\linewidth,height= 0.3\linewidth]{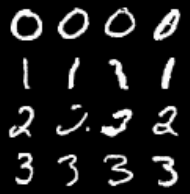} 
  \vspace{-0mm}
  \captionof{figure}{\footnotesize{Visualization of the generated digits.}}\label{fig: visualization}
\end{figure}

\section{Preliminaries}\label{sec: preliminary}

We first present Table \ref{tbl:notations} for a summary of notations used in the proof.

\begin{table}[h!]

  \begin{center}
        \caption{Summary of Notations}
        \label{tbl:notations}
\begin{tabularx}{\textwidth}{lX} 

\toprule
Notations & Annotation \\
\hline 
$d$, $P$ & $d$ is the dimension of each token of data. $P$ is the total number of tokens in each data.\\
\hline 
$M$, $K$ & $M$ is the total number of mean patterns. $K$ is the number of distinct patterns in each data. \\
$\tilde{\boldsymbol{\pi}}$, $\{\bfmu_i\}_{i=1}^M$, $\rho$ & $\tilde{\boldsymbol{\pi}}$ is the fraction vector of all the Gaussian components of the MTGM distribution. $\{\bfmu_i\}_{i=1}^M$ is the set of all the mean patterns. $\rho^2$ is the variance of the Gaussian components. \\
\hline
$\bfX$, $\bfY$, $\bfZ$ & $\bfX$ denotes the data. $\bfY$ ad $\bfZ$ are the latent variable to define the distribution of $\bfX$.\\
\hline 
$\Psi$, $\bfW$, $T$, $\{v_t\}_{t=1}^T$ & $\Psi$ is the set of parameters in the learning model. In our work, $\Psi$ contains $\bfW$ and $\{v_t\}_{t=1}^T$, where $\bfW$ is the self-attention parameter, and $\{v_t\}_{t=1}^T$ is the coefficient parameter. $T$ is the total number of diffusion time steps. \\
\hline $\bfX^0$, $\{\bar{\alpha}_t\}_{t=1}^T$, $\bfE$, $\bfX^t$ & $\bfX^t$ is the noisy input at time step $t$. $\bfX^0$ is the clean input before adding the noise. $\{\bar{\alpha}\}_{t=1}^T$ is the noise schedule coefficients. $\bfE$ is the additive Gaussian noise. \\
\hline 
$s(\bfX^t,t)$, $s_{\theta}(\bfX^t, t)$ & $s(\bfX^t,t)$ is the score function. $s_{\theta}(\bfX^t, t)$ is the neural network parameterized by $\theta$ to learn the score function.\\
\hline 
$\nu_{\min}^{\tilde{\boldsymbol{\pi}}}$, $\delta(\boldsymbol{\pi})$, $\textrm{SNR}$ & $\nu_{\min}^{\tilde{\boldsymbol{\pi}}}$ is the minimum average probability of selecting a pattern in the data. $\delta(\boldsymbol{\pi})$ is the degree of imbalance between the prior probabilities of the least and the most probable patterns. $\textrm{SNR}$ is the time-averaged signal-noise-ratio over the noise schedule. \\
\hline 
$R_{\text{Bayes}}$, $\bfM_{\bfY}$, $R_{\text{oracle}}$ & $R_{\text{Bayes}}$ is the optimal risk of minimizing the DDPM loss, where the denoising model is chosen as the MMSE estimator of the noise. $\bfM_{\bfY}$ is the matrix of mean patterns given $\bfY$, the latent variable of $\bfX$. $R_{\text{oracle}}$ is the risk if the denoising model is chosen as the oracle MMSE estimator with $\bfM_{\bfY}$ as known. \\
\hline 
 \small $\mathcal{O}()$, $\Omega()$, $\Theta()$ & We follow the convention that $f(x)=O(g(x))$ (or $\Omega(g(x))$, $\Theta(g(x)))$) means that $f(x)$ increases at most, at least, or in the order of $g(x)$, respectively. Specifically, if $f(x)=O(g(x))$, then there exists $C>0$ and  $a>0$, such that $f(x)\leq C\cdot g(x)$ when $x>a$. If $f(x)=\Omega(g(x))$, then there exists $c>0$ and $a>0$, such that $f(x)\geq c\cdot g(x)$ when $x>a$. If $f(x)=\Theta(g(x))$, then there exists $C>c>0$ and $a>0$, such that $c\cdot g(x)\leq f(x)\leq C\cdot g(x)$ when $x>a$.\\
 \hline 
 \small $\gtrsim$, $\lesssim$ & $f(x)\gtrsim g(x)$ (or $f(x)\lesssim g(x)$ ) means that $f(x)\geq \Omega(g(x))$ (or $f(x)\lesssim \mathcal{O}(g(x))$).\\
 \hline
 \small $\mathrm{poly}()$ & If $f(x)=\mathrm{poly}(x)$, then there exists $k>0$ and a set of constants $\{c_i\}_{i=0}^k$, such that $f(x)=\sum_{i=0}^k c_i x^i$, which means $f(x)$ is a polynomial function of $x$ with a finite maximal power.\\
 \hline
\bottomrule
\end{tabularx}
\end{center}

\end{table}

\begin{lemma} \label{lemma: chernoff}
    (Multiplicative Chernoff bounds, Theorem D.4 of \citep{MRT18}) Let $X_1$, $\cdots$, $\bfX_m$ be independent random variables drawn according to some distribution $\mathcal{D}$ with mean $p$ and support included in $[0,1]$. Then, for any $\gamma\in[0,\frac{1}{p}-1]$,  the following inequality holds for $\hat{p}=\frac{1}{m}\sum_{i=1}^m X_i$:
    \begin{equation}
        \Pr(\hat{p}\geq (1+\gamma)p)\leq e^{-\frac{m p\gamma^2}{3}},
    \end{equation}
    \begin{equation}
        \Pr(\hat{p}\leq (1-\gamma)p)\leq e^{-\frac{m p\gamma^2}{2}}.
    \end{equation}
\end{lemma}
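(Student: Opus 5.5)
The plan is the standard moment-generating-function (Chernoff) argument, applied separately to each tail, followed by an elementary scalar inequality that turns the exact Chernoff exponent into the quadratic forms $mp\gamma^2/3$ and $mp\gamma^2/2$.

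\emph{Upper tail.} Fix $\lambda>0$ and apply Markov's inequality to $e^{\lambda m\hat p}$:
\[
\Pr(\hat p\geq (1+\gamma)p)\leq e^{-\lambda m(1+\gamma)p}\prod_{i=1}^m\mathbb{E}[e^{\lambda X_i}].
\]
Since $x\mapsto e^{\lambda x}$ is convex on $[0,1]$, we have $e^{\lambda x}\leq 1-x+xe^{\lambda}$ there. Hence, using only that the support lies in $[0,1]$ and the mean is $p$,
\[
\mathbb{E}[e^{\lambda X_i}]\leq 1+p(e^\lambda-1)\leq \exp\bigl(p(e^\lambda-1)\bigr).
\]
Choosing $\lambda=\log(1+\gamma)$ gives the exact bound $\bigl(e^{\gamma}/(1+\gamma)^{1+\gamma}\bigr)^{mp}$. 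It then suffices to prove the scalar inequality
\[
h(\gamma):=(1+\gamma)\log(1+\gamma)-\gamma\;\geq\;\gamma^2/3 .
\]
I would check $h(0)=h'(0)=0$ and compare second derivatives: $h''(\gamma)=1/(1+\gamma)\geq 2/3$ on $[0,1/2]$. On the rest of $[0,1]$ I would use the standard argument that $h(\gamma)-\gamma^2/3$ is concave-then-convex with nonnegative endpoint values, for instance via $\log(1+\gamma)\geq 2\gamma/(2+\gamma)$.

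\emph{Lower tail.} This is symmetric. Take $\lambda<0$, bound $\mathbb{E}[e^{\lambda X_i}]\leq \exp(p(e^{\lambda}-1))$ in the same way, and set $\lambda=\log(1-\gamma)$. This yields $\bigl(e^{-\gamma}/(1-\gamma)^{1-\gamma}\bigr)^{mp}$. The exponent is then controlled by $(1-\gamma)\log(1-\gamma)+\gamma\geq \gamma^2/2$, which follows from the Taylor series $-\log(1-\gamma)=\sum_{k\ge1}\gamma^k/k$, all of whose terms are nonnegative. This is valid for $\gamma\in[0,1)$. The case $\gamma\geq 1$ is trivial, because $\hat p\geq 0$ and the event has probability zero unless $\gamma=1$, which follows by continuity.

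\emph{Main obstacle: the admissible range of $\gamma$ in the upper tail.} The quadratic exponent $\gamma^2/3$ only holds for $\gamma\in[0,1]$; for large $\gamma$, $h(\gamma)$ grows like $\gamma\log\gamma$, not $\gamma^2$. Indeed, the upper-tail bound as stated cannot hold on the full range $[0,1/p-1]$. For Bernoulli$(p)$ variables with $\gamma=1/p-1$, the event has probability $p^m$, which exceeds $e^{-m(1-p)^2/(3p)}$ for small $p$. So in the write-up I would prove the upper tail for $\gamma\in[0,\min\{1,1/p-1\}]$ (or retain the exact Chernoff form for larger $\gamma$). I would also verify that every later application in the paper, namely concentrating the number of tokens with $Y_p=u$ around $P\pi_u$, uses a constant $\gamma\leq 1$, e.g.\ $\gamma=1/2$.
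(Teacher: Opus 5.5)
Your proposal is correct. It is the standard Chernoff moment-generating-function argument behind Theorem D.4 of \citet{MRT18}, which the paper cites without giving a proof of its own. One cosmetic slip: $h(\gamma)-\gamma^2/3$ is convex-then-concave on $[0,1]$, not concave-then-convex. Your alternative route $\log(1+\gamma)\ge 2\gamma/(2+\gamma)$ settles this step anyway, since it gives $h(\gamma)\ge \gamma^2/(2+\gamma)\ge\gamma^2/3$ for $\gamma\le 1$ directly.

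Your objection to the range is also right. $h(\gamma)$ falls below $\gamma^2/3$ once $\gamma\gtrsim 1.8$, and your Bernoulli example is a genuine counterexample to the upper tail as stated: $p=0.1$, $\gamma=1/p-1=9$, $m=1$ gives probability $0.1>e^{-2.7}$. So proving the upper tail only for $\gamma\in[0,\min\{1,1/p-1\}]$ is the correct fix. It costs nothing downstream. The one explicit upper-tail use, in the proof of Lemma~\ref{lemma: stage 2}, takes $\gamma=\sqrt{\log(dK)/(P\nu_{\min}^{\tilde{\boldsymbol{\pi}}}(K))}$, which is at most $1$ under condition (iii). The other uses need only constant-factor concentration.
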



\begin{definition}\label{def: sub-Gaussian}\citep{V10}
We say $X$ is a sub-Gaussian random variable with sub-Gaussian norm $K>0$, if $(\mathbb{E}|X|^p)^{\frac{1}{p}}\leq K\sqrt{p}$ for all $p\geq 1$. In addition, the sub-Gaussian norm of X, denoted $\|X\|_{\psi_2}$, is defined as $\|X\|_{\psi_2}=\sup_{p\geq 1}p^{-\frac{1}{2}}(\mathbb{E}|X|^p)^{\frac{1}{p}}$.
\end{definition}

\begin{lemma}  (\cite{V10} 
Proposition 5.1,  Hoeffding's inequality)  Let $X_1, X_2, \cdots, X_N$ be independent centered sub-gaussian random variables, and let $K=\max_i\|\bfX_i\|_{\psi_2}$. Then for every $\bfa=(a_1,\cdots,a_N)\in\mathbb{R}^N$ and every $t\geq0$, we have
\begin{equation}
    \Pr\Big(\Big|\sum_{i=1}^N a_i X_i\Big|\geq t\Big)\leq e\cdot \exp\left(-\frac{ct^2}{K^2\|\bfa\|^2}\right),\label{hoeffding}
\end{equation}
where $c>0$ is an absolute constant.
\end{lemma}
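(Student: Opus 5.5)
The statement immediately above is Hoeffding's inequality for sub-Gaussian sums (Proposition 5.10 of \citep{V10}), which the paper takes as a cited tool. We will prove it by the standard moment-generating-function (Chernoff) argument. The plan is to bound the MGF of each summand, multiply the bounds using independence, and then apply Markov's inequality to the exponential and optimize the exponent.

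\textbf{Step 1 (MGF bound).} First we show that a centered random variable $X$ with $\|X\|_{\psi_2}\le K$ satisfies $\mathbb{E}e^{\lambda X}\le e^{C\lambda^2K^2}$ for all $\lambda\in\mathbb{R}$ and an absolute constant $C$. To do this, expand $e^{\lambda X}=1+\lambda X+\sum_{p\ge2}\lambda^pX^p/p!$. The linear term has zero expectation because $X$ is centered. For the remaining terms, use the moment bound from Definition \ref{def: sub-Gaussian}, namely $\mathbb{E}|X|^p\le (K\sqrt p)^p$. Together with Stirling's estimate $p!\ge(p/e)^p$, this gives
\[
\mathbb{E}e^{\lambda X}\le 1+\sum_{p\ge2}\frac{(e\lambda K)^p}{p^{p/2}}.
\]
The right-hand side is bounded by $e^{C\lambda^2K^2}$. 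For small $|\lambda| K$ the quadratic term dominates. For large $|\lambda| K$ we bound the series by comparison, using $x^p/p^{p/2}\lesssim e^{x^2}$ term by term after separating even and odd $p$. This step is the main technical obstacle, since the constants must be tracked so that they do not depend on $p$; all other steps are routine.

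\textbf{Step 2 (Chernoff).} Let $S=\sum_i a_iX_i$. By independence and Step 1,
\[
\mathbb{E}e^{\lambda S}=\prod_i\mathbb{E}e^{\lambda a_iX_i}\le e^{C\lambda^2K^2\|\bfa\|^2}.
\]
Markov's inequality then gives $\Pr(S\ge t)\le e^{-\lambda t+C\lambda^2K^2\|\bfa\|^2}$. Choosing $\lambda=t/(2CK^2\|\bfa\|^2)$ yields the bound $\exp(-t^2/(4CK^2\|\bfa\|^2))$. Applying the same argument to $-S$ and taking a union bound gives
\[
\Pr(|S|\ge t)\le 2\exp(-ct^2/(K^2\|\bfa\|^2)).
\]
Since $2\le e$, this implies (\ref{hoeffding}) with $c=1/(4C)$.
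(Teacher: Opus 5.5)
Your proof is correct. The paper gives no argument of its own and simply cites \citet{V10}, where the result is Proposition~5.10 as you say; the ``5.1'' in the statement is a typo.

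The cited proof is organized a little differently from yours. Vershynin first uses rotation invariance (his Lemma~5.9, proved by multiplying centered MGF bounds exactly as in your Step~2) to show that $\sum_i a_iX_i$ is itself sub-Gaussian with $\|\sum_i a_iX_i\|_{\psi_2}^2\le C\sum_i a_i^2\|X_i\|_{\psi_2}^2\le CK^2\|\bfa\|^2$. He then reads off the tail from the equivalence of sub-Gaussian characterizations (his Lemma~5.5). Its tail form $\Pr(|Y|\ge t)\le \exp(1-t^2/K_1^2)$ is where the prefactor $e$ comes from. Your direct Chernoff route is more self-contained and gives the slightly better prefactor $2$. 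The modular route additionally records that the weighted sum has $\psi_2$-norm $O(K\|\bfa\|)$, which can be reused, for instance for moment bounds on the sum.

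One step in Step~1 should be phrased more carefully. Write $x=e|\lambda|K$, with $|\lambda|$ rather than $\lambda$, since odd terms can be negative. Bounding each term $x^p/p^{p/2}$ by $e^{x^2}$ does not by itself control an infinite series. The clean fix is to compare coefficientwise with the exponential series:
\begin{itemize}
\item Even powers: $x^{2k}/(2k)^k\le (x^2/2)^k/k!$, since $k!\le k^k$.
\item Odd powers: use $x^{2k+1}\le\tfrac12(x^{2k}+x^{2k+2})$ together with $(2k+1)^{k+1/2}\ge\max\{(2k)^k,(k+1)!\}$.
\end{itemize}
This gives $\sum_{p\ge2}x^p/p^{p/2}\le 2(e^{x^2}-1)$ for all $x\ge0$ at once, so the small/large split is unnecessary. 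Since $1+2(e^{y}-1)\le e^{2y}$, you obtain $\mathbb{E}e^{\lambda X}\le e^{2e^2\lambda^2K^2}$. Step~2 then goes through unchanged.
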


\begin{definition}
    For $i,p\in [P]$, $t\in [T]$, $u\in [M]$, and $\bfX$ that follows Definition \ref{def: data dist}, we denote $\zeta^u_{i,p,t}(s)=\text{softmax}_p(\bfx_i^t{}^\top \bfW^{(s)}\bfx_p^t/d)\mathbbm{1}[Y_p=u]$.
\end{definition}

\begin{lemma}\label{lemma: W training}
    Given conditions (i)-(v) in Theorem \ref{thm: training}, we have that for any $\tilde{\bfX}$ that follows the distribution in Definition \ref{def: data dist}, where $\tilde{\bfx}_j$ and $\tilde{\bfx}_{j'}$ have $\bfmu_u$ as the mean, and $\tilde{\bfx}_k$ has $\bfmu_{u'}$ as the mean ($u\neq u'$), then we have for any $s>0$,
    \begin{equation}
        \begin{aligned}
        &(-\tilde{\bfx}_j^t{}^\top) \frac{1}{T}\sum_{t=1}^T\mathbb{E}_{\bfE,\bfX^0}\Big[\frac{\partial L(\Psi^{(s_0)})}{\partial \bfW}\Big]\tilde{\bfx}_{j'}^t\\
        \gtrsim & \frac{1}{T}\sum_{t=1}^T\mathbb{E}_{\bfE,\bfX^0}[(v_t^{(s_0)})^2\bar{\alpha}_t^3 d (1-\sum_{l=1}^P \zeta_{l,p,t}^u(s_0))^2\sum_{i=1}^P\zeta_{i,p,t}^u(s_0)]\cdot \nu_u^{\tilde{\boldsymbol{\pi}}}(K),\label{same mean W}
        \end{aligned}
    \end{equation}
    \begin{equation}
        \begin{aligned}
             (-\tilde{\bfx}_{k}^\top)\frac{1}{T}\sum_{t=1}^T\mathbb{E}_{\bfE,\bfX^0}\Big[\frac{\partial L(\Psi^{(s_0)})}{\partial \bfW}\Big]\tilde{\bfx}_{j'}^t
            \lesssim \frac{\log d}{\sqrt{d}}\cdot (-\tilde{\bfx}_{j}^\top)\frac{1}{T}\sum_{t=1}^T\mathbb{E}_{\bfE,\bfX^0}\Big[\frac{\partial L(\Psi^{(s_0)})}{\partial \bfW}\Big]\tilde{\bfx}_{j'}^t.
        \end{aligned}\label{qk small eqnlemma}
    \end{equation}
\end{lemma}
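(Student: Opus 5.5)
We plan to compute the gradient $\partial L/\partial \bfW$ explicitly and then take expectations, using orthogonality of the patterns to separate the same-pattern directions from the cross-pattern directions. Write $\bfs_p=\mathrm{softmax}(\bfX^t{}^\top\bfW\bfx_p^t/d)$ and $\bfr_p=\bff(\Psi;\bfX^t,t)_p-\boldsymbol{\epsilon}_p = v_t(\bfx_p^t-\bfX^t\bfs_p)-\boldsymbol{\epsilon}_p$. The chain rule through the softmax Jacobian $\mathrm{diag}(\bfs_p)-\bfs_p\bfs_p^\top$ gives
\begin{equation*}
\frac{\partial L}{\partial \bfW}=-\frac{1}{dPT}\sum_{t}\mathbb{E}\Big[\sum_{p=1}^P\frac{v_t}{d}\sum_{i=1}^P [\bfs_p]_i\,\bfr_p^\top(\bfx_i^t-\bfX^t\bfs_p)\,\bfx_i^t\bfx_p^t{}^\top\Big].
\end{equation*}
Contracting with $\tilde{\bfx}_j^t{}^\top(\cdot)\tilde{\bfx}_{j'}^t$ produces scalars $\tilde{\bfx}_j^t{}^\top\bfx_i^t$ and $\bfx_p^t{}^\top\tilde{\bfx}_{j'}^t$. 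Each of these equals $\bar\alpha_t\sqrt d$ times the indicator that the means agree, plus fluctuation terms of order $\sqrt{d\log d}$ (with high probability, by Hoeffding).

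We then restrict to query tokens $p$ with $Y_p=u$ and key tokens $i$ with $Y_i=u$. For these, the leading factor is $\bar\alpha_t^2 d$. The factor $\bfr_p^\top(\bfx_i^t-\bfX^t\bfs_p)$ supplies the remaining pieces. Its signal part is $v_t\sqrt{\bar\alpha_t}\,\bfmu_u^\top$ applied to $(\bfx_p^t-\bfX^t\bfs_p)$ and to $(\bfx_i^t-\bfX^t\bfs_p)$. The mean of $\bfX^t\bfs_p$ is $\sqrt{\bar\alpha_t}\sum_{u'}\big(\sum_l\zeta^{u'}_{l,p,t}\big)\bfmu_{u'}$, so each signal part contributes $\sqrt{\bar\alpha_t d}\,\big(1-\sum_l\zeta^u_{l,p,t}\big)$. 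This yields the factor $v_t^2\bar\alpha_t^3 d\big(1-\sum_l\zeta^u_{l,p,t}\big)^2$. Summing the weights $[\bfs_p]_i$ over $i\in\mathcal S_u$ yields $\sum_i\zeta^u_{i,p,t}$. Averaging over $p$ and over the draw of $\bfY$ yields the factor $\nu^{\tilde{\boldsymbol\pi}}_u(K)$. The two sign flips (the leading minus in the gradient and the $-\tilde{\bfx}_j^\top$ contraction) make this term positive.

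The remaining terms must be shown to be of lower order or nonnegative. The noise term $-\boldsymbol{\epsilon}_p^\top(\cdot)$ has nonzero mean only through the correlation of $\boldsymbol\epsilon_p$ with $\bfx_p^t$. We will argue that it combines with the $v_t$ term into a quantity whose sign is fixed, or else that it is of order $\sqrt{1-\bar\alpha_t}/\sqrt d$ relative to the main term. Cross-pattern keys $i$ enter only through the inner products $\tilde{\bfx}_j^t{}^\top\bfx_i^t$ and $(\bfx_i^t-\bfX^t\bfs_p)$, which carry no $\sqrt d$ signal. Hence these keys contribute a factor $O(\log d/\sqrt d)$. The high-probability fluctuation bounds are then converted into bounds on expectations, using Gaussian tails for the rare bad events; conditions (i) and (iii) make those events negligible.

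For \eqref{qk small eqnlemma}, we repeat the same expansion with $\tilde{\bfx}_k^t$ in place of $\tilde{\bfx}_j^t$. The factor $\tilde{\bfx}_k^t{}^\top\bfx_i^t$ now has signal $\bar\alpha_t\sqrt d$ only when $Y_i=u'$. But then $[\bfs_p]_i$ pairs a $u$-query with a $u'$-key, and the residual projection $\bfmu_u^\top(\bfx_i^t-\bfX^t\bfs_p)$ loses the aligned term. Every surviving term therefore carries an extra noise inner product of order $\sqrt{d\log d}$ in place of a $d$ factor, giving the ratio $\log d/\sqrt d$ against \eqref{same mean W}. The main obstacle is the noise–output correlation term in the same-pattern bound. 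Its sign is not obviously favorable, and controlling it uniformly over all $s_0$ probably requires an inductive hypothesis on the structure of $\bfW^{(s_0)}$ along the training path (a combination of the $\bfmu_u\bfmu_u^\top$ plus small remainder). I would try to establish that hypothesis jointly with the lemma by induction on $s$.
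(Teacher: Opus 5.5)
Your skeleton matches the paper's: explicit gradient, contraction against same- and cross-pattern tokens, orthogonality, and induction over $s$. Your main term is also right. With $\boldsymbol r_p=\bff(\Psi;\bfX^t,t)_p-\boldsymbol\epsilon_p$ and $\bar\zeta_w=\sum_l\zeta^w_{l,p,t}(s_0)$, the same-pattern signal per query is about $v_t^2\bar\alpha_t^3 d\,\bar\zeta_u\|\boldsymbol e_u-\bar{\boldsymbol\zeta}\|^2\ge v_t^2\bar\alpha_t^3 d\,\bar\zeta_u(1-\bar\zeta_u)^2$. This needs $\|\bfmu_u\|^2=d$, which is what the paper's proof actually uses; with $\bfmu_u^\top\bfmu_u=\sqrt d$ your signal $\bar\alpha_t\sqrt d$ would not dominate the $\sqrt{d\log d}$ fluctuations.

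The gap is the step you flag: that is where the proof of \eqref{same mean W} lives, and neither of your fallbacks closes it. The diagonal key $i=p$ enters through $-\boldsymbol\epsilon_p^\top\bfx_p^t$, and $\mathbb{E}[\boldsymbol\epsilon_p^\top\bfx_p^t]=\sqrt{1-\bar\alpha_t}\,d$ is of the same order as the signal inner products. So there is no $1/\sqrt d$ suppression. The combined diagonal contribution is about $v_t[\bfs_p]_p\,\bar\alpha_t^2 d\,(1-\bar\zeta_u)\,[v_t(\bar\alpha_t\rho^2+1-\bar\alpha_t)-\sqrt{1-\bar\alpha_t}]$, which is linear rather than quadratic in $v_t$. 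Its sign is dictated by $v_t^{(s_0)}$, and the structure of $\bfW^{(s_0)}$ only enters through the weight $[\bfs_p]_p$. So the missing ingredient is joint control of $v_t^{(s)}$ along the trajectory.

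The paper supplies it in two steps. It treats the first step separately: $\bfW^{(0)}=0$, the first $\bfW$-update barely moves the attention, and one $v$-step gives $0<v_t^{(1)}\lesssim\eta\sqrt{1-\bar\alpha_t}$. It then argues that $v_t^{(s)}$ stays on the same side of $\sqrt{1-\bar\alpha_t}/(\bar\alpha_t\rho^2+1-\bar\alpha_t)$ as $v_t^{(1)}$. This fixes the sign of $1-\bar\alpha_t-\sqrt{1-\bar\alpha_t}/v_t^{(s_0)}$ and lets it drop the term.

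Be aware, however, that this rests on the displayed residual having $\boldsymbol\epsilon_p$-coefficient $-(v_t\sqrt{1-\bar\alpha_t}-1)$. The correct coefficient is $+(v_t\sqrt{1-\bar\alpha_t}-1)$, as in the proof of Lemma~\ref{lemma: stage 2}. With it, the same sign information makes the diagonal term nonpositive whenever $0<v_t^{(s_0)}<\sqrt{1-\bar\alpha_t}/(\bar\alpha_t\rho^2+1-\bar\alpha_t)$. Your worry is therefore justified, and the term has to be beaten by magnitude. Relative to the main term it is at most about $\frac{[\bfs_p]_p}{\bar\zeta_u}\cdot\frac{|v_t(\bar\alpha_t\rho^2+1-\bar\alpha_t)-\sqrt{1-\bar\alpha_t}|}{v_t\bar\alpha_t(1-\bar\zeta_u)}$, with $[\bfs_p]_p/\bar\zeta_u\approx 1/|\mathcal S_u^{\bfY}|$ under near-uniform in-cluster attention. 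That requires three things:
\begin{itemize}
\item a positive lower bound on $v_t^{(s_0)}$, which fails at $s_0=0$, where $v_t^{(0)}=O(\sqrt{\log d/d})$ has random sign;
\item condition (iii);
\item near-uniform in-cluster attention, which is where an inductive hypothesis on $\bfW$ genuinely enters (the paper's analogue is the concentration step giving \eqref{zeta_uniform}).
\end{itemize}

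Your argument for \eqref{qk small eqnlemma} also fails as written. The attention average $\bfX^t\bfs_p$ has mean $\sqrt{\bar\alpha_t}\sum_w\bar\zeta_w\bfmu_w$, so a key with $Y_i=u'$ does not lose its signal. Indeed $\boldsymbol r_p^\top(\bfx_i^t-\bfX^t\bfs_p)\approx v_t\bar\alpha_t d\,\langle\boldsymbol e_u-\bar{\boldsymbol\zeta},\boldsymbol e_{u'}-\bar{\boldsymbol\zeta}\rangle$ while $\tilde{\bfx}_k^{t\top}\bfx_i^t\approx\bar\alpha_t d$. The cross contraction is therefore about $v_t^2\bar\alpha_t^3 d\,\bar\zeta_{u'}\langle\boldsymbol e_u-\bar{\boldsymbol\zeta},\boldsymbol e_{u'}-\bar{\boldsymbol\zeta}\rangle$, the same order as the same-pattern one. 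At $\bfW=0$ with two balanced patterns the two are equal and opposite.

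So the factor $\log d/\sqrt d$ cannot come from magnitudes. The one-sided inequality holds only because of the sign, via $\langle\boldsymbol e_u-\bar{\boldsymbol\zeta},\boldsymbol e_{u'}-\bar{\boldsymbol\zeta}\rangle=\|\bar{\boldsymbol\zeta}\|^2-\bar\zeta_u-\bar\zeta_{u'}\le-\bar\zeta_{u'}\le 0$ whenever $\bar\zeta_u=\max_w\bar\zeta_w$. This is what the paper relies on when it says $\boldsymbol r_p^\top\bfx_i^t$ is of order $-d$ with positive part at most $\sqrt d\log d$. Your induction therefore has to carry the property that each query's own pattern receives the largest attention mass. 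This is not automatic: at $\bfW^{(0)}=0$, if a third pattern dominates the data point (pattern fractions $0.1$, $0.1$, $0.8$, say), the leading cross term is positive.
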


\begin{lemma}\label{lemma: stage 1}
    For any $\epsilon\in (0, \delta^{\Theta(1)})$, when the number of iterations satisfies
\vspace{-0.mm}
\begin{equation}
    I_1\geq \Omega((\epsilon^{-1}+\nu_{\min}^{\tilde{\boldsymbol{\pi}}}(K)^{-3})\eta^{-1}\nu_{\min}^{\tilde{\boldsymbol{\pi}}}(K)^{-1}\textrm{SNR}^{-3}),
\end{equation}
 $P\geq \Omega(\nu_{\min}^{\tilde{\boldsymbol{\pi}}}(K)^{-1}(\rho^2+1)\epsilon^{-1}\log d)$, $d\geq \Omega(\epsilon^{-1}\log (\epsilon^{-1}\nu_{\min}^{\tilde{\boldsymbol{\pi}}}(K)^{-1}))$, and $T\geq\Omega(\log d)$, with the step size $\eta_1\leq O(1)$, then w.h.p., the learned model returns $\bfW^{(I_1)}$ such that
\begin{equation}
    \Big\|\frac{1}{T}\sum_{t=1}^T\nabla_\bfW \mathbb{E}_{\bfX^0, \bfE}\|\bff(\Psi^{(I_1)};\sqrt{\bar{\alpha}_t}\bfX^{0}+\sqrt{1-\bar{\alpha}_t}\bfE, t)-\bfE\|_F^2/(dP)\Big\|\leq \epsilon^2\cdot \eta \cdot K^{-1}\delta(\tilde{\boldsymbol{\pi}})^{-1}\cdot \log(I_1(\rho^2+1)\epsilon^{-1}).
\end{equation}
\end{lemma}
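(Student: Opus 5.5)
We prove Lemma \ref{lemma: stage 1} by tracking the two-scale dynamics of $\bfW^{(s)}$ and then showing that the gradient must be small once the attention has concentrated. We work in the basis spanned by the patterns $\{\bfmu_u\}_{u=1}^M$ and, outside this span, the noise directions. Since $\bfW^{(0)}=0$ and every gradient is a sum of outer products of tokens, we can write $\bfW^{(s)}=\sum_{u,u'}\beta^{(s)}_{u,u'}\bfmu_u\bfmu_{u'}^\top/d+\bfW_\perp^{(s)}$. Lemma \ref{lemma: W training} then controls the coefficients. By (\ref{same mean W}), the diagonal coefficients $\beta_{u,u}^{(s)}$ grow at each step by at least
\[
\eta\,\nu_u^{\tilde{\boldsymbol{\pi}}}(K)\,\mathbb{E}_t\big[(v_t^{(s)})^2\bar{\alpha}_t^3(1-\textstyle\sum_l\zeta^u_{l,p,t})^2\textstyle\sum_i\zeta^u_{i,p,t}\big].
\]
By (\ref{qk small eqnlemma}), the off-diagonal and noise components grow at most a $\log d/\sqrt d$ fraction as fast. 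Concentration of the noise cross terms $\bfmu_u^\top\boldsymbol\epsilon_p$ and $\boldsymbol\epsilon_p^\top\boldsymbol\epsilon_{p'}$ (Hoeffding, with $d$ large) makes the query--key logits for same-pattern pairs approximately $\bar{\alpha}_t\beta_{u,u}$, up to $O(\log d/\sqrt d)$ relative error.

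The core is a lower bound on the per-step growth. We plan three phases.

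First, we show the $v_t$ quickly become order $\sqrt{1-\bar\alpha_t}/(1-\bar\alpha_t+\rho^2\bar\alpha_t)$, or at least bounded below. For fixed $\bfW$ the loss is quadratic in $v_t$, so $(v_t^{(s)})^2$ is bounded below by a constant after $O(\eta^{-1})$ steps.

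Second, in the early phase $\sum_i\zeta^u_{i,p,t}$ is about the fraction of tokens with pattern $u$. This fraction is at least $\Omega(\nu_u)$ with high probability by Chernoff (Lemma \ref{lemma: chernoff}), using $P\gtrsim\nu_{\min}^{-1}\log d$. Averaging $\bar\alpha_t^3$ over $t$ produces the SNR factor, giving per-step growth $\gtrsim\eta\,\nu_{\min}^{2}\,\textrm{SNR}^3$. After $\Theta(\eta^{-1}\nu_{\min}^{-4}\textrm{SNR}^{-3})$ steps, $\beta_{u,u}$ reaches a constant for every $u$, and the attention mass on same-pattern keys passes $1/2$.

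Third, in the late phase $1-\sum_l\zeta^u_{l,p,t}\approx C e^{-\bar\alpha_t\beta_{u,u}}$. The growth then obeys $\beta'\gtrsim\eta\nu\,\textrm{SNR}^3 e^{-2\beta}$, so $\beta$ grows logarithmically, $\beta^{(s)}\approx\frac12\log(s\eta\nu\,\textrm{SNR}^3)$. This needs $\epsilon^{-1}\eta^{-1}\nu^{-1}\textrm{SNR}^{-3}$ iterations to reach $e^{-2\beta}\lesssim\epsilon$, which is exactly the $\epsilon^{-1}$ term in $I_1$. It also matches (\ref{eqn: qk same}), with the $K\delta(\tilde{\boldsymbol{\pi}})$ factor coming from the ratio of pattern fractions inside one data point.

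Finally, we bound the gradient norm at $I_1$. Every term of $\nabla_\bfW$ carries a factor of the off-pattern attention mass $1-\sum_l\zeta^u_{l,p,t}\lesssim\epsilon K^{-1}\delta^{-1}$. There is also a within-pattern fluctuation term, in which the softmax deviates from uniform among same-pattern keys by $\Theta(\epsilon)$ as in (\ref{eqn: attention_uniform}); its contribution stays of order $\epsilon$ times the residual, which is $O(\epsilon)$ because $P\gtrsim(\rho^2+1)\epsilon^{-1}\log d/\nu_{\min}$ makes the empirical mean error small. Multiplying these factors and the accumulated $\log$ of the weight norm gives $\epsilon^2\eta K^{-1}\delta^{-1}\log(I_1(\rho^2+1)\epsilon^{-1})$; here the $\eta$ comes from expressing the bound through the per-step update size.

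The main obstacle is the second phase, uniformly over all $u$. We must show that frequent patterns saturating first do not stall or distort the learning of rare patterns through the $(1-\sum\zeta)^2$ coupling and the shared $v_t$. We also need the noise component $\bfW_\perp$ to stay negligible over $I_1$ steps. We would handle both by an induction hypothesis, maintained at every $s\le I_1$, that all off-diagonal entries are at most $(\log d/\sqrt d)\max_u\beta_{u,u}$. Propagating it with (\ref{qk small eqnlemma}) uses $d\gtrsim\epsilon^{-1}\log(\epsilon^{-1}\nu_{\min}^{-1})$ to absorb the logarithmic growth.
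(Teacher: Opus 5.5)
Your phases 2 and 3 follow the paper's argument in different coordinates. The paper tracks the same-pattern logit $Z^u_{j,j'}(s)$ (your $\beta_{u,u}$) and drives it with (\ref{same mean W}). It switches regimes at $Z^u=\log\frac{1-\nu_u}{\nu_u}$, which is reached by $I_0(u)=1/(\gamma(u)\nu_u^3)$; this is your $\nu^{-4}$ term. It then obtains $e^{2Z^u}\gtrsim\gamma(u)(\frac{1-\nu_u}{\nu_u})^2(s-I_0(u))$, which is your $\epsilon^{-1}\eta^{-1}\nu^{-1}\textrm{SNR}^{-3}$ term. Your $v_t$ warm-up and off-diagonal induction make explicit what the paper leaves implicit: it drops $(v_t^{(s)})^2$ when converting $\gamma(u)^{-1}$ to $\textrm{SNR}^{-3}$, and it handles uniformity in $u$ through $\epsilon\le\delta^{\Theta(1)}$.

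\textbf{The gap is in your last step.} Phase 3 stops when $(\frac{1-\nu_u}{\nu_u})^2e^{-2\beta}\lesssim\epsilon$. So at $I_1$ the off-pattern mass $1-\sum_l\zeta^u_{l,p,t}\approx\frac{1-\nu_u}{\nu_u}e^{-\beta}$ is only $O(\sqrt{\epsilon})$; this is exactly the $1-\sqrt{\epsilon}$ in (\ref{eqn: attention_concentration}). You then use $1-\sum_l\zeta^u_{l,p,t}\lesssim\epsilon K^{-1}\delta(\tilde{\boldsymbol{\pi}})^{-1}$, which your own growth law delivers only after order $\epsilon^{-2}\eta^{-1}\nu^{-1}\textrm{SNR}^{-3}$ iterations. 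The pattern-direction gradient is quadratic in the off-pattern mass (the factor $(1-\sum_l\zeta^u_{l,p,t})^2$ in (\ref{same mean W})). Hence your argument gives $O(\gamma(u)\epsilon)$ at $I_1$, not $\epsilon^2$.

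It is also not true that every term of $\nabla_\bfW$ carries the off-pattern factor. The Jacobian of $\sum_i\zeta_i\bfx_i^t$ in direction $\Delta$ is $\sum_i\zeta_i(\bfx_i^t-\bar{\bfx})(\bfx_i^t-\bar{\bfx})^\top\Delta\bfx_p^t/d$. Its within-pattern covariance part is present even when the in-pattern weights are exactly uniform. So it must be bounded separately, through the noise part of the residual and the $1/|\mathcal{S}_u|$ self-weight, not through the $\Theta(\epsilon)$ non-uniformity.

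The paper's proof makes the same jump. It derives $e^{2Z}\gtrsim(\frac{1-\nu_u}{\nu_u})^2\epsilon^{-1}$ in (\ref{Z lower bound}) and then writes $\gamma(u)(\frac{1-\nu_u}{\nu_u})^2e^{-2Z}\le\gamma(u)\epsilon^2$. It also bounds the operator norm only through the quadratic forms $\bfmu_u^\top(\cdot)\bfmu_u$, which is where the within-pattern term is lost. To get the stated $\epsilon^2$, you must either run phase 3 until $e^{-2\beta}\lesssim\epsilon^2$ (replacing $\epsilon^{-1}$ by $\epsilon^{-2}$ in $I_1$), or settle for an $O(\gamma(u)\epsilon)$ bound.

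The remaining factors $\eta$ and $\log(I_1(\rho^2+1)\epsilon^{-1})$ are reverse-engineered rather than derived in your sketch. A gradient of $L$ carries no $\eta$; it appears only because $\gamma(u)$ is defined with $\eta$ in it. The logarithm looks like $I_2$ from Lemma \ref{lemma: stage 2}, not an accumulated weight norm.
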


\begin{lemma}\label{lemma: stage 2}
    For any $\epsilon\in (0, \delta^{\Theta(1)})$, $s>\Omega(I_1)$, 
with the step size $\eta\leq O((\max\{\rho,1\}+\epsilon)^{-1})$, then $v_t$ converges linearly to $v_t^*$ with
\begin{equation}
    |v_t^{(s)}-v_t^*|\leq (1-2\eta (\rho\sqrt{\bar{\alpha}_t}+\sqrt{1-\bar{\alpha}_t}+\epsilon))^s|v_t^{(I_1)}-v_t^*|,
\end{equation}
where
\begin{equation}
    |v_t^*-\sqrt{1-\bar{\alpha}_t}/(\bar{\alpha}_t\rho^2+1-\bar{\alpha}_t)|\leq \epsilon/(\bar{\alpha}_t\rho^2+1-\bar{\alpha}_t).
\end{equation}
When (i) the number of iterations satisfies
\vspace{-0.mm}
\begin{equation}
    I_2\geq \Omega(\log \epsilon^{-1}I_1(1+\rho^2)),
\end{equation}
and (ii) $P\geq \Omega(\nu_{\min}^{\tilde{\boldsymbol{\pi}}}(K)^{-1}(\rho^2+1)\epsilon^{-1}\log d)$, then w.h.p., we have for $s\geq \Omega(I_1+I_2)$
\begin{equation}
    |v_t^{(s)}-v_t^*|\leq \epsilon.
\end{equation}
\end{lemma}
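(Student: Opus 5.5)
Once $\bfW$ is frozen (approximately) after stage one, the problem in each $v_t$ is a one-dimensional quadratic. I will exploit that and make the argument a perturbed linear-convergence statement. The gradient with respect to $v_t$ only involves time step $t$. Write $\bfA_t^{(s)}=\bfX^t-\bfX^t\mathrm{softmax}(\bfX^t{}^\top\bfW^{(s)}\bfX^t/d)$. Then
\begin{equation*}
\nabla_{v_t}L=\frac{1}{dPT}\Big(v_t\,\mathbb{E}\|\bfA_t^{(s)}\|_F^2-\mathbb{E}\langle \bfA_t^{(s)},\bfE\rangle\Big).
\end{equation*}
I define $v_t^*=\mathbb{E}\langle\bfA_t,\bfE\rangle/\mathbb{E}\|\bfA_t\|_F^2$, computed with $\bfW$ at (or after) $I_1$. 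The update is then $v_t^{(s+1)}-v_t^*=(1-\eta c_t)(v_t^{(s)}-v_t^*)$ with curvature $c_t=\mathbb{E}\|\bfA_t\|_F^2/(dP)$ (absorbing the $T$ factor into the step size as the paper does). This gives the contraction, provided $c_t$ is bounded below and above.

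\textbf{Step 1: evaluate $c_t$ and $v_t^*$ using stage one.} By Lemma~\ref{lemma: stage 1} and the mean-denoising consequences (the conclusion (\ref{eqn: trained W}) established via Proposition~\ref{prpst: W} and Corollary~\ref{cor: attention}), with high probability $\bfX^t\mathrm{softmax}(\cdot)=\sqrt{\bar\alpha_t}\bfM_{\bfY}+\bfR$ with $\|\bfR\|_F^2/(dP)\lesssim(\rho^2+1)\log d/(P\nu_{\min})\le\epsilon$ under condition (ii) on $P$. Hence $\bfA_t=\sqrt{\bar\alpha_t}(\bfX^0-\bfM_{\bfY})+\sqrt{1-\bar\alpha_t}\bfE-\bfR$.

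Since $\bfX^0-\bfM_{\bfY}$ is a $\mathcal{N}(0,\rho^2\bfI)$ matrix independent of $\bfE$, we get $c_t=\bar\alpha_t\rho^2+1-\bar\alpha_t\pm O(\sqrt{\epsilon}\,\cdot)$ (cross terms handled by Cauchy--Schwarz), and $\mathbb{E}\langle\bfA_t,\bfE\rangle/(dP)=\sqrt{1-\bar\alpha_t}\pm O(\epsilon)$. The ratio then yields $|v_t^*-\sqrt{1-\bar\alpha_t}/(\bar\alpha_t\rho^2+1-\bar\alpha_t)|\le\epsilon/(\bar\alpha_t\rho^2+1-\bar\alpha_t)$. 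The low-probability bad event is absorbed using Gaussian tails and the boundedness of softmax averages.

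\textbf{Step 2: contraction.} With $c_t$ between constants times $\rho\sqrt{\bar\alpha_t}+\sqrt{1-\bar\alpha_t}+\epsilon$ and at most $\max\{\rho,1\}^2+\epsilon$, the step size $\eta\le O((\max\{\rho,1\}+\epsilon)^{-1})$ makes $1-\eta c_t\in(0,1)$. Iterating gives the stated geometric bound. The initialization $v_t^{(0)}\sim\mathcal{N}(0,1/d)$ is small, and during stage one $v_t^{(s)}$ grows at most polynomially in $I_1$, so $|v_t^{(I_1)}-v_t^*|\lesssim I_1(1+\rho^2)$. Choosing $I_2\gtrsim\log(\epsilon^{-1}I_1(1+\rho^2))$ then gives $|v_t^{(s)}-v_t^*|\le\epsilon$.

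\textbf{Main obstacle.} The main obstacle is that $\bfW^{(s)}$ still moves after $I_1$, so $v_t^*$ and $c_t$ drift. I would handle this with the gradient bound of Lemma~\ref{lemma: stage 1}: it gives a per-step drift of $\bfW$ of order $\eta\epsilon^2 K^{-1}\delta^{-1}\log(\cdot)$, which is small enough that the attention concentration (\ref{eqn: qk same})--(\ref{eqn: qk similar}) is preserved. I would then treat $v_t^{*,(s)}$ as a slowly moving target and bound its variation by $O(\epsilon)$ through a perturbed-contraction recursion $|e_{s+1}|\le(1-\eta c_t)|e_s|+|v_t^{*,(s+1)}-v_t^{*,(s)}|$.
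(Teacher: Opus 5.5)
Your route is the paper's. With $\bfW$ held at (or near) $\bfW^{(I_1)}$, each $v_t$ minimizes a one-dimensional quadratic. The paper's $v_t^*$, written via $\beta_1,\beta_2$, is exactly your ratio $\mathbb{E}\langle\bfA_t,\bfE\rangle/\mathbb{E}\|\bfA_t\|_F^2$. The contraction, the crude bound $|v_t^{(I_1)}-v_t^*|\lesssim I_1(1+\rho^2)$ and the choice of $I_2$ also coincide. Your perturbed-contraction treatment of the drift of $\bfW$ after $I_1$ is a refinement the paper does not attempt; it simply evaluates everything at $\bfW^{(I_1)}$. It is fine in outline. However, the gradient bound of Lemma~\ref{lemma: stage 1} carries $(v_t^{(s)})^2$ through the factor $\gamma(u)$ in its proof, so it must be bootstrapped with the boundedness of $v_t^{(s)}$ that your contraction provides.

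The genuine gap is in Step~1, which does not deliver the second display of the lemma. Write $\boldsymbol{N}=\bfX^t-\sqrt{\bar\alpha_t}\bfM_{\bfY}$, so $\bfA_t=\boldsymbol{N}-\boldsymbol{R}$. Cauchy--Schwarz with $\|\boldsymbol{R}\|_F^2/(dP)\le\epsilon$ bounds $\mathbb{E}\langle\boldsymbol{N},\boldsymbol{R}\rangle/(dP)$ and $\mathbb{E}\langle\bfE,\boldsymbol{R}\rangle/(dP)$ only by $O(\sqrt{\epsilon})$. This affects the numerator as well, so your ``$\pm O(\epsilon)$'' there is unjustified. A ratio whose numerator and denominator are each known only to $\pm O(\sqrt{\epsilon})$ is guaranteed to be within $O(\sqrt{\epsilon})$ of $\sqrt{1-\bar\alpha_t}/(\bar\alpha_t\rho^2+1-\bar\alpha_t)$, not within $\epsilon/(\bar\alpha_t\rho^2+1-\bar\alpha_t)$. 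Rescaling $\epsilon$ does not help without strengthening condition (ii) to $P\gtrsim\nu_{\min}^{\tilde{\boldsymbol{\pi}}}(K)^{-1}(\rho^2+1)\epsilon^{-2}\log d$.

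To see exactly what is missing, note that $\bfA_t$ is $\sigma(\bfX^t)$-measurable, hence $L^2$-orthogonal to $\bfE-\mathbb{E}[\bfE|\bfX^t,\bfY]$. This gives
\[
v_t^*-v_t^{\mathrm{o}}=v_t^{\mathrm{o}}\,\frac{\mathbb{E}\langle\bfA_t,\boldsymbol{R}\rangle}{\mathbb{E}\|\bfA_t\|_F^2},\qquad v_t^{\mathrm{o}}=\frac{\sqrt{1-\bar\alpha_t}}{1-\bar\alpha_t+\rho^2\bar\alpha_t}.
\]
So you need $|\mathbb{E}\langle\bfA_t,\boldsymbol{R}\rangle|\lesssim\epsilon dP$, which uses the structure of $\boldsymbol{R}$ and not merely its norm. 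For exactly uniform attention on $\mathcal{S}_u^{\bfY}$, $\boldsymbol{R}_p$ is the within-pattern noise average $\bar{\boldsymbol{N}}$, and $\mathbb{E}\langle\boldsymbol{N}_p-\bar{\boldsymbol{N}},\bar{\boldsymbol{N}}\rangle=0$ exactly. What remains is to show that the leakage and non-uniformity corrections are only weakly correlated with each token's own noise.

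The paper covers this step by controlling $\beta_1,\beta_2$ (the projections of $\boldsymbol{\epsilon}_p'$ and $\boldsymbol{\epsilon}_p$ on the residual) at the scale $\|\boldsymbol{R}_p\|\lesssim\sqrt{d\epsilon}$, treating them as $\mathcal{N}(0,\|\boldsymbol{R}_p\|^2)$. The cross terms are then $O(\sqrt{\epsilon/d})$ per dimension, and condition (i), $d\gtrsim\epsilon^{-1}$, converts them into $O(\epsilon)$ (see (\ref{vts gt}) and the proof of Proposition~\ref{prpst: mmse}). Notice that your argument never uses condition (i). Since $\boldsymbol{R}_p$ depends on $\boldsymbol{\epsilon}_p$, justifying the paper's Gaussian treatment amounts to the weak-correlation statement above.

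The same orthogonality, now applied to the $\sigma(\bfX^t,\bfY)$-measurable $\boldsymbol{R}$, shows the following at fixed $\bfW$: the per-step loss at $v_t=v_t^{\mathrm{o}}$ exceeds the oracle risk by exactly $(v_t^{\mathrm{o}})^2\mathbb{E}\|\boldsymbol{R}\|_F^2/(2dP)$. So the downstream $O(\epsilon)$ loss bound only needs $v_t^{(s)}\approx v_t^*$, but the lemma as stated requires the sharper estimate.

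Finally, if you meant to take (\ref{eqn: trained W}) as an input, note that the paper reads it off from (\ref{stage 2 second term}), which is proved inside this lemma. The residual bound must instead be derived directly, as the paper's proof does first, from three ingredients:
\begin{itemize}
\item (\ref{zeta_uniform}) with Hoeffding (\ref{Yi=u hoeffding});
\item the leakage bound of Corollary~\ref{cor: attention};
\item a Chernoff bound giving $|\mathcal{S}_u^{\bfY}|\gtrsim(\rho^2+1)\epsilon^{-1}\log d$ for every pattern present, which is where condition (ii) enters.
\end{itemize}
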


\section{Proof of Main Theorems}\label{sec: proof theory}
\subsection{Proof of Theorem \ref{thm: training}}
\begin{proof}
Following the analytical framework used in \citep{LWLC23, LWLW23, LLSW24,LWZL24, LWLC24_cot0, ZLYC25, LZZC25, SZLW25, LWLC24_cot, ZLSR25, LLCC25, SLZW26}, we provide a convergence analysis of the DDPM training. This part mainly introduces the proof steps that combine the lemmas of gradient updates of different model parameters and training stages to derive the final convergence conclusion. The overall proof idea is summarized in Section \ref{subsec: proof idea}. 
Note that the oracle denoising risk with the mean matrix of $\bfX^t$ known is
\begin{equation}
    R_{oracle}=\frac{\rho^2\bar{\alpha}_t}{\rho^2\bar{\alpha}_t+1-\bar{\alpha}_t}.
\end{equation}
Therefore, by the Mean Value Theorem, for $\tilde{v}$ between $v_t^{(I_2)}$ and $\frac{\sqrt{1-\bar{\alpha}_t}}{1-\bar{\alpha}_t+\rho^2\bar{\alpha}_t}$,
\begin{equation}
    \begin{aligned}
        &\mathbb{E}_{\bfE,\bfX^0} [\|v_t^{(I_2)}(\bfx_p^t-\sum_{i=1}^P \bfx_i^t\text{softmax}_p(\frac{{\bfx_i^t}^\top\bfW^{(I_1)}\bfx^t_p}{d}))-\boldsymbol{\epsilon}_p\|^2/d]-R_{oracle}\\
        \leq & \epsilon\cdot 2(\tilde{v}\cdot \mathbb{E}_{\bfE,\bfX^0}[\bar{\alpha}_t\rho^2+1-\bar{\alpha}_t+\|\sqrt{\bar{\alpha}}_t\bfmu_u-\sum_{i=1}^P \bfx_i^t\text{softmax}_p(\frac{{\bfx_i^t}^\top\bfW^{(I_1)}\bfx^t_p}{d})\|^2/d\\
        &+\frac{2\beta_1\rho\sqrt{\bar{\alpha}_t}}{d}+\frac{2\beta_2\sqrt{1-\bar{\alpha}_t}}{d}]-(\sqrt{1-\bar{\alpha}_t}+\frac{\beta_2}{d}))\\
        \lesssim & O(\epsilon),
    \end{aligned}
\end{equation}
and we can obtain that for any $t$,
\begin{equation}
    \mathbb{E}_{\bfX^0,\bfE}\|\bff(\Psi^{(S)};\sqrt{\bar{\alpha}_t}\bfX^{0}+\sqrt{1-\bar{\alpha}_t}\bfE, t)-\bfE\|_F^2/(dP)\|\leq R_{oracle}+O(\epsilon).
\end{equation}
Note that
\begin{equation}
    I_1+I_2\lesssim I_1+\log (1+\rho^2)\epsilon^{-1}.
\end{equation}
Therefore, by combining Lemma \ref{lemma: stage 1} and \ref{lemma: stage 2}, we can obtain the following result. For any $\epsilon\in (0,\delta^{\Theta(1)})$, when the number of iterations satisfies
\begin{equation}
    s\geq \Omega((\epsilon^{-1}+\nu_{\min}^{\tilde{\boldsymbol{\pi}}}(K)^{-3})\eta^{-1}\nu_{\min}^{\tilde{\boldsymbol{\pi}}}(K)^{-1}\textrm{SNR}^{-3}+\log (1+\rho^2)\epsilon^{-1}),
\end{equation}
and the number of tokens 
\begin{equation}
    P\geq \Omega(\nu_{\min}^{\tilde{\boldsymbol{\pi}}}(K)^{-1}(\rho^2+1)\epsilon^{-1}\log d),
\end{equation}
with the step size $\eta\leq (\max\{\rho,1\}+\epsilon)^{-1}$, then with a high probability, the learned model $\Psi^{(S)}$ satisfies
\begin{equation}
\begin{aligned}
    &\frac{1}{T}\sum_{t=1}^T\mathbb{E}_{\bfX^0\sim\mathcal{D}(\tilde{\boldsymbol{\pi}},\{\boldsymbol{\mu_i}\}_{i=1}^M,\rho),\bfE}\big[\|\bff(\Psi^{(S)};\sqrt{\bar{\alpha}_t}\bfX^{0}+\sqrt{1-\bar{\alpha}_t}\bfE, t)-\bfE\|_F^2/dP\big]\\
    \leq &R_{\textrm{oracle}}+O(\epsilon).
    \end{aligned}
\end{equation}
Combining Corollary \ref{cor: R_oracle}, we have
\begin{equation}
\begin{aligned}
    &\frac{1}{T}\sum_{t=1}^T\mathbb{E}_{\bfX^0\sim\mathcal{D}(\tilde{\boldsymbol{\pi}},\{\boldsymbol{\mu_i}\}_{i=1}^M,\rho),\bfE}\big[\|\bff(\Psi^{(S)};\sqrt{\bar{\alpha}_t}\bfX^{0}+\sqrt{1-\bar{\alpha}_t}\bfE, t)-\bfE\|_F^2/dP\big]\\
    \leq &R_{\textrm{Bayes}}+O(\epsilon).
    \end{aligned}
\end{equation}

\end{proof}

\subsection{Proof of Corollary \ref{cor: K}}
\begin{proof}

\noindent Given a fixed $\tilde{\boldsymbol{\pi}}$ and $M$, when $K=1$, we have $\mathbb{E}[\pi_u]=1$ for any $u\in[M]$. Then, 
\begin{equation}
    \mathbb{E}[\pi_u]=\mathbb{E}_{\bfZ\in\{0,1\}^M, \|\bfZ\|_0=K}\big[\frac{\tilde{\pi}_u}{\bfZ^\top\tilde{\boldsymbol{\pi}}}\big],
\end{equation}

\begin{equation}
    \min_{u\in[M]}\{\mathbb{E}[\pi_u]\}=\frac{1}{K}.
\end{equation}
Since that 
\begin{equation}
    \sum_{u\in [M]}\mathbb{E}[\pi_u]=1,
\end{equation}
$K=1$ is the case where $\min_{u\in[M]}\{\mathbb{E}[\pi_u]\}\cdot \frac{K}{M}$ reaches its maximal. In this case, we have
\begin{equation}
    I_1=(\epsilon^{-1}+1) \eta^{-1}M \cdot \textrm{SNR}^{-3}
\end{equation}

\noindent When $K=M$, we have $\mathbb{E}[\pi_u]=\tilde{\pi}_u$ for any $u\in[M]$. Then,
\begin{equation}
    \min_{u\in[M]}\{\mathbb{E}[\pi_u]\}\cdot \frac{K}{M}=\min_{u\in[M]}\{\tilde{\pi}_u\}.
\end{equation}
By Jensen's inequality, 
\begin{equation}
\begin{aligned}
    \mathbb{E}[\pi_u]=&\mathbb{E}_{\bfZ\in\{0,1\}^M, \|\bfZ\|_0=K}\big[\frac{\tilde{\pi}_u}{\bfZ^\top\tilde{\boldsymbol{\pi}}}\big]\\
    =& \mathbb{E}_{\bfZ\in\{0,1\}^M, \|\bfZ\|_0=K}\big[\frac{\tilde{\pi}_u}{(\bfZ^\top\tilde{\boldsymbol{\pi}}-\tilde{\pi}_u)+\tilde{\pi}_u}\big]\\
    \geq & \frac{\tilde{\pi}_u}{\mathbb{E}_{\bfZ\in\{0,1\}^M, \|\bfZ\|_0=K}[\bfZ^\top\tilde{\boldsymbol{\pi}}-\tilde{\pi}_u]+\tilde{\pi}_u}\\
    =& \frac{\tilde{\pi}_u}{\frac{(K-1)(1-\tilde{\pi}_u)}{M-1}+\tilde{\pi}_u}
    \end{aligned}
\end{equation}
Then, for any $u\in [M]$, we have
\begin{equation}
    \mathbb{E}_{K=M}[\pi_{u}]=\tilde{\pi}_{u^*}=\frac{\tilde{\pi}_{u}}{\frac{(M-1)(1-\tilde{\pi}_{u})}{M-1}+\tilde{\pi}_u}\leq \frac{\tilde{\pi}_u}{\frac{(K-1)(1-\tilde{\pi}_u)}{M-1}+\tilde{\pi}_u}\leq \mathbb{E}_{K< M}[\pi_u],
\end{equation}
where the first inequality comes from the fact that $g(K)=\frac{\tilde{\pi}_u}{\frac{(K-1)(1-\tilde{\pi}_u)}{M-1}+\tilde{\pi}_u}$ is a decreasing function of $K$. Therefore, $K=M$ is the case where $\min_{u\in[M]}\{\mathbb{E}[\pi_u]\}\cdot \frac{K}{M}$ reaches its minimal. In this case, we have
\begin{equation}
    I_1=(\epsilon^{-1}+ \min_{u\in [M]}\{\tilde{\pi}_u\}^{-3})\eta^{-1} \min_{u\in [M]}\{\tilde{\pi}_u\}^{-1}\textrm{SNR}^{-3}
\end{equation}
\end{proof}

\subsection{Proof of Theorem \ref{thm: score matching}}
\begin{proof}
By Fisher identity, we have
\begin{equation}
\begin{aligned}
    \nabla_{\bfX^t}\log q(\bfX^t)=&\mathbb{E}[\nabla_{\bfX^t}\log q(\bfX^t|\bfX^0)|\bfX^t]\\
    =& \mathbb{E}\Big[-\frac{1}{1-\bar{\alpha}_t}(\bfX^t-\sqrt{\bar{\alpha}_t}\bfX^0)\Big|\bfX^t\Big]\\
    =& -\frac{1}{\sqrt{1-\bar{\alpha}_t}}\mathbb{E}[\bfE |\bfX^t]
    \end{aligned}
\end{equation}
where the second step is by $\bfX^T=\sqrt{\bar{\alpha}_t}\bfX^0+\sqrt{1-\bar{\alpha}_t}\bfE$. 
Let \begin{equation}
    s_{\Psi^{(S)}}(\bfX^t,\bfE,t)=-\frac{1}{\sqrt{1-\bar{\alpha}_t}}\bff(\bfW^{(S)},\bfv^{(S)};\sqrt{\bar{\alpha}_t}\bfX^{0}+\sqrt{1-\bar{\alpha}_t}\bfE, t)
\end{equation}
for the required number of iterations in (\ref{eqn: iteration}). 
Then, we can obtain that with $s\geq \Omega(I_1+I_2)$,
\begin{equation}
\begin{aligned}
& \mathbb{E}_{\bfX^0,\bfE}\big[\|\nabla_{\bfX^t}\log q(\bfX^t)-s_{\Psi^{(S)}}(\bfX^t,\bfE,t)\|^2\big]\\
=& \mathbb{E}_{\bfX^0,\bfE}\Big[\Big\|\frac{1}{\sqrt{1-\bar{\alpha}_t}}\mathbb{E}[\bfE |\bfX^t]-\frac{1}{\sqrt{1-\bar{\alpha}_t}}\mathbb{E}[\bfE |\bfX^t, \bfY]\\
&+\frac{1}{\sqrt{1-\bar{\alpha}_t}}\mathbb{E}[\bfE |\bfX^t,\bfY]+s_{\Psi^{(S)}}(\bfX^t,\bfE,t)\Big\|^2\Big]\\
\leq & 2\mathbb{E}_{\bfX^0,\bfE}\Big[\Big\|\frac{1}{\sqrt{1-\bar{\alpha}_t}}\mathbb{E}[\bfE |\bfX^t]-\frac{1}{\sqrt{1-\bar{\alpha}_t}}\mathbb{E}[\bfE |\bfX^t, \bfY]\Big\|^2\Big]\\
&+ 2\mathbb{E}_{\bfX^0,\bfE}\Big[\Big\|\frac{1}{\sqrt{1-\bar{\alpha}_t}}\mathbb{E}[\bfE |\bfX^t,\bfY]+s_{\Psi^{(S)}}(\bfX^t,\bfE,t)\Big\|^2\Big]\\
:=& 2C_1+2C_2.
    \end{aligned}
\end{equation}
Note that 
\begin{equation}
    \mathbb{E}[\bfE|\bfX^t, \bfY]=\frac{\sqrt{1-\bar{\alpha}_t}}{1-\bar{\alpha}_t+\rho^2\bar{\alpha}_t}(\bfX^t-\sqrt{\bar{\alpha}_t}\bfM_\bfY).
\end{equation}
Therefore,
\begin{equation}
    \begin{aligned}
        C_2=&\mathbb{E}_{\bfX^0,\bfE}\Big[\frac{1}{1-\bar{\alpha}_t}\Big\|\bff(\bfW,\bfv;\sqrt{\bar{\alpha}_t}\bfX^{0}+\sqrt{1-\bar{\alpha}_t}\bfE, t)-\mathbb{E}[\bfE|\bfX^t, \bfY]\Big\|^2\Big]\\
        =& \mathbb{E}_{\bfX^0,\bfE}\Big[\frac{\sum_{p=1}^P}{1-\bar{\alpha}_t}\Big\|(v_t^{(s)}-\frac{\sqrt{1-\bar{\alpha}_t}}{1-\bar{\alpha}_t+\rho^2\bar{\alpha}_t})(\bfx_p^t-\sum_{i=1}^P \bfx_i^t\text{softmax}_p(\frac{{\bfx_i^t}^\top\bfW^{(s)}\bfx^t_p}{d}))\\
        &+\frac{\sqrt{1-\bar{\alpha}_t}\cdot (\sum_{i=1}^P \bfx_i^t\text{softmax}_p(\frac{{\bfx_i^t}^\top\bfW^{(s)}\bfx^t_p}{d})-\sqrt{\bar{\alpha}_t}(\bfM_{\bfY})_p)}{1-\bar{\alpha}_t+\rho^2\bar{\alpha}_t}\Big\|^2\Big]\\
        \lesssim &dP\cdot \frac{1}{1-\bar{\alpha}_t}(\epsilon^2+\frac{\epsilon/d}{(1-\bar{\alpha}_t)^2})+ dP\epsilon\cdot \frac{1}{(1-\bar{\alpha}_t+\rho^2\bar{\alpha}_t)^2}\label{C2}
    \end{aligned}
\end{equation}
where the last step is by (\ref{stage 2 second term}), and 
\begin{equation}
    \begin{aligned}
        &|v_t^{(s)}-\frac{\sqrt{1-\bar{\alpha}_t}}{1-\bar{\alpha}_t+\rho^2\bar{\alpha}_t}|\\
        \leq & |v_t^{(s)}-v_t^*|+|v_t^*-\frac{\sqrt{1-\bar{\alpha}_t}}{1-\bar{\alpha}_t+\rho^2\bar{\alpha}_t}|\\
        \leq & \epsilon+\frac{\sqrt{\epsilon/d}}{\bar{\alpha}_t\rho^2+1-\bar{\alpha}_t}+O(\frac{\sqrt{1-\bar{\alpha}_t}\sqrt{\epsilon/d}}{(\bar{\alpha}_t\rho^2+1-\bar{\alpha}_t)^\frac{3}{2}})\\
        \leq & \epsilon+O(\frac{\sqrt{\epsilon/d}}{1-\bar{\alpha}_t}).\label{vts gt}
    \end{aligned}
\end{equation}
Consider the Hilbert space $\mathcal{H}:=L^2(\Omega, \mathcal{F}, \mathbb{P})$ equipped with the inner product $\left\langle X,Y\right\rangle=\mathbb{E}[XY]$, where $(\Omega, \mathcal{F}, \mathbb{P})$ is the underlying probability space. We define 
\begin{equation}
    H_\mathcal{A}:=\{X\in\mathcal{H}: X\text{ is }\mathcal{A}\text{-measurable}\}
\end{equation}
as the closed subspace for any sub-$\sigma$-algebra $\mathcal{A}\subseteq\mathcal{F}$. Let $\mathcal{G}_1=\sigma(\bfX^t)$ and $\mathcal{G}_2=\sigma(\bfX^t, \bfY)$. We have $\mathcal{G}_1\subseteq\mathcal{G}_2$. We know that $\mathbb{E}[\bfE|\bfX^t]$ and $\mathbb{E}[\bfE|\bfX^t, \bfY]$ are orthogonal projections from $\bfE$ onto $\mathcal{G}_1$ and $\mathcal{G}_2$, respectively. Then, by Pythagorean identity, we have
\begin{equation}
    \mathbb{E}[\|\bfE-\mathbb{E}[\bfE|\bfX^t]\|^2]=\mathbb{E}[\|\bfE-\mathbb{E}[\bfE|\bfX^t,\bfY]\|^2]+\mathbb{E}[\|\mathbb{E}[\bfE|\bfX^t]-\mathbb{E}[\bfE|\bfX^t, \bfY]\|^2].
\end{equation}
Hence, 
\begin{equation}
\begin{aligned}
    C_1\leq &\frac{1}{1-\bar{\alpha}_t}(\mathbb{E}[\|\bfE-\mathbb{E}[\bfE|\bfX^t]\|^2]-\mathbb{E}[\|\bfE-\mathbb{E}[\bfE|\bfX^t,\bfY]\|^2])\\
    \leq & \frac{1}{1-\bar{\alpha}_t}\cdot dP\epsilon.\label{C1}
    \end{aligned}
\end{equation}
Combining (\ref{C1}) and (\ref{C2}), we have
\begin{equation}
\mathbb{E}_{\bfX^0,\bfE}\big[\|\nabla_{\bfX^t}\log q(\bfX^t)-s_{\Psi^{(S)}}(\bfX^t,\bfE,t)\|^2\big]\leq dP\epsilon\cdot (\frac{1}{1-\bar{\alpha}_t}).
\end{equation}
By Hoeffding's inequality (\ref{hoeffding}), we have that with a probability of $1-d^{-C}$ for a large $C>1$, 
\begin{equation}
\begin{aligned}
    &\frac{1}{T}\sum_{t=1}^T\mathbb{E}_{\bfX^0,\bfE}\big[\|\nabla_{\bfX^t}\log q(\bfX^t)-s_{\Psi^{(S)}}(\bfX^t,\bfE,t)\|^2\big]\\
    \leq & \mathbb{E}_{\bfX^0,\bfE,t}\big[\|\nabla_{\bfX^t}\log q(\bfX^t)-s_{\Psi^{(S)}}(\bfX^t,\bfE,t)\|^2\big]+dP\epsilon\cdot \frac{1}{(1-\bar{\alpha}_1)}\cdot\sqrt{\frac{\log d}{T}}\\
    \lesssim & dP\epsilon(\textrm{SNR}+1),
    \end{aligned}
\end{equation}
where the first step is by Hoeffding's inequality (\ref{hoeffding}), and the last step holds if $T\geq \Omega(\log d)$. Hence,
\begin{equation}
\begin{aligned}
    &\frac{1}{dPT}\sum_{t=1}^T\mathbb{E}_{\bfX^0,\bfE}\big[\|\nabla_{\bfX^t}\log q(\bfX^t)-s_{\Psi^{(S)}}(\bfX^t,\bfE,t)\|^2\big]
    \lesssim \epsilon(\textrm{SNR}+1),
    \end{aligned}
\end{equation}

\end{proof}

\subsection{Proof of Proposition \ref{prpst: mmse}}
\begin{proof}
    From (\ref{stage 2 second term}), we have
    \begin{equation}
        \begin{aligned}
            &\frac{\|\sqrt{\bar{\alpha}}_t\bfM_{\bfY,\bfZ}-\bfX^t\text{softmax}(\bfX^t{}^\top\bfW^{(S)}\bfX^t/d)\|_F^2}{dP}\\
        \leq & O(\frac{(\rho^2+1)\log d}{\sum_{p=1}^P\mathbbm{1}[Y_p=\arg\min_{u\in[M]} \nu_u^{\tilde{\boldsymbol{\pi}}}(K)]})\\
        \lesssim & \frac{(\rho^2+1)\log d}{P\nu_{\min}^{\tilde{\boldsymbol{\pi}}}(K)}.
        \end{aligned}
    \end{equation}
    By (\ref{vts gt}), we have
    \begin{equation}
    \begin{aligned}
        |v_t^{(s)}-\frac{\sqrt{1-\bar{\alpha}_t}}{1-\bar{\alpha}_t+\rho^2\bar{\alpha}_t}|
        \leq  \epsilon+O(\frac{\sqrt{\epsilon/d}}{1-\bar{\alpha}_t})\leq O(\epsilon+\sqrt{\epsilon/d})\leq O(\epsilon),
    \end{aligned}
\end{equation}
where the last step comes from $d\gtrsim \epsilon^{-1}$. 
\end{proof}

\subsection{Proof of Corollary \ref{cor: R_oracle}}
\begin{proof}

Note that 
\begin{equation}
    R_{\mathrm{oracle}}=\frac{1}{dP}\mathbb{E}[\|\bfE-\mathbb{E}[\bfE|\bfX^t, \bfY]\|^2]=\frac{1}{dP}\mathbb{E}[\mathrm{Var}(\mathbb{E}|\bfX^t, \bfY)],
\end{equation}
while
\begin{equation}
    R_{\mathrm{Bayes}}=\frac{1}{dP}\mathbb{E}[\|\bfE-\mathbb{E}[\bfE|\bfX^t]\|^2]=\frac{1}{dP}\mathbb{E}[\mathrm{Var}(\bfE|\bfX^t)].
\end{equation}
By the law of total variance, we have
\begin{equation}
    \mathrm{Var}(\bfE|\bfX^t)=\mathbb{E}[\mathrm{Var}(\bfE|\bfX^t, Y, Z)|\bfX^t]+\mathrm{Var}(\mathbb{E}[\bfE|\bfX^t, \bfY]|\bfX^t).
\end{equation}
Therefore,
\begin{equation}
    R_{\mathrm{Bayes}}-R_{\mathrm{oracle}}=\frac{1}{dP}\mathbb{E}[\mathrm{Var}(\mathbb{E}[\bfE|\bfX^t, \bfY]|\bfX^t)]>0.
\end{equation}
Recall that
\begin{equation}
    \mathbb{E}[\bfE|\bfX^t, \bfY]=\frac{\sqrt{1-\bar{\alpha}_t}}{1-\bar{\alpha}_t+\rho^2\bar{\alpha}_t}(\bfX^t-\sqrt{\bar{\alpha}_t}\bfM_\bfY),
\end{equation}
where $\bfM_\bfY$ denotes the mean matrix of $\bfX^t$ given $\bfY$. Then,
\begin{equation}
\begin{aligned}
    \mathbb{E}[\bfX^0|\bfX^t, \bfY]=&\mathbb{E}\Big[\frac{\bfX^t-\sqrt{1-\bar{\alpha}_t}\bfE}{\sqrt{\bar{\alpha}_t}}|\bfX^t, \bfY\Big]\\
    =& \frac{\rho^2\sqrt{\bar{\alpha}_t}}{1-\bar{\alpha}_t+\rho^2\bar{\alpha}_t}\bfX^t+\frac{1-\bar{\alpha}_t}{1-\bar{\alpha}_t+\rho^2\bar{\alpha}_t}\bfM_\bfY,
    \end{aligned}
\end{equation}
\begin{equation}
    \mathrm{Var}(\mathbb{E}[\bfX^0|\bfX^t, \bfY]|\bfX^t)=(\frac{1-\bar{\alpha}_t}{1-\bar{\alpha}_t+\rho^2\bar{\alpha}_t})^2\mathrm{Var}(\bfM_\bfY|\bfX^t).
\end{equation}
Note that
\begin{equation}
\begin{aligned}
    \mathbb{E}[\mathrm{Var}(\bfM_\bfY|\bfX^t)]=&\mathbb{E}[\|\bfM_\bfY-\mathbb{E}[\bfM_\bfY|\bfX^t]\|^2]\\
    =& \mathbb{E}[\|\bfM_\bfY-\argmin_{\bfA(\bfX^t)}\mathbb{E}[\|\bfM_\bfY-\bfA(\bfX^t)\|^2]\|^2]\\
    \leq & \mathbb{E}[\|\bfM_\bfY-\bfA(\bfX^t)\|^2].
    \end{aligned}
\end{equation}
Let
\begin{equation}
    \bfA(\bfX^t)=\Big(\frac{1}{|\{i: Y_i=Y_p\}|}\sum_{i: Y_i=Y_p}\bfx_i^t\Big)_{p=1}^P.
\end{equation}
Since that
\begin{equation}
    \frac{1}{|\{i: Y_i=Y_p\}|}\sum_{i: Y_i=Y_p}\bfx_i^t\sim\mathcal{N}(\bfmu_{Y_p}, \frac{(\bar{\alpha}_t\rho^2+(1-\bar{\alpha}_t))\cdot \bfI}{|\{i: Y_i=Y_p\}|}),
\end{equation}
we have
\begin{equation}
    \mathbb{E}[\|\bfM_\bfY-\bfA(\bfX^t)\|^2]\leq \sum_{p=1}^P\frac{\rho^2}{|\{i: Y_i=Y_p\}|}=\rho^2 K.
\end{equation}
Hence, we can obtain
\begin{equation}
    R_{\mathrm{Bayes}}-R_{\mathrm{oracle}}\leq (\frac{1-\bar{\alpha}_t}{1-\bar{\alpha}_t+\rho^2\bar{\alpha}_t})^2\cdot \frac{\rho^2}{|\{i: Y_i=Y_p\}|}\leq \epsilon,
\end{equation}
as long as for any $p\in [P]$, $t\in [T]$,
\begin{equation}
    |\{i: Y_i=Y_p\}|\geq \epsilon^{-1}\rho^2,
\end{equation}
which holds if $P\geq \nu_{\min}^{\tilde{\boldsymbol{\pi}}}(K)^{-1}(\rho^2+1)\epsilon^{-1}\log d$.

\end{proof}

\subsection{Proof of Proposition \ref{prpst: W}}
\begin{proof}
    By (\ref{Z lower bound}), we have that for $Y_i=Y_j$,
    \begin{equation}
    \begin{aligned}
        &\bfx_j^t{}^\top\bfW^{(S)}\bfx_i^t/d\\
        \gtrsim & \frac{1}{2}\log (\epsilon^{-1}(\frac{(1-\max_{u\in[M]}\nu_u^{\tilde{\boldsymbol{\pi}}}(K))}{\max_{u\in[M]}\nu_u^{\tilde{\boldsymbol{\pi}}}(K)})^2)\\
        \gtrsim & \frac{1}{2}\log \epsilon^{-1}K\delta(\tilde{\boldsymbol{\pi}}),
        \end{aligned}
    \end{equation}
    where the last step holds since
    \begin{equation}
        \max_{u\in[M]}\nu_u^{\tilde{\boldsymbol{\pi}}}(K)\leq \frac{\max_{u\in[M]}\tilde{\boldsymbol{\pi}}}{\max_{u\in[M]}\tilde{\boldsymbol{\pi}}+(K-1)\min_{u\in[M]}\tilde{\boldsymbol{\pi}}}\leq \frac{1}{K\delta(\tilde{\boldsymbol{\pi}})}.
    \end{equation}
    This leads to (\ref{eqn: qk same}). (\ref{eqn: qk similar}) comes from an accumulation of (\ref{qk small eqnlemma}).
\end{proof}

\subsection{Proof of Corollary \ref{cor: attention}}
\begin{proof}
    (\ref{eqn: attention_concentration}) comes from (\ref{1-zeta final}). (\ref{eqn: attention_uniform}) is derived by (\ref{zeta_uniform}) plus the training condition (i) in Theorem \ref{thm: training}.
\end{proof}

\subsection{Proof of Corollar \ref{cor: shift}}
\begin{proof}
    We still need the condition (\ref{S_lb_1}) to hold for generation on $\mathcal{D}(\tilde{\boldsymbol{\pi}}', K, \{\bfmu_i\}_{i=1}^M, \rho)$. Then, the required number of tokens per data becomes
    \begin{equation}
        P\gtrsim \nu_{\min}^{\tilde{\boldsymbol{\pi}}'}(K)^{-1}(\rho^2+1)\epsilon^{-1}\log d.
    \end{equation}
    Combining Corollary (\ref{cor: R_oracle}), we can obtain the desired result.
\end{proof}

\section{Proof of Key Lemmas}\label{sec: proof lemmas}

\subsection{Proof of Lemma \ref{lemma: W training}}
\begin{proof}

\noindent Let $\bfE=(\boldsymbol{\epsilon}_1,\boldsymbol{\epsilon}_2,\cdots,\boldsymbol{\epsilon}_P)$. Then, we can compute
\begin{equation}
    \begin{aligned}
         & \frac{\partial L(\Psi^{(s_0)})}{\partial \bfW}\\
        =& \sum_{p=1}^P \frac{\partial \|\bff(\bfW;\sqrt{\bar{\alpha}_t}\bfX^0+\sqrt{1-\bar{\alpha}_t}\bfE,t)_p-\boldsymbol{\epsilon}_p\|^2}{\partial \bfW}\cdot \frac{1}{dP}\\
        =& \sum_{p=1}^P (\bff(\bfW;\sqrt{\bar{\alpha}_t}\bfX^0+\sqrt{1-\bar{\alpha}_t}\bfE,t)_p-\boldsymbol{\epsilon}_p)^\top\frac{\partial \bff(\bfW;\sqrt{\bar{\alpha}_t}\bfX^0+\sqrt{1-\bar{\alpha}_t}\bfE,t)}{\partial \bfW}\cdot \frac{1}{dP}\\
        =& \sum_{p=1}^P(\bff(\bfW;\sqrt{\bar{\alpha}_t}\bfX^0+\sqrt{1-\bar{\alpha}_t}\bfE,t)_p-\boldsymbol{\epsilon}_p)^\top (-\frac{v_t}{d^2P})\sum_{i=1}^P \bfx_i^{t}\text{softmax}_p(\frac{{\bfx_i^t} {}^\top\bfW\bfx_p^t}{d})\\
        &\cdot (\bfx_i^t-\sum_{r=1}^P\text{softmax}_p(\frac{{\bfx_r^t}{}^\top\bfW\bfx_p^t)\bfx_r^t}{d}){\bfx_p^t}{}^\top.\label{grad_ini}
    \end{aligned}
\end{equation}
We then complete the proof using induction. When iterations $s=0$, we have that with a high probability, 
\begin{equation}
    \|\bfx_p^t\|^2\gtrsim (1+\bar{\alpha}_t\rho^2) d\log d.
\end{equation}
If $\bfx_i^t$ and $\bfx_p^t$ share the same feature $\bfmu_u$ as the mean, then
\begin{equation}
    \begin{aligned}
        &\bfmu_u^\top (\bfx_i^t-\sum_{r=1}^P\text{softmax}_p(\frac{{\bfx_r^t}{}^\top\bfW^{(0)}\bfx_p^t}{d})\bfx_r^t){\bfx_p^t}{}^\top\bfmu_u\\
        =&(\sqrt{\bar{\alpha}_t}d+\bfmu_u^\top(\bfx_i^t-\sqrt{\bar{\alpha}_t}\bfmu_u)-\sum_{r=1}^P\text{softmax}_p(\frac{{\bfx_r^t}{}^\top\bfW^{(0)}\bfx_p^t}{d})\bfmu_u^\top\bfx_r^t)(\sqrt{\bar{\alpha}_t}d+\bfmu_u^\top(\bfx_p^t-\sqrt{\bar{\alpha}_t}\bfmu_u)).
    \end{aligned}
\end{equation}
Consider $\tilde{\bfX}^0$ independently sampled from Definition x so that $\bfx_j^0$ and $\bfx_{j'}^0$ shares the same mean as $\bfx_p^0$. $\tilde{\bfX}^0\neq \bfX^0$. Then,
\begin{equation}
    \begin{aligned}
        &\tilde{\bfx}_j^t{}^\top (\bfx_i^t-\sum_{r=1}^P\text{softmax}_p(\frac{{\bfx_r^t}{}^\top\bfW^{(0)}\bfx_p^t}{d})\bfx_r^t){\bfx_p^t}{}^\top\tilde{\bfx}_{j'}^t\\
        \gtrsim &(\bar{\alpha}_t d-\sum_{r=1}^P\text{softmax}_p(\frac{{\bfx_r^t}{}^\top\bfW^{(0)}\bfx_p^t}{d})\tilde{\bfx}_j^t{}^\top\bfx_r^t)\cdot \bar{\alpha}_t d,\label{it0_p1}
    \end{aligned}
\end{equation}
Meanwhile, 
\begin{equation}
    \begin{aligned}
        &(\bff(\bfW;\sqrt{\bar{\alpha}_t}\bfX^0+\sqrt{1-\bar{\alpha}_t}\bfE,t)_p-\boldsymbol{\epsilon}_p)^\top\bfx_i^t\\
        =& (v_t^{(0)}\sqrt{\bar{\alpha}_t}\rho\boldsymbol{\epsilon}_p'+v_t^{(0)}(\sqrt{\bar{\alpha}_t}\bfmu_u-\sum_{i=1}^P\bfx_i^t\text{softmax}_p(\frac{{\bfx_i^t}{}^\top\bfW^{(0)}\bfx_p^t}{d}))-(v_t^{(0)}\sqrt{1-\bar{\alpha}_t}-1)\boldsymbol{\epsilon}_p)(\sqrt{\bar{\alpha}_t}(\bfmu_u+\rho\boldsymbol{\epsilon}'_p)\\
        &+\sqrt{1-\bar{\alpha}_t}\boldsymbol{\epsilon}_p).\label{it0_p2}
    \end{aligned}
\end{equation}
If $\bfx_i^t$ and $\bfx_p^t$ does not share the same feature $\bfmu_u$ as the mean, then for $\tilde{\bfx}_j^t$ and $\tilde{\bfx}_{j'}^t$ that is from another $\tilde{\bfX}^t$ with the same mean as $\bfx_p^t$, 
\begin{equation}
    \begin{aligned}
        &\tilde{\bfx}_j^\top (\bfx_i^t-\sum_{r=1}^P\text{softmax}_p(\frac{{\bfx_r^t}{}^\top\bfW^{(0)}\bfx_p^t}{d})\bfx_r^t){\bfx_p^t}{}^\top\tilde{\bfx}_{j'}\\
        =&(\tilde{\bfx}_j^\top(\bfx_i^t-\sqrt{\bar{\alpha}_t}\bfmu_{u'})-\sum_{r=1}^P\text{softmax}_p(\frac{{\bfx_r^t}{}^\top\bfW^{(0)}\bfx_p^t}{d})\tilde{\bfx}_j^\top\bfx_r^t)\cdot (\bar{\alpha}_t d+\tilde{\bfx}_{j'}^\top(\bfx_p^t-\sqrt{\bar{\alpha}_t}\bfmu_u)),\label{it0_p12}
    \end{aligned}
\end{equation}
\begin{equation}
    \begin{aligned}
        &(\bff(\bfW;\sqrt{\bar{\alpha}_t}\bfX^0+\sqrt{1-\bar{\alpha}_t}\bfE,t)_p-\boldsymbol{\epsilon}_p)^\top\bfx_i^t\\
        =& (v_t^{(0)}\sqrt{\bar{\alpha}_t}\rho\boldsymbol{\epsilon}_p'+v_t^{(0)}(\sqrt{\bar{\alpha}_t}\bfmu_u-\sum_{i=1}^P\bfx_i^t\text{softmax}_p(\frac{{\bfx_i^t}{}^\top\bfW^{(0)}\bfx_p^t}{d}))-(v_t^{(0)}\sqrt{1-\bar{\alpha}_t}-1)\boldsymbol{\epsilon}_p)^\top(\sqrt{\bar{\alpha}_t}(\bfmu_{u'}+\rho\boldsymbol{\epsilon}'_i)\\
        &+\sqrt{1-\bar{\alpha}_t}\boldsymbol{\epsilon}_i),\label{it0_p22}
    \end{aligned}
\end{equation}
if the mean of $\bfx_i^t$ is $\bfmu_{u'}$. 
Therefore, combining (\ref{it0_p1}) and (\ref{it0_p2}), 
we have that for $\tilde{\bfx}_j^t$ and $\tilde{\bfx}_{j'}^t$ that share the same mean as $\bfx_p^t$,
\begin{equation}
    \begin{aligned}
        &(-\tilde{\bfx}_j^t{}^\top)\mathbb{E}_{\bfE,\bfX^0}\big[(\bff(\bfW^{(0)};\sqrt{\bar{\alpha}_t}\bfX^0+\sqrt{1-\bar{\alpha}_t}\bfE,t)_p-\boldsymbol{\epsilon}_p)^\top (-\frac{v_t^{(0)}}{d^2P}) \sum_{i=1}^P\bfx_i^{t}\text{softmax}_p(\frac{{\bfx_i^t} {}^\top\bfW^{(0)}\bfx_p^t}{d})\\
        &\cdot (\bfx_i^t-\sum_{r=1}^P\text{softmax}(\frac{{\bfx_r^t}{}^\top\bfW^{(0)}\bfx_p^t}{d})\bfx_r^t){\bfx_p^t}{}^\top\big]\tilde{\bfx}_{j'}^t\\
        \gtrsim & \mathbb{E}_{\bfE,\bfX^0}\big[\frac{1}{P}\mathbbm{1}[Y_p=u]\sum_{i=1}^P\zeta_{i,p,t}(0)(v_t^{(0)})^2\cdot (d\bar{\alpha}_t^3(1-\sum_{l=1}^P\zeta^u_{l,p,t}(0))^2+\bar{\alpha}_t^3d\rho^2-\bar{\alpha}_t^2(1-\bar{\alpha}_t-\frac{\sqrt{1-\bar{\alpha}_t}}{v_t^{(0)}})d\mathbbm{1}[i=p]\\
        &-d\bar{\alpha}_t^3\zeta_{i,p,t}^{u}(0)\rho^2)\big],
    \end{aligned}
\end{equation}
\begin{equation}
    \begin{aligned}
        &\frac{1}{T}\sum_{t=1}^T(-\tilde{\bfx}_j^t{}^\top)\mathbb{E}_{\bfE,\bfX^0}\big[\sum_{p=1}^P(\bff(\bfW^{(1)};\sqrt{\bar{\alpha}_t}\bfX^0+\sqrt{1-\bar{\alpha}_t}\bfE,t)_p-\boldsymbol{\epsilon}_p)^\top (-\frac{v_t^{(1)}}{d^2P}) \sum_{i=1}^P\bfx_i^{t}\\
        &\cdot\text{softmax}_p(\frac{{\bfx_i^t} {}^\top\bfW^{(1)}\bfx_p^t}{d})\cdot (\bfx_i^t-\sum_{r=1}^P\text{softmax}_p(\frac{{\bfx_r^t}{}^\top\bfW^{(1)}\bfx_p^t}{d})\bfx_r^t){\bfx_p^t}{}^\top\big]\tilde{\bfx}_{j'}^t\\
        \gtrsim & \frac{1}{T}\sum_{t=1}^T\sum_{p=1}^P\frac{\mathbbm{1}[Y_p=u]}{P^2}\mathbb{E}_{\bfE,\bfX^0}[v_t^{(0)}\bar{\alpha}_t^2 \sqrt{1-\bar{\alpha}_t} d (1-\sum_{l=1}^P \zeta_{l,p,t}^u(0))^2]\\
        \gtrsim & v_t^{(0)}\cdot \frac{d}{P},
    \end{aligned}
\end{equation}
which means that the update from $\bfW^{(0)}$ to $\bfW^{(1)}$ almost makes no difference to the attention map. 
Note that 
\begin{equation}
    \begin{aligned}
        &\frac{\partial \mathbb{E}_{\bfE,\bfX^0} [\|v_t^{(0)} (\bfx_p^t-\sum_{i=1}^P \bfx_i^t\text{softmax}_p(\frac{{\bfx_i^t}^\top\bfW^{(0)}\bfx^t_p}{d}))-\boldsymbol{\epsilon}_p\|^2/d]}{\partial v_t}\\
         =& v_t^{(0)}\rho^2\bar{\alpha}_t+\sqrt{1-\bar{\alpha}_t}(v_t^{(0)}\sqrt{1-\bar{\alpha}_t}-1)+\mathbb{E}_{\bfE,\bfX^0}[v_t^{(0)}\|\sqrt{\bar{\alpha}}_t\bfmu_u-\sum_{i=1}^P \bfx_i^t\text{softmax}_p(\frac{{\bfx_i^t}^\top\bfW^{(0)}\bfx^t_p}{d})\|^2/d]\\
        &+ 2\mathbb{E}_{\bfE,\bfX^0}[\frac{\beta_1 v_t^{(0)}\rho\sqrt{\bar{\alpha}_t}}{d}]+\mathbb{E}_{\bfE,\bfX^0}[\frac{2\beta_2\sqrt{1-\bar{\alpha}_t}v_t^{(0)}-\beta_2}{d}]\\
        =& (v_t^{(0)}\cdot \mathbb{E}_{\bfE,\bfX^0}[\bar{\alpha}_t\rho^2+1-\bar{\alpha}_t+\|\sqrt{\bar{\alpha}}_t\bfmu_u-\sum_{i=1}^P \bfx_i^t\text{softmax}_p(\frac{{\bfx_i^t}^\top\bfW^{(0)}\bfx^t_p}{d})\|^2/d\\
        &+\frac{2\beta_1\rho\sqrt{\bar{\alpha}_t}}{d}+\frac{2\beta_2\sqrt{1-\bar{\alpha}_t}}{d}]-(\sqrt{1-\bar{\alpha}_t}+\frac{\beta_2}{d})),
    \end{aligned}
\end{equation}
where we denote $\beta_1=\boldsymbol{\epsilon}_p'{}^\top (\sqrt{\bar{\alpha}}_t\bfmu_u-\sum_{i=1}^P \bfx_i^t\text{softmax}_p(\frac{{\bfx_i^t}^\top\bfW^{(0)}\bfx^t_p}{d}))$ and $\beta_2=\boldsymbol{\epsilon}_p^\top (\sqrt{\bar{\alpha}}_t\bfmu_u-\sum_{i=1}^P \bfx_i^t\text{softmax}_p(\frac{{\bfx_i^t}^\top\bfW^{(0)}\bfx^t_p}{d}))$. We have $\beta_1,\beta_2\lesssim \sqrt{d}$. Hence, since $|v_t^{(0)}|\lesssim \sqrt{\log d/d}$, we have
\begin{equation}
\begin{aligned}
    v_t^{(1)}=&v_t^{(0)}-\eta \partial\mathbb{E}_{\bfE,\bfX^0} [\|v_t^{(0)}(\bfx_p^t-\sum_{i=1}^P \bfx_i^t\text{softmax}_p(\frac{{\bfx_i^t}^\top\bfW^{(0)}\bfx^t_p}{d}))-\boldsymbol{\epsilon}_p\|^2/d]/\partial v_t\\
    <& \eta\sqrt{1-\bar{\alpha}_t},
    \end{aligned}
\end{equation}
and
\begin{equation}
    1-\bar{\alpha}_t-\frac{\sqrt{1-\bar{\alpha}_t}}{v_t^{(0)}}<1-\eta^{-1}-\bar{\alpha}_t<0.
\end{equation}
Then, we can obtain
\begin{equation}
    \begin{aligned}
        &(-\tilde{\bfx}_j^t{}^\top)\mathbb{E}_{\bfE,\bfX^0}\big[(\bff(\bfW^{(1)};\sqrt{\bar{\alpha}_t}\bfX^0+\sqrt{1-\bar{\alpha}_t}\bfE,t)_p-\boldsymbol{\epsilon}_p)^\top (-\frac{v_t^{(1)}}{d^2P}) \sum_{i=1}^P\bfx_i^{t}\text{softmax}_p(\frac{{\bfx_i^t} {}^\top\bfW^{(1)}\bfx_p^t}{d})\\
        &\cdot (\bfx_i^t-\sum_{r=1}^P\text{softmax}(\frac{{\bfx_r^t}{}^\top\bfW^{(1)}\bfx_p^t}{d})\bfx_r^t){\bfx_p^t}{}^\top\big]\tilde{\bfx}_{j'}^t\\
        \gtrsim & \mathbb{E}_{\bfE,\bfX^0}\big[\frac{1}{P}\mathbbm{1}[Y_p=u]\sum_{i=1}^P\zeta_{i,p,t}(1)(v_t^{(1)})^2\cdot (d\bar{\alpha}_t^3(1-\sum_{l=1}^P\zeta^u_{l,p,t}(1))^2+\bar{\alpha}_t^3d\rho^2-\bar{\alpha}_t^2(1-\bar{\alpha}_t-\frac{\sqrt{1-\bar{\alpha}_t}}{v_t^{(1)}})d\mathbbm{1}[i=p]\\
        &-d\bar{\alpha}_t^3\zeta_{i,p,t}^{u}(1)\rho^2)\big]\\
        \gtrsim & \mathbb{E}_{\bfE,\bfX^0}\big[\frac{1}{P}\mathbbm{1}[Y_p=u]\sum_{i=1}^P\zeta_{i,p,t}(1)(v_t^{(1)})^2\cdot (d\bar{\alpha}_t^3(1-\sum_{l=1}^P\zeta^u_{l,p,t}(1))^2\big],\\\label{it0_i!=p}
    \end{aligned}
\end{equation}
where the last step holds if $v_t^{(0)}>0$. 
Therefore, by summing up over (\ref{it0_i!=p}), we can obtain that for any $\tilde{\bfx}_j^t$ and $\tilde{\bfx}_{j'}$ with $\bfmu_u$ as the mean, 
\begin{equation}
    \begin{aligned}
        &\frac{1}{T}\sum_{t=1}^T(-\tilde{\bfx}_j^t{}^\top)\mathbb{E}_{\bfE,\bfX^0}\big[\sum_{p=1}^P(\bff(\bfW^{(1)};\sqrt{\bar{\alpha}_t}\bfX^0+\sqrt{1-\bar{\alpha}_t}\bfE,t)_p-\boldsymbol{\epsilon}_p)^\top (-\frac{v_t^{(1)}}{d^2P}) \sum_{i=1}^P\bfx_i^{t}\\
        &\cdot\text{softmax}_p(\frac{{\bfx_i^t} {}^\top\bfW^{(1)}\bfx_p^t}{d})\cdot (\bfx_i^t-\sum_{r=1}^P\text{softmax}_p(\frac{{\bfx_r^t}{}^\top\bfW^{(1)}\bfx_p^t}{d})\bfx_r^t){\bfx_p^t}{}^\top\big]\tilde{\bfx}_{j'}^t\\
        \gtrsim & \frac{1}{T}\sum_{t=1}^T\sum_{p=1}^P\frac{\mathbbm{1}[Y_p=u]}{P}\mathbb{E}_{\bfE,\bfX^0}[(v_t^{(1)})^2\bar{\alpha}_t^3 d (1-\sum_{l=1}^P \zeta_{l,p,t}^u(0))^2\sum_{i=1}^P\zeta_{i,p,t}^u(0)]\\
        \gtrsim &\frac{1}{T}\sum_{t=1}^T\mathbb{E}_{\bfE,\bfX^0}[(v_t^{(1)})^2\bar{\alpha}_t^3 d (1-\sum_{l=1}^P \zeta_{l,p,t}^u(0))^2\sum_{i=1}^P\zeta_{i,p,t}^u(0)]\cdot \nu_u^{\tilde{\boldsymbol{\pi}}}(K),
    \end{aligned}
\end{equation}
where the last step holds with a high probability if
\begin{equation}
    P\cdot \nu_u^{\tilde{\boldsymbol{\pi}}}(K)\gtrsim \log dK,
\end{equation}
by Lemma \ref{lemma: chernoff}. 
\noindent For $\bfmu_u\neq \bfmu_{u'}$, we consider two cases of $\bfx_i^t$ as follows, where the mean of $\tilde{\bfmu}_k$ is $\bfmu_{u'}$.
\begin{enumerate}
    \item The corresponding mean of $\bfx_i^t$ is $\bfmu_{u'}$: Then, $\bfmu_{u'}^\top (\bfx_i^t-\sum_{r=1}^P\text{softmax}_p(\frac{{\bfx_r^t}{}^\top\bfW^{(1)}\bfx_p^t}{d})\bfx_r^t){\bfx_p^t}{}^\top\bfmu_u$ is in the order of $d^2$. $(\bff(\bfW^{(1)};\sqrt{\bar{\alpha}_t}\bfX^0+\sqrt{1-\bar{\alpha}_t}\bfE,t)_p-\boldsymbol{\epsilon}_p)^\top\bfx_i^t$ is in the order of $-d$ if $d$ is large enough, and the order of positive term is no more than $\sqrt{d}\log d$.
    \item The corresponding mean of $\bfx_i^t$ is not $\bfmu_{u'}$: Then, the order of $\bfmu_{u'}^\top (\bfx_i^t-\sum_{r=1}^P\text{softmax}_p(\frac{{\bfx_r^t}{}^\top\bfW^{(1)}\bfx_p^t}{d})\bfx_r^t){\bfx_p^t}{}^\top\bfmu_u$ is at most $d^\frac{3}{2}\log d$, which is already smaller than the order of $d^2$. $(\bff(\bfW^{(1)};\sqrt{\bar{\alpha}_t}\bfX^0+\sqrt{1-\bar{\alpha}_t}\bfE,t)_p-\boldsymbol{\epsilon}_p)^\top\bfx_i^t$ is in the order of at most $\sqrt{d}\log d$. 
\end{enumerate}

\noindent The above discussion indicates that for $\tilde{\bfx}_j$ and $\tilde{\bfx}_{j'}$ with the mean of $\bfmu_u$ and $\tilde{\bfx}_{k}$ with the mean of $\bfmu_{u'}$, 
\begin{equation}
    \begin{aligned}
        &(-\tilde{\bfx}_{k}^\top)\mathbb{E}_{\bfE,\bfX^0}\big[(\bff(\bfW^{(1)};\sqrt{\bar{\alpha}_t}\bfX^0+\sqrt{1-\bar{\alpha}_t}\bfE,t)_p-\boldsymbol{\epsilon}_p)^\top (-\frac{v_t^{(1)}}{d^2P}) \sum_{i=1}^P\bfx_i^{t}\\
        &\cdot \text{softmax}_p(\frac{{\bfx_i^t} {}^\top\bfW^{(1)}\bfx_p^t}{d})\cdot (\bfx_i^t-\sum_{r=1}^P\text{softmax}_p(\frac{{\bfx_r^t}{}^\top\bfW^{(1)}\bfx_p^t}{d})\bfx_r^t){\bfx_p^t}{}^\top\big]\tilde{\bfx}_j\\
        \lesssim &\frac{\log d}{\sqrt{d}}\cdot (-\tilde{\bfx}_{j'}^\top)\mathbb{E}_{\bfE,\bfX^0}\big[(\bff(\bfW^{(1)};\sqrt{\bar{\alpha}_t}\bfX^0+\sqrt{1-\bar{\alpha}_t}\bfE,t)_p-\boldsymbol{\epsilon}_p)^\top (-\frac{v_t^{(1)}}{d^2P}) \sum_{i=1}^P\bfx_i^{t}\\
        &\cdot \text{softmax}_p(\frac{{\bfx_i^t} {}^\top\bfW^{(1)}\bfx_p^t}{d})\cdot (\bfx_i^t-\sum_{r=1}^P\text{softmax}_p(\frac{{\bfx_r^t}{}^\top\bfW^{(1)}\bfx_p^t}{d})\bfx_r^t){\bfx_p^t}{}^\top\big]\tilde{\bfx}_{j}\\
    \end{aligned}
\end{equation}
\begin{equation}
    \begin{aligned}
        &\frac{1}{T}\sum_{t=1}^T(-\tilde{\bfx}_{k}^\top)\mathbb{E}_{\bfE,\bfX^0}\big[(\bff(\bfW^{(1)};\sqrt{\bar{\alpha}_t}\bfX^0+\sqrt{1-\bar{\alpha}_t}\bfE,t)_p-\boldsymbol{\epsilon}_p)^\top (-\frac{v_t^{(1)}}{d^2P}) \sum_{i=1}^P\bfx_i^{t}\\
        &\cdot \text{softmax}_p(\frac{{\bfx_i^t} {}^\top\bfW^{(1)}\bfx_p^t}{d})\cdot (\bfx_i^t-\sum_{r=1}^P\text{softmax}_p(\frac{{\bfx_r^t}{}^\top\bfW^{(1)}\bfx_p^t}{d})\bfx_r^t){\bfx_p^t}{}^\top\big]\tilde{\bfx}_j\\
        \leq &\frac{\log d}{\sqrt{d}}\cdot  (-\tilde{\bfx}_{j'}^\top)\frac{1}{T}\sum_{t=1}^T\mathbb{E}_{\bfE,\bfX^0}\big[(\bff(\bfW^{(1)};\sqrt{\bar{\alpha}_t}\bfX^0+\sqrt{1-\bar{\alpha}_t}\bfE,t)_p-\boldsymbol{\epsilon}_p)^\top (-\frac{v_t^{(1)}}{d^2P}) \sum_{i=1}^P\bfx_i^{t}\\
        &\cdot \text{softmax}_p(\frac{{\bfx_i^t} {}^\top\bfW^{(1)}\bfx_p^t}{d})\cdot (\bfx_i^t-\sum_{r=1}^P\text{softmax}_p(\frac{{\bfx_r^t}{}^\top\bfW^{(1)}\bfx_p^t}{d})\bfx_r^t){\bfx_p^t}{}^\top\big]\tilde{\bfx}_j.
    \end{aligned}
\end{equation}

\noindent Suppose that when iterations $s=s_0>1$, the conclusion holds. Then, when $s=s_0+1$, we have that if $\bfx_i^t$ and $\bfx_p^t$ share the same feature $\bfmu_u$ as the mean, then
\begin{equation}
    \begin{aligned}
        &\bfmu_u^\top (\bfx_i^t-\sum_{r=1}^P\text{softmax}_p(\frac{{\bfx_r^t}{}^\top\bfW^{(s_0)}\bfx_p^t}{d})\bfx_r^t){\bfx_p^t}{}^\top\bfmu_u\\
        =&(\sqrt{\bar{\alpha}_t}d+\bfmu_u^\top(\bfx_i^t-\sqrt{\bar{\alpha}_t}\bfmu_u)-\sum_{r=1}^P\text{softmax}_p(\frac{{\bfx_r^t}{}^\top\bfW^{(s_0)}\bfx_p^t}{d})\bfmu_u^\top\bfx_r^t)(\sqrt{\bar{\alpha}_t}d+\bfmu_u^\top(\bfx_p^t-\sqrt{\bar{\alpha}_t}\bfmu_u)).\label{it0_p1_new0}
    \end{aligned}
\end{equation}
Consider $\tilde{\bfX}^0$ independently sampled from Definition x so that $\bfx_j^0$ and $\bfx_{j'}^0$ share the same mean as $\bfx_p^0$. $\tilde{\bfX}^0\neq \bfX^0$. Then,
\begin{equation}
    \begin{aligned}
        &\tilde{\bfx}_j^t{}^\top (\bfx_i^t-\sum_{r=1}^P\text{softmax}_p(\frac{{\bfx_r^t}{}^\top\bfW^{(s_0)}\bfx_p^t}{d})\bfx_r^t){\bfx_p^t}{}^\top\tilde{\bfx}_{j'}^t\\
        \gtrsim &(\bar{\alpha}_t d-\sum_{r=1}^P\text{softmax}_p(\frac{{\bfx_r^t}{}^\top\bfW^{(s_0)}\bfx_p^t}{d})\tilde{\bfx_j}^t{}^\top\bfx_r^t)\cdot \bar{\alpha}_t d,\label{it0_p1_new}
    \end{aligned}
\end{equation}
Meanwhile, 
\begin{equation}
    \begin{aligned}
        &(\bff(\bfW^{(s_0)};\sqrt{\bar{\alpha}_t}\bfX^0+\sqrt{1-\bar{\alpha}_t}\bfE,t)_p-\boldsymbol{\epsilon}_p)^\top\bfx_i^t\\
        =& (v_t^{(s_0)}\sqrt{\bar{\alpha}_t}\rho\boldsymbol{\epsilon}_p'+v_t^{(s_0)}(\sqrt{\bar{\alpha}_t}\bfmu_u-\sum_{i=1}^P\bfx_i^t\text{softmax}_p(\bfx_i^t{}^\top\bfW^{(s_0)}\bfx_p^t))-(v_t^{(s_0)}\sqrt{1-\bar{\alpha}_t}-1)\boldsymbol{\epsilon}_p)(\sqrt{\bar{\alpha}_t}(\bfmu_u+\rho\boldsymbol{\epsilon}'_p)\\
        &+\sqrt{1-\bar{\alpha}_t}\boldsymbol{\epsilon}_p).\label{it0_p2_new}
    \end{aligned}
\end{equation}
If $\bfx_i^t$ and $\bfx_p^t$ do not share the same feature $\bfmu_u$ as the mean, then for $\bfx_j^t$ and $\bfx_{j'}^t$ that share the same mean as $\bfx_p^t$,
\begin{equation}
    \begin{aligned}
        &\tilde{\bfx}_j^\top (\bfx_i^t-\sum_{r=1}^P\text{softmax}_p(\frac{{\bfx_r^t}{}^\top\bfW^{(s_0)}\bfx_p^t}{d})\bfx_r^t){\bfx_p^t}{}^\top\tilde{\bfx}_{j'}\\
        =&(\tilde{\bfx}_j^\top(\bfx_i^t-\sqrt{\bar{\alpha}_t}\bfmu_{u'})-\sum_{r=1}^P\text{softmax}_p(\frac{{\bfx_r^t}{}^\top\bfW^{(s_0)}\bfx_p^t}{d})\tilde{\bfx}_j^\top\bfx_r^t)\cdot (\sqrt{\bar{\alpha}_t}d+\tilde{\bfx}_{j'}^\top(\bfx_p^t-\sqrt{\bar{\alpha}_t}\bfmu_u)),
    \end{aligned}
\end{equation}
\begin{equation}
    \begin{aligned}
        &(\bff(\bfW^{(s_0)};\sqrt{\bar{\alpha}_t}\bfX^0+\sqrt{1-\bar{\alpha}_t}\bfE,t)_p-\boldsymbol{\epsilon}_p)^\top\bfx_i^t\\
        =& (v_t^{(s_0)}\sqrt{\bar{\alpha}_t}\rho\boldsymbol{\epsilon}_p'+v_t^{(s_0)}(\sqrt{\bar{\alpha}_t}\bfmu_u-\sum_{i=1}^P\bfx_i^t\text{softmax}_p(\bfx_i^t{}^\top\bfW^{(s_0)}\bfx_p^t))-(v_t^{(s_0)}\sqrt{1-\bar{\alpha}_t}-1)\boldsymbol{\epsilon}_p)(\sqrt{\bar{\alpha}_t}(\bfmu_{u'}+\rho\boldsymbol{\epsilon}'_i)\\
        &+\sqrt{1-\bar{\alpha}_t}\boldsymbol{\epsilon}_i),
    \end{aligned}
\end{equation}
if the mean of $\bfx_i^t$ is $\bfmu_{u'}$. Therefore, combining (\ref{it0_p1_new}) and (\ref{it0_p2_new}), 
we have that for $\tilde{\bfx}_j^t$ and $\tilde{\bfx}_{j'}^t$ from another $\tilde{\bfX}^t$ that share the same mean as $\bfx_p^t$,
\begin{equation}
    \begin{aligned}
        &(-\tilde{\bfx}_j^t{}^\top)\mathbb{E}_{\bfE,\bfX^0}\big[(\bff(\bfW^{(s_0)};\sqrt{\bar{\alpha}_t}\bfX^0+\sqrt{1-\bar{\alpha}_t}\bfE,t)_p-\boldsymbol{\epsilon}_p)^\top (-\frac{v_t^{(s_0)}}{d^2P}) \sum_{i=1}^P\bfx_i^{t}\text{softmax}_p(\frac{{\bfx_i^t} {}^\top\bfW^{(s_0)}\bfx_p^t}{d})\\
        &\cdot (\bfx_i^t-\sum_{r=1}^P\text{softmax}(\frac{{\bfx_r^t}{}^\top\bfW^{(s_0)}\bfx_p^t}{d})\bfx_r^t){\bfx_p^t}{}^\top\big]\tilde{\bfx}_{j'}^t\\
        \gtrsim & \mathbb{E}_{\bfE,\bfX^0}\big[\frac{1}{P}\mathbbm{1}[Y_p=u]\sum_{i=1}^P\zeta_{i,p,t}(s_0)(v_t^{(s_0)})^2\cdot (d\bar{\alpha}_t^3(1-\sum_{l=1}^P\zeta^u_{l,p,t}(s_0))+\bar{\alpha}_t^3d\rho^2-\bar{\alpha}_t^2(1-\bar{\alpha}_t-\frac{\sqrt{1-\bar{\alpha}_t}}{v_t^{(s_0)}})d\mathbbm{1}[i=p]\\
        &-d\bar{\alpha}_t^3\sum_{p=1}^P\zeta^u_{i,p,t}(s_0)\rho^2)\big]\\
        \gtrsim & \mathbb{E}_{\bfE,\bfX^0}\big[\frac{1}{P}\mathbbm{1}[Y_p=u]\sum_{i=1}^P\zeta_{i,p,t}(s_0)(v_t^{(s_0)})^2\cdot (d\bar{\alpha}_t^3(1-\sum_{l=1}^P\zeta^u_{l,p,t}(s_0))^2\big],\label{it0_i!=p_new}
    \end{aligned}
\end{equation}
where the last step holds because if $v_t^{(1)}<\frac{\sqrt{1-\bar{\alpha}_t}}{\bar{\alpha}_t\rho^2+1-\bar{\alpha}_t}+o(1)$, then $v_t^{(s_0)}<\frac{\sqrt{1-\bar{\alpha}_t}}{\bar{\alpha}_t\rho^2+1-\bar{\alpha}_t}+o(1)$ and
\begin{equation}
    1-\bar{\alpha}_t-\frac{\sqrt{1-\bar{\alpha}_t}}{v_t^{(s_0)}}<-\rho^2<0;
\end{equation}
and if $v_t^{(1)}>\frac{\sqrt{1-\bar{\alpha}_t}}{\bar{\alpha}_t\rho^2+1-\bar{\alpha}_t}+o(1)$, then $v_t^{(s_0)}>\frac{\sqrt{1-\bar{\alpha}_t}}{\bar{\alpha}_t\rho^2+1-\bar{\alpha}_t}+o(1)$ and
\begin{equation}
    1-\bar{\alpha}_t-\frac{\sqrt{1-\bar{\alpha}_t}}{v_t^{(0)}}<0.
\end{equation}

\noindent For $\bfmu_u\neq \bfmu_{u'}$ and $\tilde{\bfx}_j$, $\tilde{\bfx}_{j'}$ with the mean of $\bfmu_u$ and $\tilde{\bfx}_{k}$ with the mean of $\bfmu_{u'}$, we have that 
\begin{equation}
    \begin{aligned}
        &(-\tilde{\bfx}_{k}^\top)\mathbb{E}_{\bfE,\bfX^0}\big[(\bff(\bfW^{(s_0)};\sqrt{\bar{\alpha}_t}\bfX^0+\sqrt{1-\bar{\alpha}_t}\bfE,t)_p-\boldsymbol{\epsilon}_p)^\top (-\frac{v_t^{(s_0)}}{d^2P}) \sum_{i=1}^P\bfx_i^{t}\\
        &\cdot \text{softmax}_p(\frac{{\bfx_i^t} {}^\top\bfW^{(s_0)}\bfx_p^t}{d})\cdot (\bfx_i^t-\sum_{r=1}^P\text{softmax}_p(\frac{{\bfx_r^t}{}^\top\bfW^{(s_0)}\bfx_p^t}{d})\bfx_r^t){\bfx_p^t}{}^\top\big]\tilde{\bfx}_{j'}\\
        \leq &\frac{\log d}{\sqrt{d}}\cdot (-\tilde{\bfx}_{j}^\top)\mathbb{E}_{\bfE,\bfX^0}\big[(\bff(\bfW^{(s_0)};\sqrt{\bar{\alpha}_t}\bfX^0+\sqrt{1-\bar{\alpha}_t}\bfE,t)_p-\boldsymbol{\epsilon}_p)^\top (-\frac{v_t^{(s_0)}}{d^2P}) \sum_{i=1}^P\bfx_i^{t}\\
        &\cdot \text{softmax}_p(\frac{{\bfx_i^t} {}^\top\bfW^{(s_0)}\bfx_p^t}{d})\cdot (\bfx_i^t-\sum_{r=1}^P\text{softmax}_p(\frac{{\bfx_r^t}{}^\top\bfW^{(s_0)}\bfx_p^t}{d})\bfx_r^t){\bfx_p^t}{}^\top\big]\tilde{\bfx}_{j'}\\
    \end{aligned}
\end{equation}
\begin{equation}
    \begin{aligned}
        &(-\tilde{\bfx}_{k}^\top)\frac{1}{T}\sum_{t=1}^T\mathbb{E}_{\bfE,\bfX^0}\big[(\bff(\bfW^{(s_0)};\sqrt{\bar{\alpha}_t}\bfX^0+\sqrt{1-\bar{\alpha}_t}\bfE,t)_p-\boldsymbol{\epsilon}_p)^\top (-\frac{v_t^{(s_0)}}{d^2P}) \sum_{i=1}^P\bfx_i^{t}\\
        &\cdot \text{softmax}_p(\frac{{\bfx_i^t} {}^\top\bfW^{(s_0)}\bfx_p^t}{d})\cdot (\bfx_i^t-\sum_{r=1}^P\text{softmax}_p(\frac{{\bfx_r^t}{}^\top\bfW^{(s_0)}\bfx_p^t}{d})\bfx_r^t){\bfx_p^t}{}^\top\big]\tilde{\bfx}_{j'}\\
        \leq & \frac{\log d}{\sqrt{d}}\cdot (-\tilde{\bfx}_{j}^\top)\frac{1}{T}\sum_{t=1}^T\mathbb{E}_{\bfE,\bfX^0}\big[(\bff(\bfW^{(s_0)};\sqrt{\bar{\alpha}_t}\bfX^0+\sqrt{1-\bar{\alpha}_t}\bfE,t)_p-\boldsymbol{\epsilon}_p)^\top (-\frac{v_t^{(s_0)}}{d^2P}) \sum_{i=1}^P\bfx_i^{t}\\
        &\cdot \text{softmax}_p(\frac{{\bfx_i^t} {}^\top\bfW^{(s_0)}\bfx_p^t}{d})\cdot (\bfx_i^t-\sum_{r=1}^P\text{softmax}_p(\frac{{\bfx_r^t}{}^\top\bfW^{(s_0)}\bfx_p^t}{d})\bfx_r^t){\bfx_p^t}{}^\top\big]\tilde{\bfx}_{j'}.\label{grad kj}
    \end{aligned}
\end{equation}
Since
\begin{equation}
    \bfW^{(s_0+1)}=\bfW^{(s_0)}-\eta\frac{1}{T}\sum_{t=1}^T\sum_{q=1}^{s_0}\mathbb{E}_{\bfE,\bfX^0,t}[\frac{\partial L(\Psi^{(s_0)};\bfX^t)}{\partial \bfW}],
\end{equation}
where $\frac{1}{T}\sum_{t=1}^T\mathbb{E}_{\bfE,\bfX^0,t}[\frac{\partial L(\Psi^{(s_0)};\bfX^t)}{\partial \bfW}]$ is not a function of any noise term, we have
\begin{equation}
    \begin{aligned}
        &\Big\|\tilde{\bfx}_j^t{}^\top\frac{1}{T}\sum_{t=1}^T\mathbb{E}_{\bfE,\bfX^0,t}[\frac{\partial L(\Psi^{(s_0)};\bfX^t)}{\partial \bfW}]\tilde{\bfx}_p^t-\frac{1}{T}\sum_{t=1}^T\mathbb{E}_{\tilde{\bfX}^0}[\tilde{\bfx}_j^t{}^\top\mathbb{E}_{\bfE,\bfX^0}[\frac{\partial L(\Psi^{(s_0)})}{\partial \bfW}]\tilde{\bfx}_p^t]\Big\|\\
        =& \Big\|\tilde{\bfx}_j^t{}^\top\frac{1}{T}\sum_{t=1}^T\mathbb{E}_{\bfE,\bfX^0}[\frac{\partial L(\Psi^{(s_0)})}{\partial \bfW}]\tilde{\bfx}_p^t-\frac{1}{T}\sum_{t=1}^T\mathbb{E}_{\tilde{\bfX}^0}[\bfmu_u^\top\mathbb{E}_{\bfE,\bfX^0}[\frac{\partial L(\Psi^{(s_0)})}{\partial \bfW}]\bfmu_u]\Big\|\\
        =& \Big\| \bfa_1\frac{1}{T}\sum_{t=1}^T\mathbb{E}_{\bfE,\bfX^0}[\frac{\partial L(\Psi^{(s_0)})}{\partial \bfW}]\bfa_2+\frac{1}{T}\sum_{t=1}^T\bfa_1^\top \mathbb{E}_{\bfE,\bfX^0}[\frac{\partial L(\Psi^{(s_0)})}{\partial \bfW}]\bfmu_u\\
        &+\frac{1}{T}\sum_{t=1}^T\bfmu_u^\top \mathbb{E}_{\bfE,\bfX^0}[\frac{\partial L(\Psi^{(s_0)})}{\partial \bfW}]\bfa_2\Big\|,
    \end{aligned}
\end{equation}
where $\bfa_1=\tilde{\bfx}_j^t-\sqrt{\bar{\alpha}_t}\bfmu_u$, $\bfa_2=\tilde{\bfx}_p^t-\sqrt{\bar{\alpha}_t}\bfmu_u$. We can obtain $\bfa_1,\bfa_2\sim\mathcal{N}(0, \bar{\alpha}_t\rho^2+1-\bar{\alpha}_t)$. We then have the following discussion on $\bfx_i^t$ in the gradient (\ref{grad_ini}). $\bfa_1^\top (\bfx_i^t-\sum_{r=1}^P\text{softmax}_p(\frac{{\bfx_r^t}{}^\top\bfW^{(0)}\bfx_p^t}{d})\bfx_r^t){\bfx_p^t}{}^\top\bfa_2$ is in the order of $d\log d$. $\bfa_1^\top (\bfx_i^t-\sum_{r=1}^P\text{softmax}_p(\frac{{\bfx_r^t}{}^\top\bfW^{(0)}\bfx_p^t}{d})\bfx_r^t){\bfx_p^t}{}^\top\bfmu_u$ is in the order of $d^\frac{3}{2}\log d$. $\bfmu_u^\top (\bfx_i^t-\sum_{r=1}^P\text{softmax}_p(\frac{{\bfx_r^t}{}^\top\bfW^{(0)}\bfx_p^t}{d})\bfx_r^t){\bfx_p^t}{}^\top\bfa_2$ is in the order of $d^\frac{3}{2}\log d$. $(\bff(\bfW;\sqrt{\bar{\alpha}_t}\bfX^0+\sqrt{1-\bar{\alpha}_t}\bfE,t)_p-\boldsymbol{\epsilon}_p)^\top\bfx_i^t$ is in the order of $-d$ if $d$ is large enough, and the order of positive term is no more than $\sqrt{d}\log d$. Therefore, 
\begin{equation}
    \Big\|\frac{1}{T}\sum_{t=1}^T\bfa_1\mathbb{E}_{\bfE,\bfX^0}[\frac{\partial L(\Psi^{(s_0)})}{\partial \bfW}]\bfa_2\Big\|\lesssim \frac{\log d}{d^\frac{3}{2}}\cdot \frac{1}{T}\sum_{t=1}^T\mathbb{E}_{\tilde{\bfX}^0}[\tilde{\bfx}_j^t{}^\top\mathbb{E}_{\bfE,\bfX^0,t}[\frac{\partial L(\Psi^{(s_0)})}{\partial \bfW}]\tilde{\bfx}_p^t],
\end{equation}
\begin{equation}
    \Big\|\frac{1}{T}\sum_{t=1}^T\bfa_1\mathbb{E}_{\bfE,\bfX^0,t}[\frac{\partial L(\Psi^{(s_0)})}{\partial \bfW}]\bfmu_U\Big\|\lesssim \frac{\log d}{d}\cdot \frac{1}{T}\sum_{t=1}^T\mathbb{E}_{\tilde{\bfX}^0}[\tilde{\bfx}_j^t{}^\top\mathbb{E}_{\bfE,\bfX^0,t}[\frac{\partial L(\Psi^{(s_0)})}{\partial \bfW}]\tilde{\bfx}_p^t],
\end{equation}
\begin{equation}
    \Big\|\frac{1}{T}\sum_{t=1}^T\bfmu_u\mathbb{E}_{\bfE,\bfX^0,t}[\frac{\partial L(\Psi^{(s_0)})}{\partial \bfW}]\bfa_2\Big\|\lesssim \frac{\log d}{d}\cdot \frac{1}{T}\sum_{t=1}^T\mathbb{E}_{\tilde{\bfX}^0}[\tilde{\bfx}_j^t{}^\top\mathbb{E}_{\bfE,\bfX^0,t}[\frac{\partial L(\Psi^{(s_0)})}{\partial \bfW}]\tilde{\bfx}_p^t],
\end{equation}
for any $Y_j=Y_p=u$. Hence, we have that for any $u\in [M]$ and $Y_j=Y_p=u$,
\begin{equation}
    \begin{aligned}
        &\Big\|\frac{1}{T}\sum_{t=1}^T\tilde{\bfx}_j^t{}^\top\mathbb{E}_{\bfE,\bfX^0,t}[\frac{\partial L(\Psi^{(s_0)})}{\partial \bfW}]\tilde{\bfx}_p^t-\frac{1}{T}\sum_{t=1}^T\mathbb{E}_{\tilde{\bfX}^0}[\tilde{\bfx}_j^t{}^\top\mathbb{E}_{\bfE,\bfX^0,t}[\frac{\partial L(\Psi^{(s_0)})}{\partial \bfW}]\tilde{\bfx}_p^t]\Big\|\\
        \lesssim & \frac{\log d}{d}\cdot \frac{1}{T}\sum_{t=1}^T\mathbb{E}_{\tilde{\bfX}^0}[\tilde{\bfx}_j^t{}^\top\mathbb{E}_{\bfE,\bfX^0,t}[\frac{\partial L(\Psi^{(s_0)})}{\partial \bfW}]\tilde{\bfx}_p^t].
    \end{aligned}
\end{equation}
We can also derive 
\begin{equation}
    \begin{aligned}
        \Big\|\tilde{\bfx}_j^t{}^\top\bfW^{(0)}\tilde{\bfx}_p^t-\mathbb{E}_{\tilde{\bfX}^0}[\tilde{\bfx}_j^t{}^\top\bfW^{(0)}\tilde{\bfx}_p^t]\Big\|
        \lesssim  \frac{\log d}{d}\cdot \mathbb{E}_{\tilde{\bfX}^0}[\tilde{\bfx}_j^t{}^\top \bfW^{(0)}\tilde{\bfx}_p^t],
    \end{aligned}
\end{equation}
by replacing $\mathbb{E}_{\bfE,\bfX^0}[\frac{\partial L(\Psi^{(s_0)})}{\partial \bfW}]$ with an arbitrarily initialized $\bfW^{(0)}$. Thus, we have
We can also derive 
\begin{equation}
    \begin{aligned}
        \Big\|\tilde{\bfx}_j^t{}^\top\bfW^{(s_0+1)}\tilde{\bfx}_p^t-\mathbb{E}_{\tilde{\bfX}^0}[\tilde{\bfx}_j^t{}^\top\bfW^{(s_0+1)}\tilde{\bfx}_p^t]\Big\|
        \lesssim  \frac{\log d}{d}\cdot \mathbb{E}_{\tilde{\bfX}^0}[\tilde{\bfx}_j^t{}^\top \bfW^{(s_0+1)}\tilde{\bfx}_p^t].
    \end{aligned}
\end{equation}
This ensures that for any $u\in [M]$ and $Y_i=Y_p=Y_j=u$, 
\begin{equation}
    \begin{aligned}
        &\|\text{softmax}_p(\frac{\bfx_i^t{}^\top\bfW^{(s_0+1)}\bfx_p^t}{d})-\mathbb{E}[\text{softmax}_p(\frac{\bfx_i^t{}^\top\bfW^{(s_0+1)}\bfx_p^t}{d})]\|\\\lesssim  & \mathbb{E}[\text{softmax}_p(\frac{\bfx_i^t{}^\top\bfW^{(s_0+1)}\bfx_p^t}{d})]\cdot \frac{\log d\log \epsilon^{-1}(\min_{u\in [M]}\{\min_{u\in [M]}(\nu_u^{\tilde{\boldsymbol{\pi}}}(K))\}^{-1}-1)^2}{d}.
    \end{aligned}
\end{equation}
Hence,
\begin{equation}
    \Big|\text{softmax}_p(\frac{\bfx_i^t{}^\top\bfW^{(s_0+1)}\bfx_p^t}{d})-\text{softmax}_p(\frac{\bfx_j^t{}^\top\bfW^{(s_0+1)}\bfx_p^t}{d})\Big|\lesssim \frac{\log d\log \epsilon^{-1}((\min_{u\in [M]}\nu_u^{\tilde{\boldsymbol{\pi}}}(K))^{-1}-1)^2}{d \sum_{q=1}^P\mathbbm{1}[Y_q=u]}.\label{zeta_uniform}
\end{equation}
Therefore, by summing up over (\ref{it0_i!=p_new}), we can obtain that for $\bfx_j^t$ and $\bfx_{j'}^t$ with $\bfmu_u$ as the mean, 
\begin{equation}
    \begin{aligned}
        & \frac{1}{T}\sum_{t=1}^T(-\tilde{\bfx}_j^t{}^\top) \mathbb{E}_{\bfE,\bfX^0}\Big[\frac{\partial L(\Psi^{(s_0)})}{\partial \bfW}\Big]\tilde{\bfx}_{j'}^t\\
        =&\frac{1}{T}\sum_{t=1}^T(-\tilde{\bfx}_j^t{}^\top)\mathbb{E}_{\bfE,\bfX^0}\big[\sum_{p=1}^P(\bff(\bfW^{(s_0)};\sqrt{\bar{\alpha}_t}\bfX^0+\sqrt{1-\bar{\alpha}_t}\bfE,t)_p-\boldsymbol{\epsilon}_p)^\top (-\frac{v_t^{(s_0)}}{d^2P}) \sum_{i=1}^P\bfx_i^{t}\\
        &\cdot\text{softmax}_p(\frac{{\bfx_i^t} {}^\top\bfW^{(s_0)}\bfx_p^t}{d})\cdot (\bfx_i^t-\sum_{r=1}^P\text{softmax}_p(\frac{{\bfx_r^t}{}^\top\bfW^{(s_0)}\bfx_p^t}{d})\bfx_r^t){\bfx_p^t}{}^\top\big]\tilde{\bfx}_{j'}^t\\
        \gtrsim & \frac{1}{T}\sum_{t=1}^T\mathbb{E}_{\bfE,\bfX^0}[(v_t^{(s_0)})^2\bar{\alpha}_t^3 d (1-\sum_{l=1}^P \zeta_{l,p,t}^u(s_0))^2\sum_{i=1}^P\zeta_{i,p,t}^u(s_0)]\cdot \nu_u^{\tilde{\boldsymbol{\pi}}}(K),
    \end{aligned}
\end{equation}
where the last step holds with a high probability if 
    \begin{equation}
    P\cdot \nu_u^{\tilde{\boldsymbol{\pi}}}(K)\gtrsim \log dK.
\end{equation}

\noindent Similarly, we hvae that for $\bfmu_u\neq \bfmu_{u'}$ and $\tilde{\bfx}_j$, $\tilde{\bfx}_{j'}$ with the mean of $\bfmu_u$ and $\tilde{\bfx}_{k}$ with the mean of $\bfmu_{u'}$, we have that 
\begin{equation}
    \begin{aligned}
        & (-\tilde{\bfx}_{k}^\top)\mathbb{E}_{\bfE,\bfX^0}\Big[\frac{\partial L(\Psi^{(s_0)})}{\partial \bfW}\Big]\tilde{\bfx}_{j'}^t\\
        =&(-\tilde{\bfx}_{k}^\top)\mathbb{E}_{\bfE,\bfX^0}\big[\sum_{p=1}^P(\bff(\bfW^{(s_0)};\sqrt{\bar{\alpha}_t}\bfX^0+\sqrt{1-\bar{\alpha}_t}\bfE,t)_p-\boldsymbol{\epsilon}_p)^\top (-\frac{v_t^{(s_0)}}{d^2P}) \sum_{i=1}^P\bfx_i^{t}\\
        &\cdot \text{softmax}_p(\frac{{\bfx_i^t} {}^\top\bfW^{(s_0)}\bfx_p^t}{d})\cdot (\bfx_i^t-\sum_{r=1}^P\text{softmax}_p(\frac{{\bfx_r^t}{}^\top\bfW^{(s_0)}\bfx_p^t}{d})\bfx_r^t){\bfx_p^t}{}^\top\big]\tilde{\bfx}_{j'}^t\\
        \leq &\frac{\log d}{\sqrt{d}}\cdot (-\tilde{\bfx}_{j}^\top)\mathbb{E}_{\bfE,\bfX^0}\big[\sum_{p=1}^P(\bff(\bfW^{(s_0)};\sqrt{\bar{\alpha}_t}\bfX^0+\sqrt{1-\bar{\alpha}_t}\bfE,t)_p-\boldsymbol{\epsilon}_p)^\top (-\frac{v_t^{(s_0)}}{d^2P}) \sum_{i=1}^P\bfx_i^{t}\\
        &\cdot \text{softmax}_p(\frac{{\bfx_i^t} {}^\top\bfW^{(s_0)}\bfx_p^t}{d})\cdot (\bfx_i^t-\sum_{r=1}^P\text{softmax}_p(\frac{{\bfx_r^t}{}^\top\bfW^{(s_0)}\bfx_p^t}{d})\bfx_r^t){\bfx_p^t}{}^\top\big]\tilde{\bfx}_{j'}^t\\
        =& \frac{\log d}{\sqrt{d}}\cdot (-\tilde{\bfx}_{j}^\top)\mathbb{E}_{\bfE,\bfX^0}\Big[\frac{\partial L(\Psi^{(s_0)})}{\partial \bfW}\Big]\tilde{\bfx}_{j'}^t
    \end{aligned}
\end{equation}
\begin{equation}
    \begin{aligned}
    &\frac{1}{T}\sum_{t=1}^T(-\tilde{\bfx}_{k}^\top)\mathbb{E}_{\bfE,\bfX^0}\Big[\frac{\partial L(\Psi^{(s_0)})}{\partial \bfW}\Big]\tilde{\bfx}_{j'}^t\\
        =&\frac{1}{T}\sum_{t=1}^T(-\tilde{\bfx}_{k}^\top)\mathbb{E}_{\bfE,\bfX^0}\big[\sum_{p=1}^P(\bff(\bfW^{(s_0)};\sqrt{\bar{\alpha}_t}\bfX^0+\sqrt{1-\bar{\alpha}_t}\bfE,t)_p-\boldsymbol{\epsilon}_p)^\top (-\frac{v_t^{(s_0)}}{d^2P}) \sum_{i=1}^P\bfx_i^{t}\\
        &\cdot \text{softmax}_p(\frac{{\bfx_i^t} {}^\top\bfW^{(s_0)}\bfx_p^t}{d})\cdot (\bfx_i^t-\sum_{r=1}^P\text{softmax}_p(\frac{{\bfx_r^t}{}^\top\bfW^{(s_0)}\bfx_p^t}{d})\bfx_r^t){\bfx_p^t}{}^\top\big]\tilde{\bfx}_{j'}\\
        \leq & \frac{1}{T}\sum_{t=1}^T(-\tilde{\bfx}_{j}^\top)\frac{\log d}{\sqrt{d}}\cdot \mathbb{E}_{\bfE,\bfX^0}\big[\sum_{p=1}^P(\bff(\bfW^{(s_0)};\sqrt{\bar{\alpha}_t}\bfX^0+\sqrt{1-\bar{\alpha}_t}\bfE,t)_p-\boldsymbol{\epsilon}_p)^\top (-\frac{v_t^{(s_0)}}{d^2P}) \sum_{i=1}^P\bfx_i^{t}\\
        &\cdot \text{softmax}_p(\frac{{\bfx_i^t} {}^\top\bfW^{(s_0)}\bfx_p^t}{d})\cdot (\bfx_i^t-\sum_{r=1}^P\text{softmax}_p(\frac{{\bfx_r^t}{}^\top\bfW^{(s_0)}\bfx_p^t}{d})\bfx_r^t){\bfx_p^t}{}^\top\big]\tilde{\bfx}_{j'}\\
        =& \frac{\log d}{\sqrt{d}}\cdot \frac{1}{T}\sum_{t=1}^T(-\tilde{\bfx}_{j}^\top)\mathbb{E}_{\bfE,\bfX^0}\Big[\frac{\partial L(\Psi^{(s_0)})}{\partial \bfW}\Big]\tilde{\bfx}_{j'}^t.
    \end{aligned}
\end{equation}
\end{proof}

\subsection{Proof of Lemma \ref{lemma: stage 1}}
\begin{proof}
\noindent For any $\tilde{\bfx}_j$ and $\tilde{\bfx}_{j'}$ with $\bfmu_u$ as the mean, where $\tilde{\bfX}$ is generated by Definition, denote  
\begin{equation}
\begin{aligned}
    Z^u_{j,j'}(s)=&\frac{\tilde{\bfx}_j^\top \bfW^{(s)}\tilde{\bfx}_{j'}}{d}\\
    =& \frac{\tilde{\bfx}_j^\top \bfW^{(0)}\tilde{\bfx}_{j'}}{d}-\eta\frac{1}{T}\sum_{t=1}^T\sum_{s_0=0}^{s-1}\sum_{p=1}^P\mathbb{E}_{\bfE,\bfX^0}\big[-\tilde{\bfx}_j^\top(\bff(\bfW^{(s_0)};\sqrt{\bar{\alpha}_t}\bfX^0+\sqrt{1-\bar{\alpha}_t}\bfE,t)_p-\boldsymbol{\epsilon}_p)^\top \\
    &\cdot(-\frac{v_t^{(s)}}{d^3P}) \sum_{i=1}^P\bfx_i^{t}\text{softmax}_p(\frac{{\bfx_i^t} {}^\top\bfW^{(s_0)}\bfx_p^t}{d})\cdot (\bfx_i^t-\sum_{r=1}^P\text{softmax}_p(\frac{{\bfx_r^t}{}^\top\bfW^{(s_0)}\bfx_p^t}{d})\bfx_r^t){\bfx_p^t}{}^\top\tilde{\bfx}_{j'}\big]
    \end{aligned}
\end{equation}

Denote $\gamma(u)=\eta \cdot\frac{1}{T}\sum_{t=1}^T(v_t^{(s)})^2\bar{\alpha}_t^3\cdot \nu_u^{\tilde{\boldsymbol{\pi}}}(K)$. Note that
\begin{equation}
    \sum_{j=1}^P\zeta_{j,j',t}^u(s)\gtrsim \frac{\sum_{p=1}^P\mathbbm{1}[Y_p=u]\cdot e^{Z_{j,j'}^u(s)}}{\sum_{p=1}^P\mathbbm{1}[Y_p=u]\cdot e^{Z_{j,j'}^u(s)}+\sum_{p=1}^P\mathbbm{1}[Y_p\neq u]}.
\end{equation}
When $Z_{j,j'}^u(s)\leq \Theta(\log \frac{1-\nu_u^{\tilde{\boldsymbol{\pi}}}(K)}{\nu_u^{\tilde{\boldsymbol{\pi}}}(K)})$, we have that by Lemma \ref{lemma: chernoff}, with a high probability,
\begin{equation}
    1-\sum_{j=1}^P\zeta_{j,j',t}^u(s)\gtrsim \frac{\sum_{p=1}^P\mathbbm{1}[Y_p\neq u]}{\sum_{p=1}^P\mathbbm{1}[Y_p= u]\cdot e^{Z_{j,j'}^u(s)}+\sum_{p=1}^P\mathbbm{1}[Y_p\neq u]}\gtrsim  (1-\nu_u^{\tilde{\boldsymbol{\pi}}}(K))e^{-Z_{j,j'}^u(s)}.
\end{equation}
Otherwise, 
\begin{equation}
    1-\sum_{j=1}^P\zeta_{j,j',t}^u(s)\gtrsim \frac{\sum_{p=1}^P\mathbbm{1}[Y_p\neq u]}{\sum_{p=1}^P\mathbbm{1}[Y_p= u]\cdot e^{Z^u_{j,j'}(s)}+\sum_{p=1}^P\mathbbm{1}[Y_p\neq u]}\gtrsim \frac{1-\nu_u^{\tilde{\boldsymbol{\pi}}}(K)}{\nu_u^{\tilde{\boldsymbol{\pi}}}(K)}\cdot e^{-Z^u_{j,j'}(s)}.
\end{equation}
Note that by (\ref{same mean W}) in Lemma \ref{lemma: W training},
\begin{equation}
    Z_{j,j'}^u(s)\gtrsim \eta\frac{1}{T}\sum_{t=1}^T\sum_{b=0}^{s-1}\mathbb{E}_{\bfE,\bfX^0}[\bar{\alpha}_t^3 (v_t^{(s)})^2 (1-\sum_{l=1}^P \zeta_{l,p,t}^u(b))^2\sum_{i=1}^P\zeta_{i,p,t}^u(b)]\cdot \nu_u^{\tilde{\boldsymbol{\pi}}}(K)
\end{equation}
We first prove that when $Z^u_{j,j'}(s)\leq \Theta(\log \frac{1-\nu_u^{\tilde{\boldsymbol{\pi}}}(K)}{\nu_u^{\tilde{\boldsymbol{\pi}}}(K)})$, we have $Z^u_{j,j'}(s)\geq \frac{1}{2}\log (1+2\gamma(u) (1-\nu_u^{\tilde{\boldsymbol{\pi}}}(K))^2\nu_u^{\tilde{\boldsymbol{\pi}}}(K)\cdot s)$ by induction. The conclusion holds when $s=0$. Suppose that this conclusion holds when $s\leq s_0$. Then, 
\begin{equation}
    \begin{aligned}
        &Z_{j,j'}^u(s+1)\\
        \geq &\eta\frac{1}{T}\sum_{t=1}^T\sum_{b=0}^{s-1}\mathbb{E}_{\bfE,\bfX^0}[(v_t^{(s)})^2\bar{\alpha}_t^3  (1-\sum_{l=1}^P \zeta_{l,p,t}^u(b))^2\sum_{i=1}^P\zeta_{i,p,t}^u(b)]\cdot \nu_u^{\tilde{\boldsymbol{\pi}}}(K)\\
        &+\frac{1}{T}\sum_{t=1}^T\mathbb{E}_{\bfE,\bfX^0}[(v_t^{(s)})^2\bar{\alpha}_t^3  (1-\sum_{l=1}^P \zeta_{l,p,t}^u(s))^2\sum_{i=1}^P\zeta_{i,p,t}^u(s)]\cdot \nu_u^{\tilde{\boldsymbol{\pi}}}(K)\\
        \geq &\frac{1}{2}\log (1+2\gamma(u) (1-\nu_u^{\tilde{\boldsymbol{\pi}}}(K))^2 \nu_u^{\tilde{\boldsymbol{\pi}}}(K) s)+\frac{1}{T}\sum_{t=1}^T\mathbb{E}_{\bfE,\bfX^0}[(v_t^{(s)})^2\bar{\alpha}_t^3  (1-\sum_{l=1}^P \zeta_{l,p,t}^u(s))^2\sum_{i=1}^P\zeta_{i,p,t}^u(s)]\cdot \nu_u^{\tilde{\boldsymbol{\pi}}}(K)\\
        \geq & \frac{1}{2}\log (1+2\cdot \gamma(u) (1-\nu_u^{\tilde{\boldsymbol{\pi}}}(K))^2\nu_u^{\tilde{\boldsymbol{\pi}}}(K)(s+1)),
    \end{aligned}
\end{equation}
where the last step is by 
\begin{equation}
    \begin{aligned}
        &\frac{1}{T}\sum_{t=1}^T\mathbb{E}_{\bfE,\bfX^0}[(v_t^{(s)})^2\bar{\alpha}_t^3  (1-\sum_{l=1}^P \zeta_{l,p,t}^u(s_0))^2\sum_{i=1}^P\zeta_{i,p,t}^u(s_0)]\cdot  \nu_u^{\tilde{\boldsymbol{\pi}}}(K)\\
        \geq & \frac{1}{T}\sum_{t=1}^T\mathbb{E}_{\bfE,\bfX^0}[(v_t^{(s)})^2\bar{\alpha}_t^3  \cdot e^{-2Z_{j,j'}(s)}]\cdot (1-\nu_u^{\tilde{\boldsymbol{\pi}}}(K))^2\cdot (\nu_u^{\tilde{\boldsymbol{\pi}}}(K))^2\\
        \geq & \frac{1}{T}\sum_{t=1}^T\mathbb{E}_{\bfE,\bfX^0}[(v_t^{(s)})^2\bar{\alpha}_t^3  \cdot \frac{1}{1+2\gamma(u) (1-\nu_u^{\tilde{\boldsymbol{\pi}}}(K))^2\nu_u^{\tilde{\boldsymbol{\pi}}}(K)s}]\cdot (\nu_u^{\tilde{\boldsymbol{\pi}}}(K))^2(1-\nu_u^{\tilde{\boldsymbol{\pi}}}(K))^2\\
        \geq & \frac{1}{2}\log (1+\frac{2\gamma(u) \nu_u^{\tilde{\boldsymbol{\pi}}}(K)(1-\nu_u^{\tilde{\boldsymbol{\pi}}}(K))^2}{1+2\gamma(u) \nu_u^{\tilde{\boldsymbol{\pi}}}(K)(1-\nu_u^{\tilde{\boldsymbol{\pi}}}(K))^2\cdot s}).
    \end{aligned}
\end{equation}
Therefore, the conclusion holds when $s=s_0+1$. When $s\geq I_0(u):=\frac{1}{\gamma(u) (\nu_u^{\tilde{\boldsymbol{\pi}}}(K))^3}$, we have $Z_{j,j'}^u(s)\geq \log \frac{1-\nu_u^{\tilde{\boldsymbol{\pi}}}(K)}{\nu_u^{\tilde{\boldsymbol{\pi}}}(K)}$. 
Then, we prove that $Z_{j,j'}^u(s)\geq \frac{1}{2}\log (1+2\gamma(u) \cdot (\frac{1-\nu_u^{\tilde{\boldsymbol{\pi}}}(K)}{\nu_u^{\tilde{\boldsymbol{\pi}}}(K)})^2(s-I_0(u))+2\gamma(u) (1-\nu_u^{\tilde{\boldsymbol{\pi}}}(K))^2\nu_u^{\tilde{\boldsymbol{\pi}}}(K) I_0(u))$ by induction for $Z_{j,j'}^u(s)\geq \Theta(\log \frac{1-\nu_u^{\tilde{\boldsymbol{\pi}}}(K)}{\nu_u^{\tilde{\boldsymbol{\pi}}}(K)})$. Suppose that this conclusion holds when $s\leq s_0$. Then, 
\begin{equation}
    \begin{aligned}
        Z_{j,j'}^u(s+1)\geq &\eta\frac{1}{T}\sum_{t=1}^T\sum_{b=0}^{s-1}\mathbb{E}_{\bfE,\bfX^0}[(v_t^{(s)})^2\bar{\alpha}_t^3  (1-\sum_{l=1}^P \zeta_{l,p,t}^u(b))^2\sum_{i=1}^P\zeta_{i,p,t}^u(b)]\cdot \nu_u^{\tilde{\boldsymbol{\pi}}}(K)\\
        &+\frac{1}{T}\sum_{t=1}^T\mathbb{E}_{\bfE,\bfX^0}[(v_t^{(s)})^2\bar{\alpha}_t^3  (1-\sum_{l=1}^P \zeta_{l,p,t}^u(s))^2\sum_{i=1}^P\zeta_{i,p,t}^u(s)]\cdot \nu_u^{\tilde{\boldsymbol{\pi}}}(K)\\
        \geq & \frac{1}{2}\log (1+2\gamma(u)\cdot (\frac{1-\nu_u^{\tilde{\boldsymbol{\pi}}}(K)}{\nu_u^{\tilde{\boldsymbol{\pi}}}(K)})^2 (s-I_0(u))+2\gamma(u) (1-\nu_u^{\tilde{\boldsymbol{\pi}}}(K))^2\nu_u^{\tilde{\boldsymbol{\pi}}}(K)I_0(u))\\
        &+\frac{1}{T}\sum_{t=1}^T\mathbb{E}_{\bfE,\bfX^0}[(v_t^{(s)})^2\bar{\alpha}_t^3  (1-\sum_{l=1}^P \zeta_{l,p,t}^u(s))^2\sum_{i=1}^P\zeta_{i,p,t}^u(s)] \nu_u^{\tilde{\boldsymbol{\pi}}}(K)\\
        \geq & \frac{1}{2}\log (1+2\cdot (\frac{1-\nu_u^{\tilde{\boldsymbol{\pi}}}(K)}{\nu_u^{\tilde{\boldsymbol{\pi}}}(K)})^2\gamma(u) (s+1-I_0(u))+2\gamma(u) (1-\nu_u^{\tilde{\boldsymbol{\pi}}}(K))^2\nu_u^{\tilde{\boldsymbol{\pi}}}(K)I_0(u)),
    \end{aligned}
\end{equation}
where the last step is by 
\begin{equation}
    \begin{aligned}
        &\frac{1}{T}\sum_{t=1}^T\mathbb{E}_{\bfE,\bfX^0}[(v_t^{(s)})^2\bar{\alpha}_t^3  (1-\sum_{l=1}^P \zeta_{l,p,t}^u(s))^2\sum_{i=1}^P\zeta_{i,p,t}^u(s)]\cdot \nu_u^{\tilde{\boldsymbol{\pi}}}(K)\\
        \geq &\frac{1}{T}\sum_{t=1}^T\mathbb{E}_{\bfE,\bfX^0}[(v_t^{(s)})^2\bar{\alpha}_t^3  e^{-2Z^u_{j,j}(s)}]\cdot \nu_u^{\tilde{\boldsymbol{\pi}}}(K) \cdot (\frac{1-\nu_u^{\tilde{\boldsymbol{\pi}}}(K)}{\nu_u^{\tilde{\boldsymbol{\pi}}}(K)})^2 \\
        \geq &  \frac{\gamma(u)\cdot \frac{(1-\nu_u^{\tilde{\boldsymbol{\pi}}}(K))^2}{(\nu_u^{\tilde{\boldsymbol{\pi}}}(K))^2}}{1+2\gamma(u) \cdot \frac{(1-\nu_u^{\tilde{\boldsymbol{\pi}}}(K))^2}{(\nu_u^{\tilde{\boldsymbol{\pi}}}(K))^2}(s-I_0(u))+2\gamma(u) (1-\nu_u^{\tilde{\boldsymbol{\pi}}}(K))^2\nu_u^{\tilde{\boldsymbol{\pi}}}(K)I_0(u)}\\
        \geq & \frac{1}{2}\log (1+\frac{2\gamma(u)\cdot\frac{(1-\nu_u^{\tilde{\boldsymbol{\pi}}}(K))^2}{(\nu_u^{\tilde{\boldsymbol{\pi}}}(K))^2}}{1+2\gamma(u) \cdot \frac{(1-\nu_u^{\tilde{\boldsymbol{\pi}}}(K))^2}{(\nu_u^{\tilde{\boldsymbol{\pi}}}(K))^2}(s-I_0(u))+2\gamma(u) (1-\nu_u^{\tilde{\boldsymbol{\pi}}}(K))^2\nu_u^{\tilde{\boldsymbol{\pi}}}(K)I_0(u)}).
    \end{aligned}
\end{equation}
Then,
\begin{equation}
\begin{aligned}
    Z_{j,j'}(\gamma(u)^{-1}\epsilon^{-1}+I_0(u))\geq &\frac{1}{2}\log (\gamma(u)(\frac{(1-\nu_u^{\tilde{\boldsymbol{\pi}}}(K))}{\nu_u^{\tilde{\boldsymbol{\pi}}}(K)})^2\gamma(u)^{-1}\epsilon^{-1}+(\frac{1-\nu_u^{\tilde{\boldsymbol{\pi}}}(K)}{\nu_u^{\tilde{\boldsymbol{\pi}}}(K)})^2)\\
    \gtrsim &\frac{1}{2}\log ((\frac{(1-\nu_u^{\tilde{\boldsymbol{\pi}}}(K))}{\nu_u^{\tilde{\boldsymbol{\pi}}}(K)})^2\epsilon^{-1}),\label{Z lower bound}
    \end{aligned}
\end{equation}
where the second step holds if $\epsilon\leq 1$. Let $u^*=\arg\min_{u\in [M]}\{\nu_u^{\tilde{\boldsymbol{\pi}}}(K)\}$. We have
\begin{equation}
\begin{aligned}
    &Z_{j,j'}(\gamma(u^*)^{-1}\epsilon^{-1}+I_0(u^*))\\
    \gtrsim &\frac{1}{2}\log ((\frac{1-\nu_u^{\tilde{\boldsymbol{\pi}}}(K)}{\nu_u^{\tilde{\boldsymbol{\pi}}}(K)})^2\epsilon^{-1}+\gamma(u)(\frac{1-\nu_u^{\tilde{\boldsymbol{\pi}}}(K)}{\nu_u^{\tilde{\boldsymbol{\pi}}}(K)})^2(I_0(u^*)-I_0(u)+\epsilon^{-1}(\gamma(u^*)^{-1}-\gamma(u)^{-1}))).
    \end{aligned}
\end{equation}
Hence, as long as
\begin{equation}
    \gamma(u)(\frac{1-\nu_u^{\tilde{\boldsymbol{\pi}}}(K)}{\nu_u^{\tilde{\boldsymbol{\pi}}}(K)})^2\epsilon
    ^{-1}(\gamma(u^*)^{-1}-\gamma(u)^{-1})\lesssim \text{poly}((\frac{(1-\nu_u^{\tilde{\boldsymbol{\pi}}}(K))}{\nu_u^{\tilde{\boldsymbol{\pi}}}(K)})^2\epsilon^{-1}), 
\end{equation}
and
\begin{equation}
    \gamma(u)(\frac{1-\nu_u^{\tilde{\boldsymbol{\pi}}}(K)}{\nu_u^{\tilde{\boldsymbol{\pi}}}(K)})^2(I_0(u^*)-I_0(u))\lesssim \text{poly}((\frac{(1-\nu_u^{\tilde{\boldsymbol{\pi}}}(K))}{\nu_u^{\tilde{\boldsymbol{\pi}}}(K)})^2\epsilon^{-1}), 
\end{equation}
which hold if 

\begin{equation}
    \epsilon\leq \delta^{\Theta(1)},
\end{equation}
we have
\begin{equation}
    Z_{j,j'}(\epsilon^{-1}\gamma(u)^{-1}+I_0(u))=\Theta( Z_{j,j'}(\epsilon^{-1}\gamma(u^*)^{-1}+I_0(u^*))).
\end{equation}
We then have
\begin{equation}
\begin{aligned}
    &\Big\|\frac{\partial L(\Psi^{(s)};\bfX^t,t)}{\partial \bfW^{(s)}}|_{s=I}\Big\|\\
    = &\max_{\bfx\neq 0}\frac{\|\frac{\partial L(\Psi^{(s)};\bfX^t,t)}{\partial \bfW^{(s)}}|_{s=I}\bfx\|}{\|\bfx\|}\\
    \leq & \max_{u\in[M]}\Big\|\bfmu_u\frac{\partial L(\Psi^{(s)};\bfX^t,t)}{\partial \bfW^{(s)}}\Big|_{s=I}\bfmu_u\Big\|\cdot \frac{1}{d}\\
    \lesssim & \gamma(u) \cdot  (\frac{1-\nu_u^{\tilde{\boldsymbol{\pi}}}(K)}{\nu_u^{\tilde{\boldsymbol{\pi}}}(K)})^2\cdot e^{-2Z^u_{j,j}(s)}\\
    \leq & \gamma(u)\epsilon^2,
    \end{aligned}
\end{equation}
as long as 
\begin{equation}
\begin{aligned}
    s\gtrsim &\gamma(u)^{-1}\epsilon^{-1}+I_0(u)\\
    \gtrsim &(\epsilon^{-1}+\nu_u^{\tilde{\boldsymbol{\pi}}}(K)^{-3})\eta^{-1}\nu_u^{\tilde{\boldsymbol{\pi}}}(K)^{-1}\mathbb{E}_t [\bar{\alpha}_t/(1-\bar{\alpha}_t)]^{-3}(1-\alpha_1)^{-3}\\
    \gtrsim & (\epsilon^{-1}+\nu_u^{\tilde{\boldsymbol{\pi}}}(K)^{-3})\eta^{-1}\nu_u^{\tilde{\boldsymbol{\pi}}}(K)^{-1}\textrm{SNR}^{-3}(1-\alpha_1)^{-3},
    \end{aligned}
\end{equation}
since $\bar{\alpha}_t^3\geq (1-\alpha_1)^3\bar{\alpha}_t^3/(1-\bar{\alpha}_t)^3$. By Jensen's inequality, we have 
\begin{equation}
    \frac{1}{T}\sum_{t=1}^T\frac{\bar{\alpha}_t^3}{(1-\bar{\alpha}_t)^3}\geq \Big(\frac{1}{T}\sum_{t=1}^T\frac{\bar{\alpha}_t}{(1-\bar{\alpha}_t)}\Big)^3.
\end{equation}
$\frac{1}{T}\sum_{t=1}^T\frac{\bar{\alpha}_t}{(1-\bar{\alpha}_t)}$ can be approximated by $\textrm{SNR}$, which comes from Hoeffding's inequality (\ref{hoeffding}), i.e., with a high probability,
\begin{equation}
    \Big|\frac{1}{T}\sum_{t=1}^T \frac{\bar{\alpha}_t}{1-\bar{\alpha}_t}-\textrm{SNR}\Big|\leq \frac{\alpha_1}{1-\alpha_1}\sqrt{\frac{\log d}{T}}
\end{equation}
Therefore, when $T\gtrsim \log d$, the required condition for the number of iterations is 
\begin{equation}
    s\gtrsim I_1:= (\epsilon^{-1}+ \min_{u\in [M]}\{\nu_u^{\tilde{\boldsymbol{\pi}}}(K)\}^{-3})\eta^{-1}\min_{u\in[M]}\{\nu_u^{\tilde{\boldsymbol{\pi}}}(K)\}^{-1}\textrm{SNR}^{-3}.
\end{equation}
When $s\gtrsim I_1$, we have
\begin{equation}
    1-\sum_{j=1}^P\zeta_{j,j',t}^u(s)\gtrsim \sqrt{\epsilon}.\label{1-zeta final} 
\end{equation}
\end{proof}

\subsection{Proof of Lemma \ref{lemma: stage 2}}

\begin{proof}
First, for any $u\in [M]$ and $Y_p=u$, by (\ref{zeta_uniform}), we know that for $i$, $i'$, such that $Y_i=Y_{i'}=u$, we have $\zeta^u_{i,p,t}(s)=(1+\log \epsilon^{-1}(\min_{u\in [M]}\mathbb{E}[\pi_u])^{-1}-1)^2/d)\cdot \zeta^u_{i',p,t}(s)$ for any $s\leq I$. Then, by Hoeffding's inequality (\ref{hoeffding}),
\begin{equation}
\begin{aligned}
    &\Big\|\sum_{i=1}^P \mathbbm{1}[Y_i=u] \zeta^u_{i,p,t}(s)(\bfx_i^t-\sqrt{\bar{\alpha}_t}\bfmu_u)\Big\|\\
    =&\Big\|\sum_{i=1}^P \mathbbm{1}[Y_i=u] \zeta^u_{i,p,t}(s)(\bfx_i^t-\sqrt{\bar{\alpha}_t}\bfmu_u)-\mathbb{E}[\sum_{i=1}^P\mathbbm{1}[Y_i=u] \zeta^u_{i,p,t}(s)(\bfx_i^t-\sqrt{\bar{\alpha}_t}\bfmu_u)]\Big\|\\
    \leq &\Big\|\sum_{i=1}^P \mathbbm{1}[Y_i=u] \frac{(1+\frac{\log d \log \epsilon^{-1}(\min_{u\in [M]}(\nu_u^{\tilde{\boldsymbol{\pi}}}(K))^{-1}-1)^2}{d})}{\sum_{i=1}^P \mathbbm{1}[Y_i=u]}(\bfx_i^t-\sqrt{\bar{\alpha}_t}\bfmu_u)-\mathbb{E}[\sum_{i=1}^P\mathbbm{1}[Y_i=u] \zeta^u_{i,p,t}(s)(\bfx_i^t-\sqrt{\bar{\alpha}_t}\bfmu_u)]\Big\|\\
    \lesssim & (1+\frac{\log d\log \epsilon^{-1}(\min_{u\in [M]}(\nu_u^{\tilde{\boldsymbol{\pi}}}(K))^{-1}-1)^2}{d})\cdot \sqrt{1-\bar{\alpha}_t+\rho^2\bar{\alpha}_t}\cdot\sqrt{\frac{d\log d}{\sum_{i=1}^P \mathbbm{1}[Y_i=u]}},
    \end{aligned}\label{Yi=u hoeffding}
\end{equation}
with a high probability. 
We also have a similar result for $i$ such that $Y_i\neq u$. 

\noindent We next study the training phase of the model after $I_1$ iterations. Let $\bfmu_u$ be the mean of $\bfx_p^t$. Then, we have
\begin{equation}
    \begin{aligned}
        &\|(\sqrt{\bar{\alpha}}_t\bfmu_u-\sum_{i=1}^P \bfx_i^t\text{softmax}_p(\frac{{\bfx_i^t}^\top\bfW^{(I_1)}\bfx^t_p}{d}))\|^2\\
        \lesssim & ((1-\sum_{i=1}^P\zeta^u_{i,p,t}(I_1))^2\bar{\alpha}_t d+\frac{(\rho^2\bar{\alpha}_t+1-\bar{\alpha}_t)(1+\frac{\log d \log \epsilon^{-1}(\min_{u\in [M]}((\min_{u\in [M]}(\nu_u^{\tilde{\boldsymbol{\pi}}}(K))^{-1}-1)^2)^{-1}-1)^2}{d})^2\log d}{\sum_{i=1}^P\mathbbm{1}[Y_p=u]}d)\\
        \leq & d\epsilon,\label{stage 2 second term}
    \end{aligned}
\end{equation}
where the second term of the first step is by (\ref{Yi=u hoeffding}), and the last step is by (\ref{1-zeta final}) and holds if
\begin{equation}
    \begin{aligned}
        \sum_{i=1}^P\mathbbm{1}[Y_i=u]
        \geq  (\rho^2+1)\epsilon^{-1}\log d.\label{S_lb_1}
    \end{aligned}
\end{equation}
for any $u\in [M]$ as long as $\epsilon\in (0, \delta^{\Theta(1)})$ with $d\gtrsim \log \epsilon^{-1}\nu_{\min}^{\tilde{\boldsymbol{\pi}}}(K)^{-1}$, which is equivalent to
\begin{equation}
    P\gtrsim \min_{u\in[M]}\{\nu_u^{\tilde{\boldsymbol{\pi}}}(K)\}^{-1}(\rho^2+1)\epsilon^{-1}\log d,
\end{equation}
with a high probability, because
\begin{equation}
\begin{aligned}
    &\Pr\Big(\frac{1}{P}\sum_{i=1}^P\mathbbm{1}[Y_i=\arg\min_{u\in [M]}\gamma(u)]\geq \Big(1+\sqrt{\frac{\log dK}{P\min_{u\in[M]}\{\nu_u^{\tilde{\boldsymbol{\pi}}}(K)\}}}\Big)\min_{u\in[M]}\{\nu_u^{\tilde{\boldsymbol{\pi}}}(K)\}\Big)\\
    \leq & e^{-\frac{P\min_{u\in[M]}\{\nu_u^{\tilde{\boldsymbol{\pi}}}(K)\}\cdot \frac{\log dK}{P\min_{u\in[M]}\{\nu_u^{\tilde{\boldsymbol{\pi}}}(K)\}}}{3}}\\
    \leq & (dK)^{-C}
    \end{aligned}
\end{equation}
for some $C>1$. We reduce $\log dK$ to $\log d$ in the final bound of $P$ since $d\geq M\geq K$. 
Denote $\beta_1=\boldsymbol{\epsilon}_p'{}^\top (\sqrt{\bar{\alpha}}_t\bfmu_u-\sum_{i=1}^P \bfx_i^t\text{softmax}_p(\frac{{\bfx_i^t}^\top\bfW^{(I_1)}\bfx^t_p}{d}))$ and $\beta_2=\boldsymbol{\epsilon}_p^\top (\sqrt{\bar{\alpha}}_t\bfmu_u-\sum_{i=1}^P \bfx_i^t\text{softmax}_p(\frac{{\bfx_i^t}^\top\bfW^{(I_1)}\bfx^t_p}{d}))$. Note that by Cauchy-Schwarz inequality,
\begin{equation}
    \mathbb{E}_{\bfE,\bfX^0}[\beta_1], \mathbb{E}_{\bfE,\bfX^0}[\beta_2]\leq \sqrt{d\epsilon}
\end{equation}
Therefore, with (\ref{stage 2 second term}), we have

\begin{equation}
    \begin{aligned}
        &\mathbb{E}_{\bfE,\bfX^0} [\|v_t^{(I_1)}(\bfx_p^t-\sum_{i=1}^P \bfx_i^t\text{softmax}_p(\frac{{\bfx_i^t}^\top\bfW^{(I_1)}\bfx^t_p}{d}))-\boldsymbol{\epsilon}_p\|^2/d]\\
        =& \mathbb{E}_{\bfE,\bfX^0} [\|v_t^{(I_1)}\rho\sqrt{\bar{\alpha}}_t\boldsymbol{\epsilon}_p'+(v_t^{(I_1)}\sqrt{1-\bar{\alpha}_t}-1)\boldsymbol{\epsilon}_p+v_t^{(I_1)}(\sqrt{\bar{\alpha}}_t\bfmu_u-\sum_{i=1}^P \bfx_i^t\text{softmax}_p(\frac{{\bfx_i^t}^\top\bfW^{(I_1)}\bfx^t_p}{d}))\|^2/d]\\
        = & \mathbb{E}_{\bfE,\bfX^0} [\|v_t^{(I_1)}\rho\sqrt{\bar{\alpha}}_t\boldsymbol{\epsilon}_p'+(v_t^{(I_1)}\sqrt{1-\bar{\alpha}_t}-1)\boldsymbol{\epsilon}_p\|/d]^2+(v_t^{(I_1)})^2\mathbb{E}_{\bfE,\bfX^0}[\|\sqrt{\bar{\alpha}}_t\bfmu_u\\
        &-\sum_{i=1}^P \bfx_i^t\text{softmax}_p(\frac{{\bfx_i^t}^\top\bfW^{(I_1)}\bfx^t_p}{d})\|^2/d]+ 2\mathbb{E}_{\bfE,\bfX^0}[\frac{\beta_1 v_t^{(I_1)}{}^2\sqrt{\bar{\alpha}_t}}{d}]+2\mathbb{E}_{\bfE,\bfX^0}[\frac{\beta_2(v_t^{(I_1)}{}^2\sqrt{1-\bar{\alpha}_t}-1)}{d}]\\
        =& ((v_t^{(I_1)}{})^2\rho^2\bar{\alpha}_t+(v_t^{(I_1)}\sqrt{1-\bar{\alpha}_t}-1)^2)+\mathbb{E}_{\bfE,\bfX^0}[(v_t^{(I_1)})^2\|\sqrt{\bar{\alpha}}_t\bfmu_u-\sum_{i=1}^P \bfx_i^t\text{softmax}_p(\frac{{\bfx_i^t}^\top\bfW^{(I_1)}\bfx^t_p}{d})\|^2/d]\\
        &+ 2\mathbb{E}_{\bfE,\bfX^0}[\frac{\beta_1 v_t^{(I_1)}{}^2\rho\sqrt{\bar{\alpha}_t}}{d}]+2\mathbb{E}_{\bfE,\bfX^0}[\frac{\beta_2v_t^{(I_1)}(v_t^{(I_1)}\sqrt{1-\bar{\alpha}_t}-1)}{d}]
    \end{aligned}
\end{equation}
Since that $\mathbb{E}_t[\bar{\alpha}_t^2]<\infty$, we have
\begin{equation}
    \begin{aligned}
         &\frac{\partial \mathbb{E}_{\bfE,\bfX^0} [\|v_t (\bfx_p^t-\sum_{i=1}^P \bfx_i^t\text{softmax}_p(\frac{{\bfx_i^t}^\top\bfW^{(I_1)}\bfx^t_p}{d}))-\boldsymbol{\epsilon}_p\|^2/d]}{\partial v_t}\\
         =& 2v_t\rho^2\bar{\alpha}_t+2\sqrt{1-\bar{\alpha}_t}(v_t\sqrt{1-\bar{\alpha}_t}-1)+\mathbb{E}_{\bfE,\bfX^0}[2v_t\|\sqrt{\bar{\alpha}}_t\bfmu_u-\sum_{i=1}^P \bfx_i^t\text{softmax}_p(\frac{{\bfx_i^t}^\top\bfW^{(I_1)}\bfx^t_p}{d})\|^2/d]\\
        &+ 4\mathbb{E}_{\bfE,\bfX^0}[\frac{\beta_1 v_t\rho\sqrt{\bar{\alpha}_t}}{d}]+2\mathbb{E}_{\bfE,\bfX^0}[\frac{2\beta_2\sqrt{1-\bar{\alpha}_t}v_t-\beta_2}{d}]\\
        =& 2(v_t\cdot \mathbb{E}_{\bfE,\bfX^0}[\bar{\alpha}_t\rho^2+1-\bar{\alpha}_t+\|\sqrt{\bar{\alpha}}_t\bfmu_u-\sum_{i=1}^P \bfx_i^t\text{softmax}_p(\frac{{\bfx_i^t}^\top\bfW^{(I_1)}\bfx^t_p}{d})\|^2/d\\
        &+\frac{2\beta_1\rho\sqrt{\bar{\alpha}_t}}{d}+\frac{2\beta_2\sqrt{1-\bar{\alpha}_t}}{d}]-(\sqrt{1-\bar{\alpha}_t}+\frac{\beta_2}{d})).\label{gd v_t}
    \end{aligned}
\end{equation}
Denote 
\begin{equation}
    v_t^*=\frac{\sqrt{1-\bar{\alpha}_t}+\frac{\beta_2}{d}}{\bar{\alpha}_t\rho^2+1-\bar{\alpha}_t+\mathbb{E}_{\bfE,\bfX^0}[\|\sqrt{\bar{\alpha}}_t\bfmu_u-\sum_{i=1}^P \bfx_i^t\text{softmax}_p(\frac{{\bfx_i^t}^\top\bfW^{(I_1)}\bfx^t_p}{d})\|^2/d]+2\frac{\beta_1\rho\sqrt{\bar{\alpha}_t}+\beta_2\sqrt{1-\bar{\alpha}_t}}{d}}.
\end{equation}
Define the error $e_t(s)=v_t^{(s)}-v_t^*$. Then, by gradient update,
\begin{equation}
    \begin{aligned}
    &e_t(s+1)\\
    =&v_t^{(s+1)}-v_t^*\\
       =& v_t^{(s)}-v_t^*-2\eta \cdot(v_t^{(s)}\cdot \mathbb{E}_{\bfE,\bfX^0}[\bar{\alpha}_t\rho^2+1-\bar{\alpha}_t+\|\sqrt{\bar{\alpha}}_t\bfmu_u-\sum_{i=1}^P \bfx_i^t\text{softmax}_p(\frac{{\bfx_i^t}^\top\bfW^{(I_1)}\bfx^t_p}{d})\|^2/d\\
        &+\frac{2\beta_1\rho\sqrt{\bar{\alpha}_t}}{d}+\frac{2\beta_2\sqrt{1-\bar{\alpha}_t}}{d}]-(\sqrt{1-\bar{\alpha}_t}+\frac{\beta_2}{d}))\\
        =& (v_t^{(s)}-v_t^*)-2\eta\cdot \mathbb{E}_{\bfE,\bfX^0}[\bar{\alpha}_t\rho^2+1-\bar{\alpha}_t+\|\sqrt{\bar{\alpha}}_t\bfmu_u-\sum_{i=1}^P \bfx_i^t\text{softmax}_p(\frac{{\bfx_i^t}^\top\bfW^{(I_1)}\bfx^t_p}{d})\|^2/d\\
        &+\frac{2\beta_1\rho\sqrt{\bar{\alpha}_t}}{d}+\frac{2\beta_2\sqrt{1-\bar{\alpha}_t}}{d}]\cdot (v_t^{(s)}-v_t^*)\\
        =& (1-2\eta \cdot \mathbb{E}_{\bfE,\bfX^0}[\bar{\alpha}_t\rho^2+1-\bar{\alpha}_t+\|\sqrt{\bar{\alpha}}_t\bfmu_u-\sum_{i=1}^P \bfx_i^t\text{softmax}_p(\frac{{\bfx_i^t}^\top\bfW^{(I_1)}\bfx^t_p}{d})\|^2/d\\
        &+\frac{2\beta_1\rho\sqrt{\bar{\alpha}_t}}{d}+\frac{2\beta_2\sqrt{1-\bar{\alpha}_t}}{d}])\cdot e_t(s).
    \end{aligned}
\end{equation}
Hence, given 
\begin{equation}
\begin{aligned}
\eta\lesssim &(\mathbb{E}_{\bfE,\bfX^0}[\bar{\alpha}_t\rho^2+1-\bar{\alpha}_t+\|\sqrt{\bar{\alpha}}_t\bfmu_u-\sum_{i=1}^P \bfx_i^t\text{softmax}_p(\frac{{\bfx_i^t}^\top\bfW^{(I_1)}\bfx^t_p}{d})\|^2/d+\frac{2\beta_1\rho\sqrt{\bar{\alpha}_t}}{d}\\
&+\frac{2\beta_2\sqrt{1-\bar{\alpha}_t}}{d}])^{-1/2},
\end{aligned}
\end{equation}
i.e., 
\begin{equation}
\eta\lesssim \frac{1}{\max\{\rho,1\}+\epsilon} 
\end{equation}
by (\ref{stage 2 second term}), we can derive
\begin{equation}
\begin{aligned}
    |e_t(s)|\leq &(1-2\eta \cdot \mathbb{E}_{\bfE,\bfX^0}[\bar{\alpha}_t\rho^2+1-\bar{\alpha}_t+\|\sqrt{\bar{\alpha}}_t\bfmu_u-\sum_{i=1}^P \bfx_i^t\text{softmax}_p(\frac{{\bfx_i^t}^\top\bfW^{(I_1)}\bfx^t_p}{d})\|^2/d\\
        &+\frac{2\beta_1\rho\sqrt{\bar{\alpha}_t}}{d}+\frac{2\beta_2\sqrt{1-\bar{\alpha}_t}}{d}])^s \cdot |e_t(I_1)|,
        \end{aligned}
\end{equation}
which means after 
\begin{equation}
\begin{aligned}
    s\gtrsim I_2:=&\log \frac{|e_t(I_1)|}{\epsilon}/\log (1-2\eta \cdot \mathbb{E}_{\bfE,\bfX^0}[\bar{\alpha}_t\rho^2+1-\bar{\alpha}_t+\|\sqrt{\bar{\alpha}}_t\bfmu_u-\sum_{i=1}^P \bfx_i^t\text{softmax}_p(\frac{{\bfx_i^t}^\top\bfW^{(I_1)}\bfx^t_p}{d})\|^2/d\\
        &+\frac{2\beta_1\rho\sqrt{\bar{\alpha}_t}}{d}+\frac{2\beta_2\sqrt{1-\bar{\alpha}_t}}{d}])^{-1}\\
        =& \Theta(\log \frac{|e_t(I_1)|}{\epsilon})
    \end{aligned}
\end{equation}
iterations, we can achieve that $|e_t(s)|\leq \epsilon$ and $v_t^{(s)}$ converges to $v_t^*$. Given $v_t^{(0)}=\Theta(1)$, by (\ref{gd v_t}), we have that for any $v_t\leq \Theta(1)$,
\begin{equation}
\begin{aligned}
    &\frac{\partial \mathbb{E}_{\bfE,\bfX^0} [\|v_t (\bfx_p^t-\sum_{i=1}^P \bfx_i^t\text{softmax}_p(\frac{{\bfx_i^t}^\top\bfW^{(I_1)}\bfx^t_p}{d}))-\boldsymbol{\epsilon}_p\|^2/d]}{\partial v_t}\\
    \leq & \mathbb{E}_{\bfE,\bfX^0}[\bar{\alpha}_t\rho^2+1-\bar{\alpha}_t+\bar{\alpha}_t+\frac{2\beta_1\rho\sqrt{\bar{\alpha}_t}}{d}+\frac{2\beta_2\sqrt{1-\bar{\alpha}_t}}{d}]-(\sqrt{1-\bar{\alpha}_t}+\frac{\beta_2}{d})]\\
    \lesssim & \rho^2+1,
    \end{aligned}
\end{equation}
where the last step is by $\beta_1\sim \mathcal{N}(0, \|\sqrt{\bar{\alpha}}_t\bfmu_u-\sum_{i=1}^P \bfx_i^t\text{softmax}_p(\frac{{\bfx_i^t}^\top\bfW^{(I_1)}\bfx^t_p}{d})\|^2)$, $\beta_2\sim \mathcal{N}(0, \|\sqrt{\bar{\alpha}}_t\bfmu_u-\sum_{i=1}^P \bfx_i^t\text{softmax}_p(\frac{{\bfx_i^t}^\top\bfW^{(I_1)}\bfx^t_p}{d})\|^2)$, so that with a high probability, 
\begin{equation}
    \beta_1, \beta_2\lesssim \sqrt{d}.
\end{equation}
Hence,
\begin{equation}
    \begin{aligned}
        |v_t^{(I_1)}|=&\Big|v_t^{(0)}-\sum_{s=0}^{I_1-1}\frac{\partial \mathbb{E}_{\bfE,\bfX^0} [\|v_t^{(s)} (\bfx_p^t-\sum_{i=1}^P \bfx_i^t\text{softmax}_p(\frac{{\bfx_i^t}^\top\bfW^{(I_1)}\bfx^t_p}{d}))-\boldsymbol{\epsilon}_p\|^2/d]}{\partial v_t}\Big|\\
        \lesssim & I_1(\rho^2+1),
    \end{aligned}
\end{equation}
and 
\begin{equation}
    |e_t(I_1)|\lesssim I_1(\rho^2+1).
\end{equation}
Then, 
\begin{equation}
    I_2=\Theta(\log \frac{I_1(\rho^2+1)}{\epsilon}).
\end{equation}

\end{proof}

%% file: main.bbl
\begin{thebibliography}{64}
\providecommand{\natexlab}[1]{#1}
\providecommand{\url}[1]{\texttt{#1}}
\expandafter\ifx\csname urlstyle\endcsname\relax
  \providecommand{\doi}[1]{doi: #1}\else
  \providecommand{\doi}{doi: \begingroup \urlstyle{rm}\Url}\fi

\bibitem[Allen-Zhu \& Li(2023)Allen-Zhu and Li]{AL23}
Allen-Zhu, Z. and Li, Y.
\newblock Towards understanding ensemble, knowledge distillation and self-distillation in deep learning.
\newblock In \emph{The Eleventh International Conference on Learning Representations}, 2023.

\bibitem[Arriola et~al.(2025)Arriola, Sahoo, Gokaslan, Yang, Qi, Han, Chiu, and Kuleshov]{ASGY25}
Arriola, M., Sahoo, S.~S., Gokaslan, A., Yang, Z., Qi, Z., Han, J., Chiu, J.~T., and Kuleshov, V.
\newblock Block diffusion: Interpolating between autoregressive and diffusion language models.
\newblock In \emph{The Thirteenth International Conference on Learning Representations}, 2025.
\newblock URL \url{https://openreview.net/forum?id=tyEyYT267x}.

\bibitem[Azangulov et~al.(2024)Azangulov, Deligiannidis, and Rousseau]{ADR24}
Azangulov, I., Deligiannidis, G., and Rousseau, J.
\newblock Convergence of diffusion models under the manifold hypothesis in high-dimensions.
\newblock \emph{arXiv preprint arXiv:2409.18804}, 2024.

\bibitem[Bar-Tal et~al.(2024)Bar-Tal, Chefer, Tov, Herrmann, Paiss, Zada, Ephrat, Hur, Liu, Raj, et~al.]{BCTH24}
Bar-Tal, O., Chefer, H., Tov, O., Herrmann, C., Paiss, R., Zada, S., Ephrat, A., Hur, J., Liu, G., Raj, A., et~al.
\newblock Lumiere: A space-time diffusion model for video generation.
\newblock In \emph{SIGGRAPH Asia 2024 Conference Papers}, pp.\  1--11, 2024.

\bibitem[Boffi et~al.(2025)Boffi, Jacot, Tu, and Ziemann]{BJTZ25}
Boffi, N.~M., Jacot, A., Tu, S., and Ziemann, I.
\newblock Shallow diffusion networks provably learn hidden low-dimensional structure.
\newblock In \emph{The Thirteenth International Conference on Learning Representations}, 2025.

\bibitem[Bonnaire et~al.(2025)Bonnaire, Urfin, Biroli, and M{\'e}zard]{BUBM25}
Bonnaire, T., Urfin, R., Biroli, G., and M{\'e}zard, M.
\newblock Why diffusion models don't memorize: The role of implicit dynamical regularization in training.
\newblock \emph{arXiv preprint arXiv:2505.17638}, 2025.

\bibitem[Cai et~al.(2025)Cai, Cun, Li, Liu, Zhang, Zhang, Shan, and Yue]{CCLL25}
Cai, M., Cun, X., Li, X., Liu, W., Zhang, Z., Zhang, Y., Shan, Y., and Yue, X.
\newblock Ditctrl: Exploring attention control in multi-modal diffusion transformer for tuning-free multi-prompt longer video generation.
\newblock In \emph{Proceedings of the Computer Vision and Pattern Recognition Conference}, pp.\  7763--7772, 2025.

\bibitem[Chen et~al.(2023)Chen, Chewi, Li, Li, Salim, and Zhang]{CCLL23}
Chen, S., Chewi, S., Li, J., Li, Y., Salim, A., and Zhang, A.
\newblock Sampling is as easy as learning the score: theory for diffusion models with minimal data assumptions.
\newblock In \emph{The Eleventh International Conference on Learning Representations}, 2023.

\bibitem[Han et~al.(2025)Han, Huang, Cao, and Zou]{HHCZ25}
Han, A., Huang, W., Cao, Y., and Zou, D.
\newblock On the feature learning in diffusion models.
\newblock In \emph{The Thirteenth International Conference on Learning Representations}, 2025.

\bibitem[Han et~al.(2024)Han, Razaviyayn, and Xu]{HRX24}
Han, Y., Razaviyayn, M., and Xu, R.
\newblock Neural network-based score estimation in diffusion models: Optimization and generalization.
\newblock In \emph{The Twelfth International Conference on Learning Representations}, 2024.

\bibitem[Ho et~al.(2020)Ho, Jain, and Abbeel]{HJA20}
Ho, J., Jain, A., and Abbeel, P.
\newblock Denoising diffusion probabilistic models.
\newblock \emph{Advances in neural information processing systems}, 33:\penalty0 6840--6851, 2020.

\bibitem[Hoogeboom et~al.(2022)Hoogeboom, Satorras, Vignac, and Welling]{HSVW22}
Hoogeboom, E., Satorras, V.~G., Vignac, C., and Welling, M.
\newblock Equivariant diffusion for molecule generation in 3d.
\newblock In \emph{International conference on machine learning}, pp.\  8867--8887. PMLR, 2022.

\bibitem[Huang et~al.(2023)Huang, Cheng, and Liang]{HCL23}
Huang, Y., Cheng, Y., and Liang, Y.
\newblock In-context convergence of transformers.
\newblock In \emph{NeurIPS 2023 Workshop on Mathematics of Modern Machine Learning}, 2023.

\bibitem[Huang et~al.(2024{\natexlab{a}})Huang, Wen, Chi, and Liang]{HWCL24}
Huang, Y., Wen, Z., Chi, Y., and Liang, Y.
\newblock Transformers provably learn feature-position correlations in masked image modeling.
\newblock \emph{arXiv preprint arXiv:2403.02233}, 2024{\natexlab{a}}.

\bibitem[Huang et~al.(2024{\natexlab{b}})Huang, Wei, and Chen]{HWC24}
Huang, Z., Wei, Y., and Chen, Y.
\newblock Denoising diffusion probabilistic models are optimally adaptive to unknown low dimensionality.
\newblock \emph{arXiv preprint arXiv:2410.18784}, 2024{\natexlab{b}}.

\bibitem[Jacot et~al.(2018)Jacot, Gabriel, and Hongler]{JGH18}
Jacot, A., Gabriel, F., and Hongler, C.
\newblock Neural tangent kernel: Convergence and generalization in neural networks.
\newblock \emph{Advances in neural information processing systems}, 31, 2018.

\bibitem[Jelassi et~al.(2022)Jelassi, Sander, and Li]{JSL22}
Jelassi, S., Sander, M., and Li, Y.
\newblock Vision transformers provably learn spatial structure.
\newblock \emph{Advances in Neural Information Processing Systems}, 35:\penalty0 37822--37836, 2022.

\bibitem[Jiang et~al.(2024)Jiang, Huang, Zhang, Suzuki, and Nie]{JHZS24}
Jiang, J., Huang, W., Zhang, M., Suzuki, T., and Nie, L.
\newblock Unveil benign overfitting for transformer in vision: Training dynamics, convergence, and generalization.
\newblock In \emph{The Thirty-eighth Annual Conference on Neural Information Processing Systems}, 2024.
\newblock URL \url{https://openreview.net/forum?id=FGJb0peY4R}.

\bibitem[Kong et~al.(2021)Kong, Ping, Huang, Zhao, and Catanzaro]{KPHZ21}
Kong, Z., Ping, W., Huang, J., Zhao, K., and Catanzaro, B.
\newblock Diffwave: A versatile diffusion model for audio synthesis.
\newblock In \emph{International Conference on Learning Representations}, 2021.

\bibitem[LeCun et~al.(2002)LeCun, Bottou, Bengio, and Haffner]{LBBH02}
LeCun, Y., Bottou, L., Bengio, Y., and Haffner, P.
\newblock Gradient-based learning applied to document recognition.
\newblock \emph{Proceedings of the IEEE}, 86\penalty0 (11):\penalty0 2278--2324, 2002.

\bibitem[Li \& Yan(2024)Li and Yan]{LY25}
Li, G. and Yan, Y.
\newblock Adapting to unknown low-dimensional structures in score-based diffusion models.
\newblock \emph{Advances in Neural Information Processing Systems}, 37:\penalty0 126297--126331, 2024.

\bibitem[Li et~al.(2023{\natexlab{a}})Li, Wang, Liu, and Chen]{LWLC23}
Li, H., Wang, M., Liu, S., and Chen, P.-Y.
\newblock A theoretical understanding of shallow vision transformers: Learning, generalization, and sample complexity.
\newblock In \emph{The Eleventh International Conference on Learning Representations}, 2023{\natexlab{a}}.
\newblock URL \url{https://openreview.net/forum?id=jClGv3Qjhb}.

\bibitem[Li et~al.(2023{\natexlab{b}})Li, Wang, Lu, Wan, Cui, and Chen]{LWLW23}
Li, H., Wang, M., Lu, S., Wan, H., Cui, X., and Chen, P.-Y.
\newblock Transformers as multi-task feature selectors: Generalization analysis of in-context learning.
\newblock In \emph{NeurIPS 2023 Workshop on Mathematics of Modern Machine Learning}, 2023{\natexlab{b}}.
\newblock URL \url{https://openreview.net/forum?id=BMQ4i2RVbE}.

\bibitem[Li et~al.(2024{\natexlab{a}})Li, Wang, Lu, Cui, and Chen]{LWLC24}
Li, H., Wang, M., Lu, S., Cui, X., and Chen, P.-Y.
\newblock How do nonlinear transformers learn and generalize in in-context learning?
\newblock In \emph{Forty-first International Conference on Machine Learning}, 2024{\natexlab{a}}.
\newblock URL \url{https://openreview.net/forum?id=I4HTPws9P6}.

\bibitem[Li et~al.(2024{\natexlab{b}})Li, Wang, Lu, Cui, and Chen]{LWLC24_cot0}
Li, H., Wang, M., Lu, S., Cui, X., and Chen, P.-Y.
\newblock How do nonlinear transformers acquire generalization-guaranteed cot ability?
\newblock In \emph{High-dimensional Learning Dynamics 2024: The Emergence of Structure and Reasoning}, 2024{\natexlab{b}}.

\bibitem[Li et~al.(2024{\natexlab{c}})Li, Wang, Ma, Liu, ZHANG, and Chen]{LWMS24}
Li, H., Wang, M., Ma, T., Liu, S., ZHANG, Z., and Chen, P.-Y.
\newblock What improves the generalization of graph transformers? a theoretical dive into the self-attention and positional encoding.
\newblock In \emph{Forty-first International Conference on Machine Learning}, 2024{\natexlab{c}}.
\newblock URL \url{https://openreview.net/forum?id=mJhXlsZzzE}.

\bibitem[Li et~al.(2024{\natexlab{d}})Li, Wang, Zhang, Liu, and Chen]{LWZL24}
Li, H., Wang, M., Zhang, S., Liu, S., and Chen, P.-Y.
\newblock Learning on transformers is provable low-rank and sparse: A one-layer analysis.
\newblock In \emph{2024 IEEE 13rd Sensor Array and Multichannel Signal Processing Workshop (SAM)}, pp.\  1--5. IEEE, 2024{\natexlab{d}}.

\bibitem[Li et~al.(2025{\natexlab{a}})Li, Lu, Chen, Cui, and Wang]{LWLC24_cot}
Li, H., Lu, S., Chen, P.-Y., Cui, X., and Wang, M.
\newblock Training nonlinear transformers for chain-of-thought inference: A theoretical generalization analysis.
\newblock In \emph{The Thirteenth International Conference on Learning Representations}, 2025{\natexlab{a}}.

\bibitem[Li et~al.(2025{\natexlab{b}})Li, Lu, Cui, Chen, and Wang]{LLCC25}
Li, H., Lu, S., Cui, X., Chen, P.-Y., and Wang, M.
\newblock Can mamba learn in context with outliers? a theoretical generalization analysis.
\newblock \emph{arXiv preprint arXiv:2510.00399}, 2025{\natexlab{b}}.

\bibitem[Li et~al.(2025{\natexlab{c}})Li, Zhang, Zhang, Chen, Liu, and Wang]{LZZC25}
Li, H., Zhang, Y., Zhang, S., Chen, P.-Y., Liu, S., and Wang, M.
\newblock When is task vector provably effective for model editing? a generalization analysis of nonlinear transformers.
\newblock In \emph{The Thirteenth International Conference on Learning Representations}, 2025{\natexlab{c}}.

\bibitem[Li et~al.(2023{\natexlab{c}})Li, Li, Zhang, and Bian]{LLZB23}
Li, P., Li, Z., Zhang, H., and Bian, J.
\newblock On the generalization properties of diffusion models.
\newblock \emph{Advances in Neural Information Processing Systems}, 36:\penalty0 2097--2127, 2023{\natexlab{c}}.

\bibitem[Li et~al.(2024{\natexlab{e}})Li, Biferale, Bonaccorso, Scarpolini, and Buzzicotti]{LBBS24}
Li, T., Biferale, L., Bonaccorso, F., Scarpolini, M.~A., and Buzzicotti, M.
\newblock Synthetic lagrangian turbulence by generative diffusion models.
\newblock \emph{Nature Machine Intelligence}, 6\penalty0 (4):\penalty0 393--403, 2024{\natexlab{e}}.

\bibitem[Li et~al.(2024{\natexlab{f}})Li, Dai, and Qu]{LDQ24}
Li, X., Dai, Y., and Qu, Q.
\newblock Understanding generalizability of diffusion models requires rethinking the hidden gaussian structure.
\newblock \emph{Advances in neural information processing systems}, 37:\penalty0 57499--57538, 2024{\natexlab{f}}.

\bibitem[Li et~al.(2025{\natexlab{d}})Li, Zhang, Li, Chen, Zhu, Wang, and Qu]{LZLC25}
Li, X., Zhang, Z., Li, X., Chen, S., Zhu, Z., Wang, P., and Qu, Q.
\newblock Understanding representation dynamics of diffusion models via low-dimensional modeling.
\newblock \emph{arXiv preprint arXiv:2502.05743}, 2025{\natexlab{d}}.

\bibitem[Liang et~al.(2025)Liang, Huang, and Chen]{LHC25}
Liang, J., Huang, Z., and Chen, Y.
\newblock Low-dimensional adaptation of diffusion models: Convergence in total variation.
\newblock \emph{arXiv preprint arXiv:2501.12982}, 2025.

\bibitem[Luo(2022)]{L22}
Luo, C.
\newblock Understanding diffusion models: A unified perspective.
\newblock \emph{arXiv preprint arXiv:2208.11970}, 2022.

\bibitem[Luo et~al.(2024)Luo, Li, Shi, and Wu]{LLSW24}
Luo, Y., Li, H., Shi, L., and Wu, X.-M.
\newblock Enhancing graph transformers with hierarchical distance structural encoding.
\newblock In \emph{The Thirty-eighth Annual Conference on Neural Information Processing Systems}, 2024.
\newblock URL \url{https://openreview.net/forum?id=U4KldRgoph}.

\bibitem[Min \& Vidal(2025)Min and Vidal]{MV25}
Min, H. and Vidal, R.
\newblock Gradient flow provably learns robust classifiers for orthonormal gmms.
\newblock In \emph{Forty-second International Conference on Machine Learning}, 2025.

\bibitem[Mohri et~al.(2018)Mohri, Rostamizadeh, and Talwalkar]{MRT18}
Mohri, M., Rostamizadeh, A., and Talwalkar, A.
\newblock \emph{Foundations of machine learning}.
\newblock MIT press, 2018.

\bibitem[Peebles \& Xie(2023)Peebles and Xie]{PX23}
Peebles, W. and Xie, S.
\newblock Scalable diffusion models with transformers.
\newblock In \emph{Proceedings of the IEEE/CVF international conference on computer vision}, pp.\  4195--4205, 2023.

\bibitem[Pham et~al.(2025)Pham, Raya, Negri, Zaki, Ambrogioni, and Krotov]{PRNZ25}
Pham, B., Raya, G., Negri, M., Zaki, M.~J., Ambrogioni, L., and Krotov, D.
\newblock Memorization to generalization: Emergence of diffusion models from associative memory.
\newblock \emph{arXiv preprint arXiv:2505.21777}, 2025.

\bibitem[Price et~al.(2025)Price, Sanchez-Gonzalez, Alet, Andersson, El-Kadi, Masters, Ewalds, Stott, Mohamed, Battaglia, et~al.]{PSAA25}
Price, I., Sanchez-Gonzalez, A., Alet, F., Andersson, T.~R., El-Kadi, A., Masters, D., Ewalds, T., Stott, J., Mohamed, S., Battaglia, P., et~al.
\newblock Probabilistic weather forecasting with machine learning.
\newblock \emph{Nature}, 637\penalty0 (8044):\penalty0 84--90, 2025.

\bibitem[Rahimi \& Recht(2007)Rahimi and Recht]{RR07}
Rahimi, A. and Recht, B.
\newblock Random features for large-scale kernel machines.
\newblock In \emph{Advances in Neural Information Processing Systems}, volume~20, 2007.

\bibitem[Rombach et~al.(2022)Rombach, Blattmann, Lorenz, Esser, and Ommer]{RBLE22}
Rombach, R., Blattmann, A., Lorenz, D., Esser, P., and Ommer, B.
\newblock High-resolution image synthesis with latent diffusion models.
\newblock In \emph{Proceedings of the IEEE/CVF conference on computer vision and pattern recognition}, pp.\  10684--10695, 2022.

\bibitem[Ruan et~al.(2023)Ruan, Ma, Yang, He, Liu, Fu, Yuan, Jin, and Guo]{RMYH23}
Ruan, L., Ma, Y., Yang, H., He, H., Liu, B., Fu, J., Yuan, N.~J., Jin, Q., and Guo, B.
\newblock Mm-diffusion: Learning multi-modal diffusion models for joint audio and video generation.
\newblock In \emph{Proceedings of the IEEE/CVF Conference on Computer Vision and Pattern Recognition}, pp.\  10219--10228, 2023.

\bibitem[Sahoo et~al.(2024)Sahoo, Arriola, Schiff, Gokaslan, Marroquin, Chiu, Rush, and Kuleshov]{SASG24}
Sahoo, S., Arriola, M., Schiff, Y., Gokaslan, A., Marroquin, E., Chiu, J., Rush, A., and Kuleshov, V.
\newblock Simple and effective masked diffusion language models.
\newblock \emph{Advances in Neural Information Processing Systems}, 37:\penalty0 130136--130184, 2024.

\bibitem[Sclocchi et~al.(2025)Sclocchi, Favero, and Wyart]{SFWM25}
Sclocchi, A., Favero, A., and Wyart, M.
\newblock A phase transition in diffusion models reveals the hierarchical nature of data.
\newblock \emph{Proceedings of the National Academy of Sciences}, 122\penalty0 (1):\penalty0 e2408799121, 2025.

\bibitem[Shandirasegaran et~al.(2026)Shandirasegaran, Li, Zhang, Wang, and Zhang]{SLZW26}
Shandirasegaran, M., Li, H., Zhang, S., Wang, M., and Zhang, S.
\newblock A theoretical analysis of mamba’s training dynamics: Filtering relevant features for generalization in state space models.
\newblock In \emph{The Fourteenth International Conference on Learning Representations}, 2026.

\bibitem[Shen et~al.(2025)Shen, Zhou, Yang, and Shen]{SZYS25}
Shen, W., Zhou, R., Yang, J., and Shen, C.
\newblock On the training convergence of transformers for in-context classification of gaussian mixtures.
\newblock In \emph{Forty-second International Conference on Machine Learning}, 2025.

\bibitem[Sohl-Dickstein et~al.(2015)Sohl-Dickstein, Weiss, Maheswaranathan, and Ganguli]{SWMG15}
Sohl-Dickstein, J., Weiss, E., Maheswaranathan, N., and Ganguli, S.
\newblock Deep unsupervised learning using nonequilibrium thermodynamics.
\newblock In \emph{International conference on machine learning}, pp.\  2256--2265. pmlr, 2015.

\bibitem[Song \& Ermon(2019)Song and Ermon]{SE19}
Song, Y. and Ermon, S.
\newblock Generative modeling by estimating gradients of the data distribution.
\newblock \emph{Advances in neural information processing systems}, 32, 2019.

\bibitem[Song et~al.(2021)Song, Sohl-Dickstein, Kingma, Kumar, Ermon, and Poole]{SSKK21}
Song, Y., Sohl-Dickstein, J., Kingma, D.~P., Kumar, A., Ermon, S., and Poole, B.
\newblock Score-based generative modeling through stochastic differential equations.
\newblock In \emph{International Conference on Learning Representations}, 2021.

\bibitem[Sun et~al.(2025)Sun, Zhang, Li, and Wang]{SZLW25}
Sun, J., Zhang, S., Li, H., and Wang, M.
\newblock Contrastive learning with data misalignment: Feature purity, training dynamics and theoretical generalization guarantees.
\newblock In \emph{The Thirty-ninth Annual Conference on Neural Information Processing Systems}, 2025.

\bibitem[Tarzanagh et~al.(2023{\natexlab{a}})Tarzanagh, Li, Thrampoulidis, and Oymak]{TLTO23}
Tarzanagh, D.~A., Li, Y., Thrampoulidis, C., and Oymak, S.
\newblock Transformers as support vector machines.
\newblock \emph{arXiv preprint arXiv:2308.16898}, 2023{\natexlab{a}}.

\bibitem[Tarzanagh et~al.(2023{\natexlab{b}})Tarzanagh, Li, Zhang, and Oymak]{TLZO23}
Tarzanagh, D.~A., Li, Y., Zhang, X., and Oymak, S.
\newblock Max-margin token selection in attention mechanism.
\newblock \emph{CoRR}, 2023{\natexlab{b}}.

\bibitem[Vershynin(2010)]{V10}
Vershynin, R.
\newblock Introduction to the non-asymptotic analysis of random matrices.
\newblock \emph{arXiv preprint arXiv:1011.3027}, 2010.

\bibitem[Wang \& Pehlevan(2025)Wang and Pehlevan]{WP25}
Wang, B. and Pehlevan, C.
\newblock An analytical theory of spectral bias in the learning dynamics of diffusion models.
\newblock In \emph{The Thirty-ninth Annual Conference on Neural Information Processing Systems}, 2025.

\bibitem[Wang et~al.(2024{\natexlab{a}})Wang, Zhang, Zhang, Chen, Ma, and Qu]{WZZC24}
Wang, P., Zhang, H., Zhang, Z., Chen, S., Ma, Y., and Qu, Q.
\newblock Diffusion models learn low-dimensional distributions via subspace clustering.
\newblock \emph{arXiv preprint arXiv:2409.02426}, 2024{\natexlab{a}}.

\bibitem[Wang et~al.(2024{\natexlab{b}})Wang, He, and Tao]{WHT24}
Wang, Y., He, Y., and Tao, M.
\newblock Evaluating the design space of diffusion-based generative models.
\newblock \emph{Advances in Neural Information Processing Systems}, 37:\penalty0 19307--19352, 2024{\natexlab{b}}.

\bibitem[Xing et~al.(2024)Xing, Feng, Chen, Dai, Hu, Xu, Wu, and Jiang]{XFCD24}
Xing, Z., Feng, Q., Chen, H., Dai, Q., Hu, H., Xu, H., Wu, Z., and Jiang, Y.-G.
\newblock A survey on video diffusion models.
\newblock \emph{ACM Computing Surveys}, 57\penalty0 (2):\penalty0 1--42, 2024.

\bibitem[Zhang et~al.(2025{\natexlab{a}})Zhang, Li, Shi, Rong, Zhao, Wang, Guo, and Wang]{ZLSR25}
Zhang, B., Li, H., Shi, C., Rong, G., Zhao, H., Wang, D., Guo, D., and Wang, M.
\newblock Merging smarter, generalizing better: Enhancing model merging on ood data.
\newblock \emph{arXiv preprint arXiv:2506.09093}, 2025{\natexlab{a}}.

\bibitem[Zhang et~al.(2023{\natexlab{a}})Zhang, Zhang, Zheng, Zhang, Qamar, Bae, and Kweon]{ZZZZ23}
Zhang, C., Zhang, C., Zheng, S., Zhang, M., Qamar, M., Bae, S.-H., and Kweon, I.~S.
\newblock A survey on audio diffusion models: Text to speech synthesis and enhancement in generative ai.
\newblock \emph{arXiv preprint arXiv:2303.13336}, 2023{\natexlab{a}}.

\bibitem[Zhang et~al.(2023{\natexlab{b}})Zhang, Frei, and Bartlett]{ZFB23}
Zhang, R., Frei, S., and Bartlett, P.~L.
\newblock Trained transformers learn linear models in-context.
\newblock \emph{arXiv preprint arXiv:2306.09927}, 2023{\natexlab{b}}.

\bibitem[Zhang et~al.(2025{\natexlab{b}})Zhang, Li, Yao, Chen, Zhang, Chen, Wang, and Liu]{ZLYC25}
Zhang, Y., Li, H., Yao, Y., Chen, A., Zhang, S., Chen, P.-Y., Wang, M., and Liu, S.
\newblock Visual prompting reimagined: The power of activation prompts.
\newblock In \emph{The Second Conference on Parsimony and Learning (Recent Spotlight Track)}, 2025{\natexlab{b}}.

\end{thebibliography}
